\documentclass{article}
\usepackage{iclr2027_conference,times}

\usepackage[T1]{fontenc}
\usepackage{microtype}
\usepackage{amsmath,amssymb,amsthm,mathtools}
\usepackage{newtxmath}

\usepackage{graphicx}
\usepackage{array,booktabs,tabularx}
\usepackage{algorithm,algorithmic}
\usepackage{float,placeins}
\usepackage{enumitem}
\graphicspath{{figures/}}

\usepackage{xcolor}
\usepackage{url}
\usepackage{hyperref}
\usepackage[capitalize,noabbrev]{cleveref}
\definecolor{CitationTeal}{HTML}{43A1AE}
\definecolor{LinkOrange}{HTML}{FDA614}
\hypersetup{
  colorlinks=true,
  citecolor=CitationTeal,
  linkcolor=LinkOrange,
  urlcolor=LinkOrange,
  filecolor=LinkOrange
}

\newtheorem{theorem}{Theorem}
\newtheorem{proposition}{Proposition}
\newtheorem{lemma}{Lemma}
\newtheorem{corollary}{Corollary}
\theoremstyle{definition}

\newcommand{\FedASTRA}{\textsc{FedASTRA}}

\newcommand{\appendixsectionentry}[1]{%
  \par\noindent
  \hyperref[#1]{\textbf{\makebox[2.15em][l]{\ref*{#1}}\nameref*{#1}}}%
  \nobreak\dotfill\nobreak
  \hyperref[#1]{\pageref*{#1}}\par
}
\newcommand{\appendixsubsectionentry}[1]{%
  \par\noindent\hspace*{1.55em}%
  \hyperref[#1]{\makebox[3.2em][l]{\ref*{#1}}\nameref*{#1}}%
  \nobreak\dotfill\nobreak
  \hyperref[#1]{\pageref*{#1}}\par
}

\title{More Is Better? Exploring Latent Correspondence in Federated Hypergraph Learning}

\author{
  \textbf{Bo Yan}$^1$, \textbf{Siming Zhang}$^2$, \textbf{Siqi Li}$^1$, \textbf{Hang Yu}$^2$\\
  $^1$Tsinghua University, $^2$Shanghai University
}

\iclrfinalcopy 
\begin{document}
\maketitle

\begin{abstract}
Federated hypergraph learning with latent cross-client hyperedge correspondence poses fundamental identifiability and computational challenges, as the same local release can correspond to multiple global hypergraph structures. 
We characterize the tractability of recovering these latent correspondences, showing tractable pairwise assignment but generally intractable multi-client hyperedge matching.
Building on this characterization, we derive a cross-entropy decomposition that reveals when local prediction can outperform predictors fitted with additional cross-client information. 
Motivated by this principle, we introduce \textbf{Fed}erated \textbf{A}daptive \textbf{S}pectral \textbf{T}raining with \textbf{R}epresentation \textbf{A}lignment (\textbf{FedASTRA}), which learns correspondence-invariant diffusion representations through adaptive heat diffusion and masked latent prediction while communicating model parameters only. We establish its asymptotic optimality with respect to the permitted local information and prove global population-risk convergence under explicit conditions on recovery, representation, and update alignment. Experiments on eight real-world datasets demonstrate the effectiveness of FedASTRA against identifier-free federated graph and hypergraph methods. 
On ogbn-papers100M, FedASTRA reaches the performance-communication Pareto frontier, achieving 50.07 Accuracy and 29.73 Macro-F1 while communicating only 59.60 MiB per round.
\end{abstract}
\section{Introduction}
\label{sec:introduction}

Federated learning enables collaborative model training without sharing data records \citep{mcmahan2017communication}. For applications involving group-wise interactions, hypergraphs provide a natural representation of memberships in care teams, coauthor sets, and biochemical interactions \citep{feng2019hypergraph,chien2022allset,wang2023edhnn}.
However, when hypergraph data are anonymized across clients, local memberships may remain observable while their cross-client correspondence is lost, making the underlying global hypergraph potentially non-identifiable from local observations alone.
Consider two hospitals that share care episodes, as illustrated in Figure \ref{fig:concept-overview}. After episode identifiers are removed, Hospital A retains anonymous records of backache and BP/ECG, while Hospital B retains records of surgery and headache/drug. Same-order matching yields compatible episodes of sizes two and four, whereas cross-order matching yields two episodes of size three. Thus, even this minimal two-client example admits distinct global compatible structures consistent with the same local observations, raising a natural question: \textit{which global hypergraph properties remain identifiable under latent hyperedge correspondence?}

We answer this question by establishing sharp identification bounds and computational boundaries under latent hyperedge correspondence. 
For two clients, the compatible global structures reduce to bipartite assignments, yielding sharp identification bounds and a double-centering criterion for point identification.
Beyond two clients, correspondence becomes a multiway hypergraph-matching problem. 
It is NP-complete for every fixed client count above two, yet fixed-parameter tractable jointly in client count and matching size. 
When the client count is unbounded, it becomes W[1]-complete when parameterized only by the matching size.

\begin{figure}[t!]
  \centering
  \includegraphics[width=0.96\textwidth]{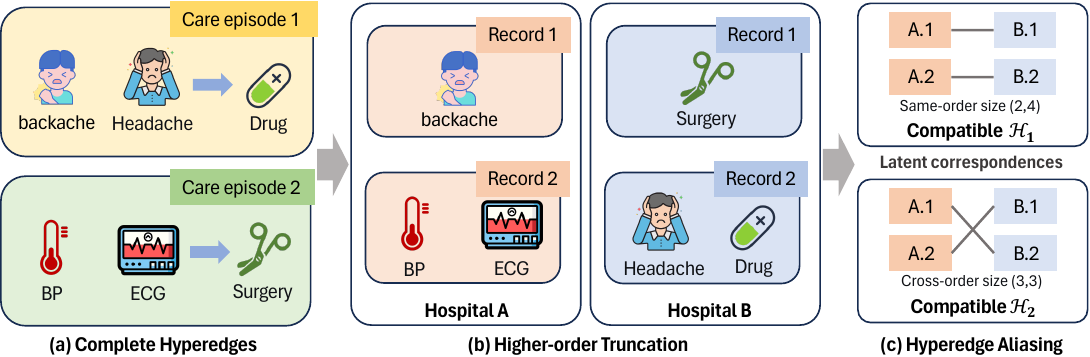}
  \caption{\textbf{From complete hyperedges to latent hyperedge correspondences.} (a) A complete care episode is represented as a hyperedge connecting its clinical events, e.g., blood pressure (BP), electrocardiogram (ECG), and surgery. (b) A episode may span multiple hospitals, e.g., BP measurement and ECG examination may be performed at hospital A, while the subsequent surgery is carried out at hospital B. Each hospital observes only a local fragment of the original hyperedge, leading to \textbf{higher-order truncation}. (c) Once cross-hospital record correspondences are latent, the same local releases can admit multiple compatible global matchings, resulting in \textbf{hyperedge aliasing}.}
  \label{fig:concept-overview}
\end{figure}

This characterization raises another question: \textit{when does additional cross-client information improve prediction, and when can exploiting it be detrimental?}
Additional information may provide useful label signal, but uncertain correspondence can also increase prediction variability and fitting error. We decompose cross-entropy risk into information gain and fitting error, yielding a sufficient condition under which a local predictor outperforms a fitted predictor with additional information under the same evaluation law.

Motivated by this tradeoff, together with the communication and privacy costs of richer cross-client exchanges \citep{shi2025scalemia}, we introduce FedASTRA, which learns correspondence-invariant local diffusion representations through adaptive heat filtering and masked latent prediction \citep{lecun2022path,assran2023ijepa}, with a client-local exponential moving-average (EMA) target and parameter-only FedAvg \citep{mcmahan2017communication}.
We show that effectively exploiting the task-relevant local information can be sufficient without resolving latent cross-client correspondences or completing the global hypergraph, yielding asymptotic optimality relative to the permitted local information.
Under explicit representation and update-alignment conditions, we further establish global population-risk convergence along the implemented federated updates.

Our contributions are fourfold. \textbf{First,} we establish sharp identification bounds and a computational hierarchy for latent hyperedge correspondence: polynomial-time two-client assignment, NP-complete yet fixed-parameter tractable matching for any fixed client count above two, and W[1]-complete matching when the client count is unbounded. \textbf{Second,} we characterize when additional cross-client information helps or hurts prediction through a cross-entropy decomposition that connects information gain, fitting error, prediction variation, and assignment geometry. \textbf{Third,} we develop FedASTRA and establish asymptotic optimality relative to the permitted local information, together with global population-risk convergence under explicit representation and update-alignment conditions. \textbf{Fourth,} experiments on eight real-world datasets demonstrate consistent improvements over identifier-free federated graph and hypergraph methods; on ogbn-papers100M, FedASTRA reaches the performance-communication Pareto frontier.

\section{Related Work}
\label{sec:related}

\paragraph{Federated graph learning.}
Existing federated graph learning methods differ in whether cross-client graph correspondence is available. 
Identifier-free methods leave cross-client node identities latent \citep{zhang2021fedsage, DBLP:conf/icml/BaekJJYH23, DBLP:journals/pvldb/LiWZZLW23, fu2025fedath}, as in FedSage \citep{zhang2021fedsage} operating on local subgraphs, while FedSage+ \citep{zhang2021fedsage} generates cross-client neighbor features. 
Methods with explicit correspondence, such as FedGCN \citep{yao2023fedgcn} and FedCog \citep{lei2023fedcog}, exploit shared cross-client node or edge identities to incorporate additional graph information. For hypergraph, FedHGL \citep{luo2025fedhgl} instead assumes shared hyperedge identifiers and utilzes local differential privacy (LDP) \citep{DBLP:journals/siamcomp/KasiviswanathanLNRS11} to support structure completion. 
Despite progress, these methods trade cross-client communication and privacy exposure for predictive performance. 
In contrast, we study an identifier-free setting in which cross-client hyperedge correspondence is latent and only local incidence is available.

\paragraph{Hypergraph representation learning.}
Most hypergraph neural networks assume observed node-hyperedge memberships and focus on how to encode higher-order structure, including spectral propagation on hypergraph operators \citep{feng2019hypergraph,wang2023edhnn}, graph-reduction approaches that approximate hyperedges with pairwise structures \citep{yadati2019hypergcn}, and direct higher-order aggregation through message passing or permutation-invariant set functions \citep{dong2020hnhn,chien2022allset,huang2021unignn}.
Beyond supervised learning, self-supervised methods learn transferable hypergraph representations through multi-level contrastive learning \citep{DBLP:conf/aaai/LeeS23} and generative pretext tasks \citep{DBLP:conf/iclr/KimKBLYS24}.
More recently, large-scale hypergraph pretraining and foundation models have emerged for cross-task and cross-domain transfer \cite{DBLP:journals/pami/GaoFLHDWH26}. 
FedASTRA targets a more realistic federated scenario by learning effective representations from anonymous client-local incidence when cross-client hyperedge correspondence remains latent.

\section{Learning Difficulty under Latent Hyperedge Correspondence}
\label{sec:method}

\subsection{Observation model}
\label{sec:preliminaries}

Let $\mathcal H=(V,\mathcal E,X,r,w)$ be a finite node-attributed, typed, weighted multihypergraph. 
Here $V$ is the node set, $\mathcal E$ is the multiset of hyperedge instances, $X$ is the node-feature matrix, and $r$ and $w$ collect hyperedge types and weights. 
hyperedge instance $e\in\mathcal E$ has a support node set $A_e\subseteq V$, a type $r_e$ in the finite type set $\mathcal R$, and a positive weight $w_e$. 
Multiple instances may share the same support and type, and a type label carries no instance identity. 
For an integer $1\le q\le |V|$, let $\binom Vq$ be the collection of $q$-node subsets. Aggregating all hyperedge instances with the same support $A$ and type $r$, we define
\begin{align}
  w_r(A)&=\sum_{e:A_e=A,\,r_e=r}w_e,
  \label{eq:aggregate_weight}\\
  M_{q,r}^{\mathcal H}(S)&=\sum_{A\supseteq S}w_r(A),
  \qquad r\in\mathcal R,\quad S\in\binom Vq.
  \label{eq:q_measure}
\end{align}
Thus $M_{q,r}^{\mathcal H}(S)$ is the weighted inclusion count of the $q$-node subset $S$ among hyperedges of type $r$.
The full collection $\{M_{q,r}^{\mathcal H}(S)\}$ is the invertible Boolean-zeta transform of the aggregate weights \{$w_r(A)$\}. 
In particular, the $q=2$ and $q=3$ coordinates capture observable pairwise and triple co-membership patterns before client partitioning. The notation collection is provided in Appendix~\ref{app:notation}.

A fixed node-to-client assignment $z:V\to[K]$, with $k\in[K]:=\{1,\ldots,K\}$, partitions $V$ into local node set $V_k=\{v\in V:z(v)=k\}$ for each client k and induces local hyperedge fragments $F_{e,k}=A_e\cap V_k$. 
At a visibility threshold $c\ge 1$, client $k$ retains each fragment $F_{e,k}$ with $|F_{e,k}|\ge c$, together with its type $r_e$, weight $w_e$, and a status indicator $\upsilon_{e,k}\in\{0,1\}$ specifying whether hyperedge $e$ is fully contained in client $k$. Crucially, cross-client hyperedge identifiers are withheld.
Let $\Gamma$ collect the public assumptions governing the observation process, including the visibility rule and any exact constraints on the number of retained fragments. 
Write $[T_k]_{\rm row}$ for the equivalence class of client $k$'s retained fragment table under arbitrary row permutations. 
The analyst release is then defined by the deterministic observation map:
\begin{equation}
Y_{\rm A}
=
\mathcal O_{z,\Gamma}(\mathcal H)
=
\left(
\left(
(V_k,X_{V_k}),
\left[(F_{e,k},r_e,w_e,\upsilon_{e,k})
:\ |F_{e,k}|\ge c\right]_{\rm row}
\right)_{k=1}^K,
\Gamma
\right).
\label{eq:observation}
\end{equation}
Uppercase $Y_{\rm A}$ denotes the analyst release as a random element and lowercase $y_{\rm A}$ a realized release. 
Within the latent hypergraph class $\mathcal M_\Gamma$ permitted by the public assumptions $\Gamma$, define the analyst compatibility class
\begin{equation}
  \mathcal F_{\rm A}(y_{\rm A},\Gamma)=
  \{\mathcal H'\in\mathcal M_\Gamma:
  \mathcal O_{z,\Gamma}(\mathcal H')=y_{\rm A}\}.
  \label{eq:compatible_set}
\end{equation}
In contrast to the richer analyst release, the learner, representing the actual training algorithm, receives a coarsened release.
Accordingly, let $B_k$ be the binary node-fragment incidence matrix induced by $T_
k$, with one column for each row of $T_k$. 
Define $[B_k]_{\rm col}=\{B_k\Pi:\Pi\text{ is a column permutation}\}$ and thus arbitrary row permutations of $T_k$ correspond to column permutations $B_k\mapsto B_k\Pi$. 
The learner coarsened release is therefore
\begin{equation}
  Y_{\rm L}=\rho(Y_{\rm A})=
  \bigl((X_k,[B_k]_{\rm col})\bigr)_{k=1}^K,
  \qquad X_k:=X_{V_k}.
  \label{eq:learner_observation}
\end{equation}
It retains only local node features and anonymous incidence structure, omitting hyperedge types, weights, status indicators, and row identities. 
For a fixed training protocol $\mathcal P$ containing the released labels, task split, participation schedule, and hyperparameters, and algorithmic randomness $\xi$, any learner operating under this observation model is a measurable function of $(Y_{\rm L},\mathcal P,\xi)$.
In native comparisons, the same realized release is provided to all learners.

For each $q$-node subset $S$, the client partition determines whether its co-membership information is retained. For $q\ge c$, we define
\begin{equation}
\widetilde M_{q,r}(S,z)
=M_{q,r}^{\mathcal H}(S)\mathbf 1\{z\text{ is constant on }S\},
\label{eq:mask_law}\\
\end{equation}
so the contribution of $S$ is retained exactly when all nodes in $S$ are assigned to the same client. We quantify the erased fraction by
\begin{equation}
    \tau_q(\mathcal H,z)
  =1-\frac{\sum_{r\in\mathcal R}\|\widetilde M_{q,r}(\cdot,z)\|_1}
  {\sum_{r\in\mathcal R}\|M_{q,r}^{\mathcal H}\|_1},
  \label{eq:tau}
\end{equation}
with $\tau_q(\mathcal H,z)=0$ for a zero denominator. We call the erased mass
\emph{higher-order truncation}, as illustrated in Figure~\ref{fig:concept-overview}. 
Independently permuted client fragment tables may also admit multiple compatible cross-client matchings. We call this ambiguity \emph{hyperedge aliasing}. 
These two effects can alter global higher-order statistics. We therefore consider a class of linear co-membership targets $\Phi_{q,\psi}$. 
For any finite type-indexed test function $\psi_{q,r}$, define
\begin{equation}
  \Phi_{q,\psi}(\mathcal H)=\sum_{e\in\mathcal E}w_e
  \sum_{S\in\binom{A_e}{q}}\psi_{q,r_e}(S,X_S).
  \label{eq:linear_q_target}
\end{equation}
Here, $\psi_{q,r}(S,X_S)$ specifies the contribution of each $q$-node subset $S$, allowing $\Phi_{q,\psi}$ to cover both total order-$q$ co-membership mass and more general feature-dependent higher-order statistics.

Equation~\eqref{eq:tau} quantifies the co-membership mass loss under a fixed partition. Appendices~\ref{app:framework}, \ref{app:proof-zeta}--\ref{app:proof-ht} give the Boolean-zeta inversion and show how the co-membership quantities can be estimated under a randomized partition using Horvitz-Thompson estimation \citep{horvitz1952generalization}.

\subsection{Exact two-client compatibility sets}
\label{sec:identifiability}

Two clients form the smallest nontrivial setting with an exact correspondence analysis. 
We use the richer analyst release with observed status indicators $\upsilon_{e,k}$, and exact type $r$ and weight $\omega$, together with $c=1$ and equal numbers of retained fragment rows across the two clients. Thus, each cross-client hyperedge leaves one nonempty row per client.
Let $m_h$ denote the common row count in stratum $h=(r,\omega)$.
Within stratum $h$, fragments $F_{1i}^h\subseteq V_1$ and $F_{2j}^h\subseteq V_2$, indexed by $i,j\in[m_h]$, are matched by an unobserved $\varpi_h\in\operatorname{Sym}(m_h)$, the symmetric group on $[m_h]$.
Let $\Phi_{\rm obs}$ collect the client-local and within-fragment contributions that are invariant across all compatible matchings. For a candidate cross-client pair $(F^h_{1i}, F^h_{2j})$, the mixed matching-dependent contribution is
\begin{equation}
  C^h_{ij}=\omega\!\!\sum_{\substack{S\in\binom{F_{1i}^h\cup F_{2j}^h}{q}:\\
      S\cap F_{1i}^h\ne\varnothing,\ S\cap F_{2j}^h\ne\varnothing}}
  \psi_{q,r}(S,X_S).
  \label{eq:completion_cost}
\end{equation}
The cost matrix $C^h$ exposes exactly which part of the target $\Phi_{q,\psi}$ depends on the missing correspondence. 
Define the identified set of $\Phi_{q,\psi}$ under the analyst release as its attainable values over all compatible latent hypergraphs:
$\mathcal I_{q,\psi}(y_{\rm A})=
  \{\Phi_{q,\psi}(\mathcal H'):
  \mathcal H'\in\mathcal F_{\rm A}(y_{\rm A},\Gamma)\}$.

\begin{theorem}[Compatible target values under an unobserved permutation]
  \label{thm:completion_polytope}
  Assume $K=2$, $c=1$, observed status indicators, exactly one nonempty
  cross fragment at each client for every cross-client hyperedge, exact
  type-and-weight strata, and equal row counts within every stratum.  Then the compatible values and their sharp endpoints are
  \begin{align}
    \mathcal I_{q,\psi}(y_{\rm A})
    &=\left\{\Phi_{\rm obs}+\sum_h\sum_{i=1}^{m_h}
    C^h_{i,\varpi_h(i)}:
    \varpi_h\in\operatorname{Sym}(m_h)\right\},
    \label{eq:identified_functional_set}\\
    \Phi^-_{q,\psi}
    &=\Phi_{\rm obs}+\sum_h\min_{\Lambda\in\mathbb B_{m_h}}
    \langle C^h,\Lambda\rangle_F,
    \quad
    \Phi^+_{q,\psi}
    =\Phi_{\rm obs}+\sum_h\max_{\Lambda\in\mathbb B_{m_h}}
    \langle C^h,\Lambda\rangle_F,
    \label{eq:birkhoff_endpoints}
  \end{align}
  where $\mathbb B_m=\{\Lambda\in\mathbb R_+^{m\times m}: \Lambda\mathbf1=\mathbf1,\Lambda^\top\mathbf1=\mathbf1\}$ is the Birkhoff polytope and $\langle A,B\rangle_F=\operatorname{tr}(A^\top B)$ is the Frobenius inner product. Each condition below characterizes point identification within a stratum, independently of the probability distribution over matchings compatible with the observed release:
  \begin{align}
    J_{m_h}C^hJ_{m_h}&=0,
    \label{eq:centered_cost}\\[-1mm]
    C^h_{ij}&=\alpha_i^h+\beta_j^h,
    \label{eq:additive_cost}\\[-1mm]
    C^h_{ij}+C^h_{i'j'}-C^h_{ij'}-C^h_{i'j}&=0
    \quad\text{for all }i,i',j,j'\in[m_h],
    \label{eq:cycle_identification}
  \end{align}
  with centering matrix $J_m=I_m-\mathbf1\mathbf1^\top/m$ and vectors $\alpha^h,\beta^h\in\mathbb R^{m_h}$ in Equation~\eqref{eq:additive_cost}.
\end{theorem}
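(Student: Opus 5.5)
The plan is to first describe the compatibility class $\mathcal F_{\rm A}(y_{\rm A},\Gamma)$ explicitly, and then read off each claim from that description.

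\textbf{Step 1: compatible hypergraphs are stratum-wise matchings.} Under the stated assumptions, every compatible $\mathcal H'$ is assembled from the released rows. Fully contained hyperedges ($\upsilon=1$) are reproduced verbatim. Each cross-client hyperedge contributes exactly one nonempty row at each client, and that row carries its exact type $r$ and weight $\omega$. Gluing a client-1 row to a client-2 row therefore requires a common stratum $h=(r,\omega)$. The equal row counts, together with the exact-count constraints in $\Gamma$, force this gluing to be a bijection $\varpi_h\in\operatorname{Sym}(m_h)$. Conversely, any family $(\varpi_h)_h$ produces a hypergraph $\mathcal H'\in\mathcal M_\Gamma$ with $\mathcal O_{z,\Gamma}(\mathcal H')=y_{\rm A}$: row permutations are quotiented out, so the release cannot tell the choices apart. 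The point to check carefully is that $\mathcal M_\Gamma$ imposes no additional constraint that would exclude some permutations. I will state this as part of how $\Gamma$ is read, and I expect this modelling step, rather than the algebra below, to be the main obstacle.

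\textbf{Step 2: decompose the target.} Expand $\Phi_{q,\psi}(\mathcal H')$ by splitting each $q$-subset $S$ of a cross hyperedge $F_{1i}^h\cup F_{2\varpi_h(i)}^h$ into two cases. If $S$ lies inside one fragment, its contribution does not depend on $\varpi$ and is absorbed into $\Phi_{\rm obs}$. If $S$ meets both fragments, it contributes to the mixed term, and summing these gives exactly $C^h_{i\varpi_h(i)}$ from \eqref{eq:completion_cost}. This yields \eqref{eq:identified_functional_set}.

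\textbf{Step 3: endpoints.} The set in \eqref{eq:identified_functional_set} is finite, so its minimum and maximum are attained. Because the sum separates across strata, the extremes are $\Phi_{\rm obs}+\sum_h \min_\varpi\sum_i C^h_{i\varpi(i)}$ and the corresponding maximum. By Birkhoff--von Neumann, the vertices of $\mathbb B_m$ are exactly the permutation matrices. A linear functional over a polytope attains its extremes at vertices, so the optimum over $\mathbb B_m$ equals the optimum over permutations. This gives \eqref{eq:birkhoff_endpoints}, and these endpoints are sharp because they are attained by actual matchings.

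\textbf{Step 4: point identification.} Within a stratum, point identification means that $\varpi\mapsto\langle C,P_\varpi\rangle_F$ is constant. This depends only on the support of the compatible set, not on any distribution over matchings. I will establish the chain of equivalences \eqref{eq:additive_cost} $\Leftrightarrow$ \eqref{eq:cycle_identification} $\Leftrightarrow$ \eqref{eq:centered_cost} $\Leftrightarrow$ constancy.
\begin{itemize}
\item \eqref{eq:additive_cost} implies constancy, since $\sum_i C_{i\varpi(i)}=\sum_i\alpha_i+\sum_j\beta_j$ for every $\varpi$.
\item Constancy implies \eqref{eq:cycle_identification}. For $i\neq i'$ and $j\neq j'$, take a permutation with $i\mapsto j$ and $i'\mapsto j'$, and compose it with the transposition swapping the images of $i$ and $i'$. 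Equating the two sums gives the four-term identity. The degenerate cases $i=i'$ or $j=j'$ hold trivially.
\item \eqref{eq:cycle_identification} implies \eqref{eq:additive_cost}. Fix $i'=j'=1$ and set $\alpha_i=C_{i1}$ and $\beta_j=C_{1j}-C_{11}$.
\item \eqref{eq:additive_cost} is equivalent to \eqref{eq:centered_cost}. The map $C\mapsto JCJ$ is the orthogonal projection onto the doubly centered matrices, and its kernel is $\{\alpha\mathbf1^\top+\mathbf1\beta^\top\}$. One direction is a direct check that $J\mathbf1=0$ annihilates such matrices. For the other, $JCJ=0$ expands to $C=\frac1m\mathbf1\mathbf1^\top C+\frac1m C\mathbf1\mathbf1^\top-\frac1{m^2}\mathbf1\mathbf1^\top C\mathbf1\mathbf1^\top$, which is visibly of additive form.
\end{itemize}
For $m_h=1$ all conditions hold trivially.
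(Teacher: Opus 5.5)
Your proposal is correct and follows essentially the same route as the paper. Both proofs split each $q$-set into within-fragment and mixed contributions, use Birkhoff--von Neumann integrality to show the endpoints are attained by actual matchings, and prove the point-identification equivalences through the column-swap argument and the anchor choice $\alpha_i=C_{i1}$, $\beta_j=C_{1j}-C_{11}$. Your explicit expansion of $J_mCJ_m=0$ and your two-sided description of the compatibility class in Step 1 only spell out what the paper asserts directly; in particular, the paper simply takes every $\varpi_h\in\operatorname{Sym}(m_h)$ to be compatible under the exact two-client assumptions.
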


Each compatible assignment selects one entry from every row and column of $C^h$, thereby forming a permutation matrix. 
By Birkhoff-von Neumann integrality \cite{birkhoff1946tres}, both endpoints are attainable.  
Double centering in Equation~\eqref{eq:centered_cost} removes additive row and column effects, leaving only the interaction that can vary with the latent correspondence, in a subspace of dimension at most $(m_h-1)^2$. 
Equation~\eqref{eq:cycle_identification} checks whether any such interaction remains. 
Appendix~\ref{app:proof-completion} gives the proof and dual certificates. Appendix~\ref{app:two-client-boundary} further discusses weaker observation assumptions.

Coarsening the analyst release enlarges the compatible class. 
Hence, if $\phi_-$ and $\phi_+$ are attainable endpoints under the richer release, any randomized scalar estimator using the coarser learner release has worst-case absolute risk at least $(\phi_+-\phi_-)/2$ (Appendix~\ref{app:proof-two-point}). 
This diagnostic uses two clients, while model training uses $K\in\{3,6,9\}$ and correspondence-invariance result holds for any $K$.
Prediction can still be accurate when the local label signal survives unresolved global structure.

\paragraph{Graph matching versus hypergraph matching.}
Total order-$q$ mass admits sharp sorting bounds in $O(\sum_hm_h\log m_h)$ time
(Appendices~\ref{app:proof-qmass}--\ref{app:proof-sorting}). To isolate
relation size, consider $K$ clients with $m$ distinct singleton rows each and explicitly listed
rewarded tuples $\mathcal E_+\subseteq[m]^K$. Each listed tuple has reward one, all others
zero. A completion with at least $b$ rewarded relations exists exactly when these candidate
$K$-node hyperedges contain a matching of size $b$. For $K=2$, this is polynomial bipartite
graph matching. For fixed $K\ge3$, it is complete for nondeterministic polynomial time
(NP-complete) but fixed-parameter tractable (FPT) in $Kb$, meaning time
$f(Kb)\operatorname{poly}(L_{\rm in})$ for input length $L_{\rm in}$. With unbounded $K$, it is
complete for W[1], the first level of the W hierarchy, in $b$
(Appendix~\ref{app:three-client-boundary}). This separates pairwise from multiway correspondence
under the same observation and reward rules, independently of the optimization difficulty
of a neural classifier.

\section{Benefit Bounds of Additional Information}
\label{sec:information-comparison}

Given these fundamental identifiability and computational challenges under hyperedge correspondence, a natural thought is to exploit more information beyond local observations for better prediction. Yet a subtle tradeoff arises here: additional information can reveal label signal, while correspondence uncertainty can induce prediction variation and fitting error. In other words, its benefit is bounded. In this section, we characterize these benefit bounds through cross-entropy decomposition.

\paragraph{Information value and fitting cost.}
For class $y\in[C]=\{1,\ldots,C\}$, $C\ge2$, predictor $F$ uses $\mathcal S_0$:
client attributes, anonymous incidence, and the parameter-only transcript through validation,
excluding held-out labels. Predictor $G$ additionally uses $A_{\rm aux}$ and its derived
training states, giving $\mathcal S_1=\mathcal S_0\vee\sigma(A_{\rm aux})$.
Let $p_i=\Pr(y=\cdot\mid\mathcal S_i)$, $i=0,1$, and $q_F,q_G$ be their predictions.
Kullback--Leibler (KL) divergence $D_{\rm KL}(p\|q)=\sum_c p_c\log(p_c/q_c)$ yields
the information and fitting terms
\begin{equation}
  \mathcal I_{\rm add}=\mathbb E D_{\rm KL}(p_1\|p_0),\qquad
  \mathcal E_F=\mathbb E D_{\rm KL}(p_0\|q_F),\qquad
  \mathcal E_G=\mathbb E D_{\rm KL}(p_1\|q_G),
  \label{eq:main-information-cost}
\end{equation}
Then $\mathcal I_{\rm add}$ can measure new label information and $\mathcal E_F,\mathcal E_G$
measure fitting error to the respective optima. For measurable predictions with finite
log risk $\mathcal R_{\log}(q)=\mathbb E[-\log q_y]$, conditional entropy yields the exact
risk decomposition below, whose derivation is given in Appendix~\ref{app:auxiliary-risk}:
\begin{equation}
  \mathcal R_{\log}(q_G)-\mathcal R_{\log}(q_F)
  =\underbrace{\mathcal E_G-\mathcal E_F}_{\text{difference in fitting error}}
  -\underbrace{\mathcal I_{\rm add}}_{\text{additional label information}}.
  \label{eq:main-local-advantage}
\end{equation}

Thus $F$ has strictly lower risk exactly when $\mathcal E_G-\mathcal E_F>\mathcal I_{\rm add}$,
with the reverse strict inequality favoring $G$. Additional information weakly improves
ideal risk because it can be ignored \citep{blackwell1953experiments}. For fitted models,
however, its gain competes with approximation, estimation, and optimization errors.
When $p_1=p_0$, fitting quality alone determines which predictor has lower risk.

\paragraph{How correspondence creates a prediction cost.}
To quantify this cost, remove the class mean from enriched logits $s_G\in\mathbb R^C$
and set $\bar s_G=\mathbb E[s_G\mid\mathcal S_0]$. For cross-entropy risk
$\mathcal R_{\rm CE}(s)=\mathcal R_{\log}(\operatorname{softmax}(s))$
and finite-risk local logits $s_F$, define
$V_{\rm aux}=\mathbb E\|s_G-\bar s_G\|_2^2$,
$\Delta_{\rm label}=\mathbb E\|p_1-p_0\|_2^2$, and
$\delta_F=\mathcal R_{\rm CE}(s_F)-\mathcal R_{\rm CE}(\bar s_G)$.
Under $\|s_G\|_\infty\le B_{\rm logit}<\infty$ almost surely, Theorem~\ref{thm:auxiliary-ce-risk} gives
\begin{equation}
  \boxed{
  \frac{e^{-2B_{\rm logit}}}{2C}V_{\rm aux}
  >\sqrt{\Delta_{\rm label}V_{\rm aux}}+\delta_F
  \quad\Longrightarrow\quad
  \mathcal R_{\rm CE}(s_G)>\mathcal R_{\rm CE}(s_F).
  }
  \label{eq:main-strict-local-criterion}
\end{equation}
Hence local prediction has lower risk when variation cost exceeds the label-alignment
bound plus the signed fitting gap. The criterion is sufficient, since Cauchy--Schwarz
bounds label alignment. When $\Delta_{\rm label}=0$ and $\delta_F\le0$, any
$V_{\rm aux}>0$ certifies a strict local advantage.
Moreover, Proposition~\ref{prop:assignment-logit-variance}
computes this variation from the double-centered matching costs of Theorem~\ref{thm:completion_polytope}
for affine logits under independent uniform matchings, linking the identification result
to this prediction-risk comparison.

\paragraph{Communication and privacy boundary.}
Additional exchanges incur communication costs and can expose sensitive information
\citep{shi2025scalemia}. Parameter-only exchange avoids these auxiliary disclosures,
although model updates still require protection against information leakage.

\section{FedASTRA: Learning from Local Structure}
\label{sec:fedastra}

Section~\ref{sec:information-comparison} yields a design principle: local prediction can outperform
a fitted richer-information model when its fitting cost exceeds the label-information gain.
Guided by this principle, \FedASTRA{} stays within the observable local release: it learns diffusion
scales from anonymous incidence, predicts masked representations, and communicates parameters only,
eliminating the need for cross-client pairing. Under probabilistic recovery and approximation conditions, FedASTRA is asymptotically optimal relative
to permitted local information. Under representation and task-alignment conditions, its actual updates
achieve global population-risk convergence with an $O(1/T)$ optimization term.

\paragraph{Learning diffusion scales.}
For binary incidence $B_k$ with nonempty columns, let
$D_{v,k}=\operatorname{diag}(B_k\mathbf1)$, $D_{e,k}=\operatorname{diag}(B_k^\top\mathbf1)$,
and $I_{0,k}=\operatorname{diag}(\mathbf1\{(D_{v,k})_{vv}=0\})$.
Using the Moore--Penrose inverse $\dagger$, define the node-to-node transition on these
observed local memberships by
\begin{equation}
  P_k=D_{v,k}^{\dagger}B_kD_{e,k}^{-1}B_k^\top+I_{0,k}.
  \label{eq:local_transition}
\end{equation}
This walk averages over incident relations and their members, leaving isolated nodes
fixed \citep{zhou2006learning}. For latent width $d$, encode
$U_k=E_{\theta_{\rm enc}}(X_k)\in\mathbb R^{|V_k|\times d}$ and use $R\ge0$ positive-time
channels with increments $\eta\in\mathbb R^{R\times d}$. The context parameters
$\vartheta=\operatorname{ctx}(\theta)=(\theta_{\rm enc},\eta)$ belong to the online state
$\theta$. The positive floor $\epsilon_{\rm ord}>0$ orders each coordinate's diffusion times:
\begin{gather}
  \mathsf K_k(t)
  =\exp\{t(P_k-I)\}
  =\sum_{\ell=0}^{\infty}e^{-t}\frac{t^\ell}{\ell!}P_k^\ell,
  \qquad t\ge0,
  \label{eq:heat_semigroup}\\
  t_{bj}
  =\sum_{a=1}^{b}\{\operatorname{softplus}(\eta_{aj})
  +\epsilon_{\rm ord}\},
  \qquad b\in[R],\quad j\in[d],
  \label{eq:ordered_heat_times}\\
  [\mathsf K_k(\boldsymbol t_b)U_k]_{:j}
  :=\mathsf K_k(t_{bj})U_{k,:j},\qquad
  W_k(X_k\mid\vartheta)
  =[\,U_k,\mathsf K_k(\boldsymbol t_1)U_k,\ldots,
  \mathsf K_k(\boldsymbol t_R)U_k\,].
  \label{eq:heat_basis}
\end{gather}
Each channel mixes walk lengths with Poisson weights \citep{kondor2002diffusion},
while $R=0$ retains only $U_k$. Each $t_{bj}$ is the expected walk length for
coordinate $j$. Learning these times adapts the extent of local mixing, while the
undiffused channel retains the original encoded attributes.
Thus $R$ controls scales, independently of observation
order $q$. Write $W_{k,\theta}(X)=W_k(X\mid\operatorname{ctx}(\theta))$.
Column relabeling preserves both features and coupled training updates.
Appendices~\ref{app:proof-heat-stability}--\ref{app:observable-dynamics}
prove stability and give a shared recurrence with Poisson-tail error control
for each learned diffusion time.

\paragraph{Local prediction and federation.}
Two masked views predict a full-view exponential moving average (EMA) target through
$\mathcal L_{\rm pred,k}$, combining cosine alignment, variance--covariance regularization,
and view agreement (Appendix~\ref{app:algorithm}).
For classifier $c_{\theta_k}:\mathbb R^{(R+1)d}\to\mathbb R^C$, context
$\vartheta_k=\operatorname{ctx}(\theta_k)$, labeled set $\mathcal V_k^{\rm tr}$ of size
$n_k^{\rm tr}>0$, and predictive weight $\lambda_{\rm pred}\ge0$, the local training objective is
\begin{equation}
  \mathcal L_k=
  (n_k^{\mathrm{tr}})^{-1}\sum\nolimits_{v\in\mathcal V_k^{\rm tr}}
  \operatorname{CE}(c_{\theta_k}(W_k(X_k\mid\vartheta_k)_v),y_v)
  +\lambda_{\rm pred}\mathcal L_{\mathrm{pred},k}.
  \label{eq:training_objective}
\end{equation}
Here $\operatorname{CE}$ is cross-entropy for $y_v\in[C]$. Algorithm~\ref{alg:fedastra} updates
target $\bar\vartheta_k$ with momentum $\mu\in[0,1)$ and transmits only online parameters,
weighted by each client's number of labeled training nodes.

\begin{algorithm}[tbp]
  \caption{FedASTRA with an ordered client-local diffusion basis}
  \label{alg:fedastra}
  \small
  \begin{algorithmic}[1]
    \REQUIRE Rounds $T$, steps $S_{\rm loc}$, clients $\mathcal C_u\subseteq[K]$, counts $n_k^{\rm tr}$,
    weight $\lambda_{\rm pred}$, momentum $\mu$.
    \STATE Initialize $\theta^0$ and client-local operators $P_k$.
    \FOR{$u=0,\ldots,T-1$}
      \FOR{$k\in\mathcal C_u$ in parallel}
        \STATE Set $\theta_k\leftarrow\theta^u$ and $\bar\vartheta_k\leftarrow\operatorname{ctx}(\theta_k)$.
        \STATE Repeat $S_{\rm loc}$ times: update $\theta_k$ on Equation~\eqref{eq:training_objective}, then set
        $\bar\vartheta_k\leftarrow\mu\bar\vartheta_k+(1-\mu)\operatorname{ctx}(\theta_k)$.
        \STATE Return $\theta_k^{u+1}\leftarrow\theta_k$ and discard the target.
      \ENDFOR
      \STATE $\textstyle
        \theta^{u+1}\leftarrow
        \frac{\sum_{k\in\mathcal C_u}n_k^{\mathrm{tr}}\theta_k^{u+1}}
             {\sum_{k\in\mathcal C_u}n_k^{\mathrm{tr}}}$.
    \ENDFOR
    \STATE Select one validation checkpoint and evaluate test data once.
  \end{algorithmic}
\end{algorithm}

\paragraph{Local information and optimal prediction.}
Under the rank, attribute, and incidence-recovery conditions of
Theorem~\ref{thm:local-heat-sufficiency} in Appendix~\ref{app:local-sufficiency}, the optimal cross-entropy gap between the complete
heat representation and local information is $O(\varepsilon_n\log(1/\varepsilon_n))$,
where $\varepsilon_n\to0$ bounds joint failure on $n$ nodes.
On successful recovery, the complete heat representation determines the permitted
local release. The residual risk bound accounts for decoder failure through its
binary entropy and the finite label alphabet.
Thus any local prediction rule can be reproduced on the event of successful recovery.

\paragraph{From client risks to federated risk.}
Let $w_k^{\rm pop}$ be test-client probabilities and $\mathcal R_{\rm fed}$ their weighted
cross-entropy. For the model $\widehat\theta$ selected after $T$ rounds on
$N=\sum_kn_k^{\rm tr}$ labels, Theorem~\ref{thm:client-population-risk} gives
\begin{equation}
  \mathbb E\!\left[\mathcal R_{\rm fed}(\widehat\theta)
    -\sum_kw_k^{\rm pop}\mathcal R^*_{{\rm loc},k}\right]
  \le\mathbb E\!\left[\sum_kw_k^{\rm pop}(d_k+a_k)+a_{\rm share}\right]
  +2u_N+e_{N,T}.
  \label{eq:main-federated-rate}
\end{equation}
Here $\mathcal R^*_{{\rm loc},k}$ is client $k$'s optimal local-information risk.
The terms $d_k,a_k,a_{\rm share}$ measure information loss, local approximation error,
and shared-parameter cost. Finally, $u_N$ bounds expected uniform training-to-population discrepancy,
and $e_{N,T}$ bounds empirical optimization error, addressed next.

\paragraph{Global convergence along the implemented updates.}
Let $J_N$ be evaluation-mode training cross-entropy on domain $\Theta$, convex in
the final affine head at fixed features. After round $u_0$,
assume each representation admits a head within distance $D_{\rm cls}>0$ of the current
head and loss at most $\inf_\Theta J_N+a_{\rm ctx}$, with $a_{\rm ctx}\ge0$.
If the actual-update decrease bound (Proposition~\ref{prop:actual-update-risk}) is at least
$\gamma>0$ times the squared head-gradient norm, Theorem~\ref{thm:actual-global-convergence}
bounds $e_{N,T}=\mathbb E[J_N(\widehat\theta)-\inf_\Theta J_N]$ by
\begin{equation}
  e_{N,T}\le\mathbb E\!\left[
  a_{\rm ctx}+\frac{D_{\rm cls}^2}{\gamma(T-u_0)}
  +s_{N,T}^{\rm sel}\right],\qquad T>u_0 .
  \label{eq:main-explicit-global-rate}
\end{equation}
Here $s_{N,T}^{\rm sel}=[J_N(\widehat\theta)-J_N(\theta^T)]_+$ accounts for validation
selection, with $[x]_+=\max\{x,0\}$. Substitution into Equation~\eqref{eq:main-federated-rate}
gives the population-risk bound. For fixed $u_0$ and uniformly bounded
$\mathbb E[D_{\rm cls}^2/\gamma]$, the optimization term is $O(1/T)$.
Vanishing residual terms then give local-information Bayes
consistency. Appendix~\ref{app:stochastic-convergence} also covers imperfect update alignment.

\section{Experiments}
\label{sec:experiments}

We evaluate FedASTRA through four research questions (RQ1-RQ4) concerning predictive performance, mechanism validation, information access, and scalability:

\begin{itemize}[nosep]
  \item \textit{RQ1: How effective is FedASTRA on identifier-free native hypergraphs?}
  \item \textit{RQ2: Do the empirical results support the proposed analysis and FedASTRA's design?}
  \item \textit{RQ3: How does FedASTRA compare under different information-access settings?}
  \item \textit{RQ4: Can FedASTRA scale to large real-world hypergraphs without sampling?}
\end{itemize}
Additional experimental settings and results are provided in Appendices~\ref{app:protocols} and~\ref{app:results}, respectively.

\noindent\textbf{Datasets.}
We evaluate two groups of datasets. 
(1) Native hypergraph datasets including CoraCA coauthorship, DBLP4k bibliographic, IMDB4k movie, and Trivago hotel-session data \citep{yadati2019hypergcn,sun2011pathsim,fu2020magnn,feng2019hypergraph,gao2023hgnnplus,li2026dhgbench}. 
(2) Graph datasets including Cora and CiteSeer citation networks, the Facebook Page-Page social network
\citep{yang2016revisiting,rozemberczki2021multiscale}, and the large-scale ogbn-papers100M dataset \citep{hu2020ogb}. 
Appendix~\ref{app:datasets_relation_channels} 
details the datasets.

\noindent\textbf{Baselines.}
We consider two groups of baselines. (1) Federated adaptations (FedAvg \citep{mcmahan2017communication}) of native-hypergraph methods, including  
Fed-HGNN (HGNN \citep{feng2019hypergraph}), Fed-HNHN (HNHN \citep{dong2020hnhn}), Fed-HyperGCN (HyperGCN \citep{yadati2019hypergcn}), Fed-AllDeepSets (DeepSets \citep{zaheer2017deepsets}), Fed-AllSetTransformer (AllSet \citep{chien2022allset}), Fed-UniGCNII (UniGCNII \citep{huang2021unignn}) and Fed-ED-HNN (ED-HNN \citep{wang2023edhnn}). (2) Federated graph learning methods with different levels of cross-client information access, including FedSage with no access and FedSage+ with generated information \citep{zhang2021fedsage}, FedGCN \citep{yao2023fedgcn} and FedCog \citep{lei2023fedcog} with direct access to correspondence. Details of baselines are presented in Appendix ~\ref{app:federated-protocol}.

\noindent\textbf{Implementation Details.}
Fed-HNHN uses the configuration reported by FedHGL \citep{luo2025fedhgl},
and Fed-HyperGCN uses the local HyperGCN projection. FedASTRA uses two masks,
a local EMA target, and parameter-only FedAvg
\citep{mcmahan2017communication,kingma2015adam}.
Results are averaged over five runs, except three for Fed-HNHN and all Trivago-related experiments.
We report node classification accuracy (Acc.) and Macro-F1.
The nodes are partitioned across $K\in\{3,6,9\}$ clients using class-conditioned Dirichlet sampling \citep{luo2025fedhgl}. 
All methods follow the same data splits and training and evaluation protocols.
Appendix~\ref{app:protocols} details the experimental protocols. 

\noindent\textbf{RQ1: performance on identifier-free native hypergraphs.}
As shown in Table~\ref{tab:heat-native-main}, FedASTRA achieves the best Accuracy and Macro-F1 across all four hypergraph datasets from distinct domains, especially on CoraCA, outperforming the baseline by 6.5 and 8.5 percentage points, respectively. The consistent cross-domain gains support FedASTRA's correspondence-invariant design, which avoids global structure recovery while approaching the optimum permitted by local information under the stated conditions. In contrast, direct FedAvg adaptations of traditional hypergraph models may under-exploit local information when global structure is fragment. More detailed results under different number of clients are reported in Appendix~\ref{app:results-native}.

\begin{table}[htbp]
  \centering
  \caption{Performance on identifier-free native hypergraphs. Results are averaged over $K\in\{3,6,9\}$.}
  \label{tab:heat-native-main}
  \small
  \setlength{\tabcolsep}{3pt}
  \resizebox{\textwidth}{!}{%
    \begin{tabular}{lcccccccc}
      \toprule
      Method & \multicolumn{2}{c}{CoraCA} & \multicolumn{2}{c}{DBLP4k} & \multicolumn{2}{c}{IMDB4k} & \multicolumn{2}{c}{Trivago} \\
      & Acc. (\%) & Macro-F1 (\%) & Acc. (\%) & Macro-F1 (\%) & Acc. (\%) & Macro-F1 (\%) & Acc. (\%) & Macro-F1 (\%) \\
      \midrule
      Fed-HGNN & \underline{68.64 $\pm$ 1.45} & \underline{64.35 $\pm$ 1.71} & \underline{77.63 $\pm$ 1.35} & \underline{76.98 $\pm$ 1.18} & \underline{48.85 $\pm$ 1.29} & \underline{48.59 $\pm$ 1.23} & 21.85 $\pm$ 0.19 & 4.27 $\pm$ 0.20 \\
      Fed-HNHN & 54.73 $\pm$ 1.17 & 50.42 $\pm$ 1.76 & 69.58 $\pm$ 1.18 & 68.79 $\pm$ 1.12 & 44.96 $\pm$ 1.50 & 43.05 $\pm$ 1.24 & \underline{25.68 $\pm$ 1.02} & \underline{7.72 $\pm$ 0.24} \\
      Fed-HyperGCN & 64.02 $\pm$ 1.64 & 58.07 $\pm$ 1.95 & 72.69 $\pm$ 0.77 & 71.46 $\pm$ 0.86 & 44.14 $\pm$ 1.16 & 40.86 $\pm$ 2.38 & 9.17 $\pm$ 0.20 & 0.13 $\pm$ 0.02 \\
      Fed-AllDeepSets & 39.63 $\pm$ 1.83 & 27.08 $\pm$ 2.04 & 39.35 $\pm$ 0.55 & 32.45 $\pm$ 1.71 & 40.32 $\pm$ 1.15 & 34.12 $\pm$ 2.20 & 10.88 $\pm$ 0.15 & 0.27 $\pm$ 0.01 \\
      Fed-AllSetTransformer & 30.22 $\pm$ 0.00 & 6.63 $\pm$ 0.00 & 30.99 $\pm$ 1.29 & 21.81 $\pm$ 0.86 & 37.04 $\pm$ 0.05 & 18.10 $\pm$ 0.07 & 8.99 $\pm$ 0.00 & 0.10 $\pm$ 0.00 \\
      Fed-UniGCNII & 59.54 $\pm$ 1.31 & 50.11 $\pm$ 3.86 & 77.35 $\pm$ 2.26 & 75.98 $\pm$ 2.60 & 41.62 $\pm$ 0.74 & 30.57 $\pm$ 3.32 & 9.15 $\pm$ 0.13 & 0.15 $\pm$ 0.03 \\
      Fed-ED-HNN & 59.62 $\pm$ 1.65 & 53.28 $\pm$ 2.30 & 72.10 $\pm$ 2.54 & 71.36 $\pm$ 2.55 & 43.46 $\pm$ 0.89 & 36.58 $\pm$ 1.05 & 9.19 $\pm$ 0.29 & 0.13 $\pm$ 0.03 \\
      FedASTRA & \textbf{75.17 $\pm$ 0.42} & \textbf{72.88 $\pm$ 0.60} & \textbf{79.22 $\pm$ 0.98} & \textbf{78.61 $\pm$ 0.80} & \textbf{51.13 $\pm$ 0.97} & \textbf{51.06 $\pm$ 0.77} & \textbf{29.09 $\pm$ 1.19} & \textbf{7.96 $\pm$ 0.10} \\
      \bottomrule
    \end{tabular}%
  }
\end{table}

\noindent\textbf{RQ2: validation on proposed analysis and mechanism.}
We first validate the proposed identification analysis through two-client audits. Among 120 evaluated cells, no higher-order co-membership target is point identified, and the median endpoint-hull width increases from 36.37\% at $q=2$ to 93.71\% at $q=5$. Type coarsening preserves endpoint containment in all 40 audited cells, while 32 column permutations leave $P_k$, heat features, and logits unchanged. These results confirm the predicted ambiguity under latent correspondence and the correspondence-invariant property of the learner representation. Higher-order truncation exhibits different predictive trends across datasets, indicating that its impact depends on how the lost higher-order structure aligns with task-relevant signal. Complete audit configurations and results are reported in Appendix~\ref{app:results-mechanism}.

For mechanism ablations, we evaluate heat-basis width $R\in\{0,1,2,4,8\}$ and the masked prediction objective. Figure~\ref{fig:ablation_main} shows that removing diffusion ($R=0$) degrades performance, while moderate heat widths improve accuracy. Removing the predictive objective ($\lambda_{pred}=0$) also consistently reduces performance, demonstrating its complementary role in FedASTRA. Complete results are reported in Appendix~\ref{app:results-mechanism}.

\begin{figure}[H]
  \centering
  \includegraphics[width=\textwidth]{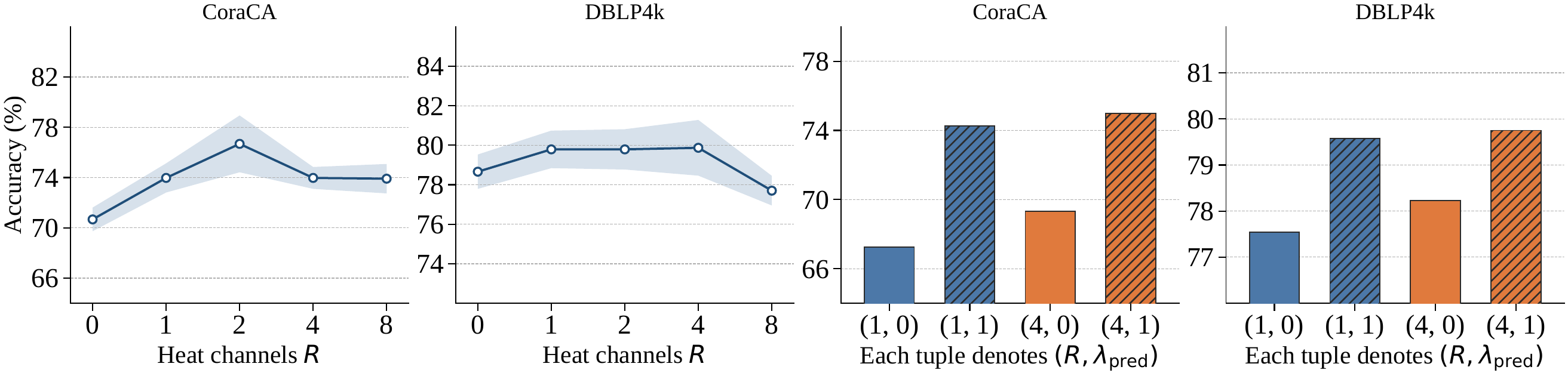}
  \caption{Ablation study on the adaptive diffusion (left) and the masked prediction (right).}
  \label{fig:ablation_main}
\end{figure}

\noindent\textbf{RQ3: performance against baselines with varying information access.}
We empirically contrast the information-access regimes discussed in Section~\ref{sec:information-comparison}, as depicted in Table~\ref{tab:heat-graph-main}. FedASTRA outperforms FedSage and FedGCN (0-hop), which use only local information, and also surpasses FedSage+, despite the latter using additional communication to generate cross-client information. Although FedGCN (2-hop) and FedCog are stronger under a different setting with known correspondence, the gap is substantially narrowed, suggesting that the remaining gap is not solely missing cross-client information, but also how effectively the local structure is represented and exploited. The stable performance of FedASTRA as the client number grows further indicates that stronger local modeling can partially offset increasing structural information loss. Appendix~\ref{app:results-graph} provides complete metrics, information-access details, and traffic accounting.

\begin{table}[htbp]
  \centering
  \caption{Performance against federated graph learning methods. Communication traffic (MiB) is averaged across datasets and client counts, with model traffic per round and auxiliary traffic per run.}
  \label{tab:heat-graph-main}
  \scriptsize
  \setlength{\tabcolsep}{2.2pt}
  \resizebox{\textwidth}{!}{%
    \begin{tabular}{lccccccccccc}
      \toprule
      Method & \multicolumn{3}{c}{Cora} & \multicolumn{3}{c}{CiteSeer} & \multicolumn{3}{c}{Facebook} & \multicolumn{2}{c}{Average communication traffic} \\
      & $K=3$ & $K=6$ & $K=9$ & $K=3$ & $K=6$ & $K=9$ & $K=3$ & $K=6$ & $K=9$ & Model/round & Auxiliary/run \\
      \midrule
      FedSage & 70.35 $\pm$ 1.96 & 63.18 $\pm$ 0.97 & 58.65 $\pm$ 1.85 & 65.08 $\pm$ 0.96 & 63.73 $\pm$ 0.74 & 62.93 $\pm$ 1.72 & 72.63 $\pm$ 0.73 & 65.64 $\pm$ 1.81 & 60.41 $\pm$ 1.61 & 4.82 & 0 \\
      FedGCN (0-hop) & 74.07 $\pm$ 1.57 & 65.90 $\pm$ 2.05 & 62.07 $\pm$ 1.79 & 68.68 $\pm$ 1.50 & 65.83 $\pm$ 0.56 & 65.14 $\pm$ 1.80 & 75.14 $\pm$ 0.95 & 67.87 $\pm$ 1.71 & 64.30 $\pm$ 1.22 & 2.41 & 0 \\
      FedSage+ & 69.67 $\pm$ 0.71 & 63.11 $\pm$ 1.66 & 61.02 $\pm$ 1.40 & 63.16 $\pm$ 1.17 & 62.53 $\pm$ 1.66 & 62.95 $\pm$ 1.62 & 68.17 $\pm$ 0.33 & 61.85 $\pm$ 0.95 & 58.56 $\pm$ 1.33 & 4.82 & 17314.45 \\
      FedGCN (2-hop) & 84.05 $\pm$ 1.17 & 84.05 $\pm$ 1.16 & 84.12 $\pm$ 0.92 & 72.20 $\pm$ 1.44 & 72.27 $\pm$ 1.53 & 72.38 $\pm$ 1.50 & 83.21 $\pm$ 0.76 & 82.84 $\pm$ 0.50 & 82.73 $\pm$ 0.85 & 2.41 & 673.79 \\
      FedCog & 84.29 $\pm$ 1.50 & 84.34 $\pm$ 1.06 & 83.92 $\pm$ 0.78 & 72.21 $\pm$ 1.49 & 72.38 $\pm$ 1.04 & 72.51 $\pm$ 1.42 & 82.46 $\pm$ 0.69 & 82.50 $\pm$ 0.84 & 82.15 $\pm$ 0.94 & 2.41 & 752.27 \\
      FedASTRA & 77.54 $\pm$ 1.08 & 74.45 $\pm$ 1.60 & 73.33 $\pm$ 1.22 & 67.61 $\pm$ 2.32 & 68.00 $\pm$ 1.86 & 67.92 $\pm$ 1.88 & 76.88 $\pm$ 1.44 & 75.57 $\pm$ 0.94 & 75.12 $\pm$ 1.49 & 13.53 & 0 \\
      \bottomrule
    \end{tabular}%
  }
\end{table}

\noindent\textbf{RQ4: scalability on large-scale data.}
We evaluate FedASTRA on ogbn-papers100M after local one-hop lifting, as summarized in Table~\ref{tab:heat-papers100m}.
We also include Fed-MLP, Fed-GCN \citep{kipf2017semi}, and Fed-SIGN \citep{frasca2020sign} as baselines. FedASTRA outperforms the strongest baseline by up to 9.69 percentage points under comparable communication overhead. Its scalability stems from the correspondence-free local diffusion design and 
the strong performance further suggests that adaptive diffusion can better exploit local structures.
Appendix~\ref{app:results-scale} details the results.

\begin{table}[htbp]
  \centering
  \caption{Performance on complete ogbn-papers100M dataset. Nodes are partitioned across 8 clients. Traffic counts bidirectional trainable-parameter per round (MiB). BatchNorm buffers stay local.}
  \label{tab:heat-papers100m}
  \scriptsize
  \setlength{\tabcolsep}{3pt}
  \begin{tabular}{lcccc}
    \toprule
    Method & Acc. (\%) & Macro-F1 (\%) & \#Parameters & Parameter trafffic/round \\
    \midrule
    Fed-MLP & 33.61 & 13.74 & 983,380 & 60.02 \\
    Fed-SIGN & 38.42 & 16.56 & 984,442 & 60.09 \\
    Fed-HGNN & 31.04 & 9.97 & 982,672 & 59.98 \\
    Fed-HNHN & 35.93 & 13.46 & 34,672 & 2.12 \\
    Fed-HyperGCN & \underline{40.38} & \underline{21.44} & 984,442 & 60.09 \\
    Fed-GCN & 39.70 & 17.40 & 983,380 & 60.02 \\
    FedASTRA & \textbf{50.07} & \textbf{29.73} & 976,504 & 59.60 \\
    \bottomrule
  \end{tabular}
\end{table}
\section{Conclusion}
\label{sec:conclusion}

We establish identification and computational boundaries for latent hyperedge correspondence
and quantify when fitting costs outweigh the benefit of additional information.
FedASTRA combines invariant local diffusion, masked prediction, and parameter-only communication.
Under stated assumptions, information-preservation and global risk bounds connect observable
memberships to prediction. Results on native hypergraphs, federated subgraphs, and large-scale 
ogbn-papers100M data support this approach under explicit information-access and communication constraints.

\bibliography{refs}
\bibliographystyle{iclr2027_conference}

\clearpage
\appendix
\raggedbottom
\phantomsection
\pdfbookmark[0]{Appendix}{appendix-contents}
\begingroup
\hypersetup{linkcolor=black}
\begin{center}
  {\LARGE\bfseries Appendix}
\end{center}
\bigskip
{\Large\bfseries Contents}\par
\medskip
\setlength{\parskip}{0.42em}
\appendixsectionentry{app:more-related-work}
\appendixsectionentry{app:formal-results}
\appendixsubsectionentry{app:notation}
\appendixsubsectionentry{app:framework}
\appendixsubsectionentry{app:two-client-boundary}
\appendixsectionentry{app:proofs}
\appendixsubsectionentry{app:proof-completion}
\appendixsubsectionentry{app:proof-qmass}
\appendixsubsectionentry{app:proof-sorting}
\appendixsubsectionentry{app:three-client-boundary}
\appendixsubsectionentry{app:proof-zeta}
\appendixsubsectionentry{app:proof-ht}
\appendixsubsectionentry{app:proof-two-point}
\appendixsubsectionentry{app:proof-heat-stability}
\appendixsubsectionentry{app:observable-dynamics}
\appendixsubsectionentry{app:auxiliary-risk}
\appendixsubsectionentry{app:local-sufficiency}
\appendixsubsectionentry{app:stochastic-convergence}
\appendixsectionentry{app:protocols}
\appendixsubsectionentry{app:algorithm}
\appendixsubsectionentry{app:partitions}
\appendixsubsectionentry{app:datasets_relation_channels}
\appendixsubsectionentry{app:federated-protocol}
\appendixsectionentry{app:results}
\appendixsubsectionentry{app:results-native}
\appendixsubsectionentry{app:results-graph}
\appendixsubsectionentry{app:results-mechanism}
\appendixsubsectionentry{app:results-scale}
\endgroup

\clearpage
\section{More Related Work}
\label{app:more-related-work}

\paragraph{Latent correspondence and partial identification.}
Unknown correspondence is classical in record linkage
\citep{fellegi1969record}, where latent pairings between records must be
inferred from incomplete identifying information. Partial identification
instead retains the set of target values compatible with the observations
when point identification is unavailable \citep{manski2003partial}.
Our setting combines these perspectives at the level of higher-order
relations: client-local hyperedge fragments are observed, but their
cross-client instance correspondence is not. Rather than selecting or
estimating one global linkage, we characterize the full compatibility class induced by the released fragments and derive sharp target bounds over all compatible alignments.

\paragraph{Alignment under latent correspondence.}
When object identities are not shared, relational alignment is often formulated through latent couplings. Gromov--Wasserstein and fused
Gromov--Wasserstein formulations compare structured objects without requiring
a common coordinate system \citep{peyre2016gw,vayer2019fgw}, while recent
partial formulations relax full-mass or full-matching requirements
\citep{bai2025partialgw,ICLR2025_cfa8440d}.
For higher-order structures, HyperCOT develops transport-based comparisons of
hypergraphs \citep{chowdhury2024hypercot}, and recent partitioned-network
geometry unifies graph and hypergraph alignment within a common metric
framework \citep{zhang2026partitioned}.
These approaches optimize an alignment or distance under a specified geometry.
But more specifically here, we are more concerned with which target values remain invariant to latent cross-client correspondence and which vary within the compatibility class induced by a fixed federated release.

\paragraph{Hypergraph reconstruction from coarsened observations.}
Some existing papers study whether higher-order structure can be reconstructed
from lower-order or otherwise coarsened observations. Early work develops
statistical reconstruction from network data \citep{young2021hypergraph}.
Wang and Kleinberg characterize information loss under hypergraph projection
and study reconstruction with additional structural knowledge
\citep{wang2024graphs}. For random uniform hypergraphs, recent work gives information-theoretic and algorithmic thresholds for exact recovery
\citep{pmlr-v247-bresler24a}, followed by a characterization of both partial
and exact recovery across hyperedge sizes and multiplicity regimes
\citep{bresler2025recovery}. Multiplicity-aware supervised reconstruction
provides a complementary algorithmic approach \citep{lee2025marioh}, while
recent learning theory has also formulated relational learning itself through
a hypergraph-recovery model \citep{pmlr-v235-chen24l}.
These works primarily ask when a latent source hypergraph is recoverable under projection or a specified generative model. Our observation is however different: clients retain higher-order fragments themselves, but their cross-client
instance identities are suppressed. We therefore retain all compatible global
hypergraphs and characterize what is identifiable without directly assuming that one
global completion can be recovered.

\paragraph{Assignment and multiway matching complexity.}
Two-way correspondence is closely related to classical linear assignment and
bipartite matching \citep{birkhoff1946tres,kuhn1955hungarian}, whereas
multidimensional assignment and three-dimensional matching exhibit
fundamentally different computational behavior
\citep{pierskalla1968multidimensional,karp1972reducibility}.
Parameterized algorithms for set packing and matching draw on tools such as
color coding, while the W-hierarchy distinguishes fixed-parameter
tractability from parameterized intractability
\citep{DBLP:journals/jacm/AlonYZ95,DBLP:journals/tcs/DowneyF95}.
Recent partial graph matching still exposes tractable assignment structure in
important pairwise settings \citep{ICLR2025_cfa8440d}.
Our result places these phenomena within one latent-correspondence problem:
under the same observation and reward rules, two-client correspondence is
polynomial, fixed-client multiway matching is NP-complete but
fixed-parameter tractable in the stated joint parameter, and unbounded client
count is W[1]-complete in matching size.

\section{Formal Results and Identification Boundaries}
\label{app:formal-results}

This section records the observation assumptions used in the proofs: which fragment
attributes are retained, when partition censoring admits unbiased correction, and when
unknown correspondence is exactly a bipartite assignment.

\subsection{Important Notation}
\label{app:notation}

\begin{table}[H]
  \centering
  \small
  \caption{Core notation for the observation model, identification results,
    algorithm, and experiments.}
  \begin{tabularx}{\textwidth}{@{}lX@{}}
    \toprule
    Symbol & Meaning \\
    \midrule
    $\mathcal H=(V,\mathcal E,X,r,w)$ & Finite node-attributed, typed, weighted multihypergraph. \\
    $A_e,r_e,w_e$ & Support, type in $\mathcal R$, and positive weight of relation instance $e\in\mathcal E$. \\
    $w_r(A)$ & Aggregate weight of type-$r$ instances with support $A$. \\
    $M_{q,r}^{\mathcal H}(S)$ & Weighted type-$r$ order-$q$ co-membership count for $S\in\binom Vq$. \\
    $\mathsf Z$, $z$, $V_k$ & Random node-to-client assignment, one realization, and the nodes assigned to client $k$. \\
    $c$, $\upsilon_{e,k}$ & Minimum visible fragment size and the analyst's local/cross status mark. \\
    $\Gamma$, $\mathcal M_\Gamma$ & Public observation assumptions and its allowed latent-hypergraph class. \\
    $Y_{\rm A},y_{\rm A}$ & Analyst release as a random element and one realized value. \\
    $Y_{\rm L},y_{\rm L}$ & Coarsened learner release $Y_{\rm L}=\rho(Y_{\rm A})$ and one realized value. \\
    $\mathcal F_{\rm A}$, $\mathcal F_{\rm L}$ & Analyst and learner compatibility classes defined in Equation~\eqref{eq:compatible_set} and Corollary~\ref{cor:learner-fiber-lower-bound}. \\
    $\widetilde M_{q,r}(\cdot,z)$, $\tau_q$ & Retained type-$r$ coordinates and the type-aggregated erased-mass fraction. \\
    $\pi_S$, $\pi_q$ & Inclusion probability of a $q$-set and its common value under an exchangeable randomized partition design. \\
    $\mathcal I_{q,\psi}(y_{\rm A})$ & Values of $\Phi_{q,\psi}$ over $\mathcal F_{\rm A}(y_{\rm A},\Gamma)$. \\
    $h=(r,\omega)$, $C^h$, $\mathbb B_{m_h}$ & Exact type-and-weight stratum, its assignment cost, and the Birkhoff polytope. \\
    $B_k,D_{v,k},D_{e,k},I_{0,k}$ & Client incidence matrix, node and relation degrees, and isolated-node indicator. \\
    $P_k$, $P_k^{\rm sym}$ & Observable Markov operator and its symmetric representative on positive-degree nodes. \\
    $\mathsf K_k(t)$ & Continuous-time local diffusion $\exp\{t(P_k-I)\}$ at time $t\ge0$. \\
    $R$, $t_{bj}$, $\epsilon_{\rm ord}$ & Number of positive heat channels, channel-$b$ time for feature $j$, and its positive increment floor. \\
    $W_k(X_k\mid\vartheta)$, $W_{k,\theta}(X_k)$ & Heat-basis representation and its shorthand with $\vartheta=\operatorname{ctx}(\theta)$. \\
    $L_{\rm tail}$, $\varepsilon_{\rm tail}$ & Shared recurrence degree and its deterministic Poisson-tail tolerance. \\
    $\theta$, $\vartheta$, $\bar\vartheta$ & Complete online state, its context coordinates $(\theta_{\rm enc},\eta)$, and the client-local EMA target context. \\
    $p_\theta$, $c_\theta$, $y_v$ & Rowwise predictor, rowwise classifier, and class label of node $v$. \\
    $\gamma_{\rm view}$, $\lambda_{\rm pred}$, $\mu$ & View-agreement weight, predictive-loss weight, and target momentum. \\
    $\mathcal P$, $\xi$, $\mathcal T_T$ & Fixed training protocol, algorithmic randomness, and the transcript through round $T$. \\
    \bottomrule
  \end{tabularx}
\end{table}

\paragraph{Probability and conditioning convention.}
Unless a statement says otherwise, $\mathcal H$, $z$, $\Gamma$, and the realized release are
fixed. Expectations and variances in Theorem~\ref{thm:ht} are taken over the declared random
partition $\mathsf Z$. Corollary~\ref{cor:matching} explicitly averages over a uniform latent
matching. Algorithmic laws average over $\xi\sim\nu$, where the fixed law $\nu$ is independent
of $\mathcal H$. The auxiliary variable $N_t^{\rm aux}$ below only expresses deterministic
Poisson coefficients and tail bounds and is absent from implementation sampling.
Appendix~\ref{app:auxiliary-risk} separately defines its downstream evaluation law.

\subsection{Observation Channel and Design-based Recovery}
\label{app:framework}

\subsubsection{Aggregate support--type weights and Boolean-zeta coordinates}

The analyst release retains each fragment occurrence with its exact type, weight, and
local/cross status. Assignment strata use these individual rows, including duplicates.
Co-membership coordinates instead depend on $w_r(A)$, the total weight of rows with support
$A$ and type $r$. Aggregation loses the allocation of weight among duplicate occurrences,
so it cannot replace the row-level table in the matching analysis. Erasing types further
reduces the observation to $M_q=\sum_r M_{q,r}$. The following inclusion describes the
resulting uncertainty on the same latent class $\mathcal M_\Gamma$.

\begin{corollary}[Coarsened or missing type marks enlarge identified sets]
  \label{cor:type_coarsening}
  Let $\chi:\mathcal R\to\mathcal R_\chi$ merge or suppress type labels, and let
  $\chi_\#$ apply this map to the release while preserving its other fields.
  For an exact-type release $y_{\rm ex}$, set $y_\chi=\chi_\#y_{\rm ex}$ and
  $\mathcal O_\chi=\chi_\#\circ\mathcal O_{z,\Gamma}$, with public type
  restrictions pushed forward accordingly. On the common class $\mathcal M_\Gamma$, define
  \begin{equation}
    \mathcal F_{\rm ex}(y_{\rm ex},\Gamma)=\{\mathcal H\in\mathcal M_\Gamma:
    \mathcal O_{z,\Gamma}(\mathcal H)=y_{\rm ex}\},\qquad
    \mathcal F_\chi(y_\chi,\Gamma)=\{\mathcal H\in\mathcal M_\Gamma:
    \mathcal O_\chi(\mathcal H)=y_\chi\}.
    \label{eq:type_coarsening_fibers}
  \end{equation}
  For a scalar target $\phi:\mathcal M_\Gamma\to\mathbb R$, write
  $\mathcal I_{\phi,{\rm ex}}=\phi(\mathcal F_{\rm ex}(y_{\rm ex},\Gamma))$ and
  $\mathcal I_{\phi,\chi}=\phi(\mathcal F_\chi(y_\chi,\Gamma))$. Then
  \begin{equation}
    \mathcal F_{\rm ex}(y_{\rm ex},\Gamma)\subseteq\mathcal F_\chi(y_\chi,\Gamma),\qquad
    \mathcal I_{\phi,{\rm ex}}\subseteq\mathcal I_{\phi,\chi}.
    \label{eq:type_coarsening_inclusion}
  \end{equation}
  When the scalar endpoints are finite, this inclusion gives
  \begin{equation}
    \inf\mathcal I_{\phi,\chi}\le\inf\mathcal I_{\phi,{\rm ex}}
    \le\sup\mathcal I_{\phi,{\rm ex}}\le\sup\mathcal I_{\phi,\chi}.
    \label{eq:type_coarsening_endpoints}
  \end{equation}
  Thus type coarsening can only preserve or widen the sharp endpoint interval.
\end{corollary}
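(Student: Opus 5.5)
The proof is a fiber-inclusion argument; the main work is just checking that the coarsened observation map factors through the exact one. By construction $\mathcal O_\chi=\chi_\#\circ\mathcal O_{z,\Gamma}$, and $y_\chi=\chi_\#y_{\rm ex}$. Take any $\mathcal H\in\mathcal F_{\rm ex}(y_{\rm ex},\Gamma)$, so that $\mathcal H\in\mathcal M_\Gamma$ and $\mathcal O_{z,\Gamma}(\mathcal H)=y_{\rm ex}$. Applying the deterministic map $\chi_\#$ to both sides gives $\mathcal O_\chi(\mathcal H)=\chi_\#y_{\rm ex}=y_\chi$. Hence $\mathcal H\in\mathcal F_\chi(y_\chi,\Gamma)$, because both fibers are taken inside the same class $\mathcal M_\Gamma$. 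This proves the first inclusion in \eqref{eq:type_coarsening_inclusion}. Taking images under $\phi$ preserves inclusion, so $\mathcal I_{\phi,{\rm ex}}\subseteq\mathcal I_{\phi,\chi}$.

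The one point to verify is that $\chi_\#$ is a well-defined function on releases. It must act only on the type field of each retained row and leave $(V_k,X_{V_k})$, supports, weights, and status marks untouched. Because it acts rowwise, it respects the row-permutation equivalence classes $[T_k]_{\rm row}$: permuting rows before or after relabeling types gives the same class. It must also map the public field $\Gamma$ consistently, with type restrictions pushed forward as stated. Once this is granted, $\chi_\#y_{\rm ex}$ is unambiguous and the argument above goes through. I expect this well-definedness check, rather than the inclusion itself, to be the only real obstacle, and it is essentially bookkeeping.

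For \eqref{eq:type_coarsening_endpoints}, note first that $\mathcal F_{\rm ex}$ is nonempty, since $y_{\rm ex}$ is a realized release of some $\mathcal H\in\mathcal M_\Gamma$. Hence $\mathcal I_{\phi,{\rm ex}}$ is nonempty, and the middle inequality $\inf\le\sup$ holds. The outer inequalities follow from monotonicity of $\inf$ and $\sup$ under set inclusion: a subset has an infimum no smaller and a supremum no larger than its superset. Finiteness of the endpoints is needed only so that the chain is a statement about real numbers. The closing remark, that the sharp interval can only be preserved or widened, restates these inequalities, since $[\inf\mathcal I_{\phi,{\rm ex}},\sup\mathcal I_{\phi,{\rm ex}}]\subseteq[\inf\mathcal I_{\phi,\chi},\sup\mathcal I_{\phi,\chi}]$.
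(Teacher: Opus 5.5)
Your proposal is correct and follows the paper's own argument: the fiber inclusion comes from applying the deterministic map $\chi_\#$ to $\mathcal O_{z,\Gamma}(\mathcal H)=y_{\rm ex}$. Taking images under $\phi$ gives the second inclusion, and the endpoint chain then follows from monotonicity of $\inf$ and $\sup$ under nested sets. Your extra well-definedness check for $\chi_\#$ on row-permutation classes and your nonemptiness remark, which is also implied by the finiteness hypothesis, are harmless additions that the paper leaves implicit.
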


\begin{proof}
  Every hypergraph reproducing the exact marked table also reproduces its
  $\chi$-coarsening, which proves the first inclusion.  Taking the image under
  $\phi$ proves the second, and nested scalar sets give the endpoint inequalities.
\end{proof}

\subsubsection{Independent and fixed-capacity designs}

For fixed $\mathcal H$ and $q\ge c$, a coordinate $S\in\binom Vq$ survives when all its
nodes share a client. Independent assignments with $p_{vk}=\Pr\{\mathsf Z(v)=k\}$ give
\begin{equation}
  \pi_S=\Pr\{\mathsf Z|_S\text{ is constant}\}
  =\sum_{k=1}^K\prod_{v\in S}p_{vk}.
  \label{eq:node_dependent_pi}
\end{equation}
For a uniformly random partition with fixed capacities
$(n_1,\ldots,n_K)$, exchangeability gives
\begin{equation}
  \pi_q=\sum_{k=1}^K\frac{\binom{n_k}{q}}{\binom{|V|}{q}}.
  \label{eq:fixed_capacity_pi}
\end{equation}
Positive inclusion probabilities permit coordinatewise Horvitz--Thompson weighting
\citep{horvitz1952generalization}. Its unbiasedness concerns repeated randomized partitions.
Proportions from a fixed institutional partition alone actually fail to specify the required assignment law.

\subsubsection{Conditions for coordinate-wise partition censoring}

Equation~\eqref{eq:mask_law} requires retained multiplicity, exact weight, type, and $q\ge c$.
Deduplication would change the count, so the data interface preserves duplicate and singleton
columns. For $c>1$, some nonempty fragments disappear. Completion analysis then requires
constraints on original hyperedge counts or orders. The exact two-client result uses $c=1$.

\subsection{Two-client Assumptions and Weight-stratum Boundary}
\label{app:two-client-boundary}

Theorem~\ref{thm:completion_polytope} assumes two clients, $c=1$, observed local/cross status,
one nonempty row at each client per cross-client hyperedge, and exact type-and-weight strata
$h=(r,\omega)$ with $m_{1h}=m_{2h}=:m_h$. Each compatible completion is then a bijection
within every stratum. Missing fragments, uncertain counts, or further clients change this
feasible set. Weight bins also enlarge it: the public model must specify the allowed
underlying weights, and endpoints must be optimized over that range rather than evaluated
at an assumed bin midpoint.

Under the exact assumptions, general two-client targets admit assignment bounds, while
total order-$q$ mass admits sorting bounds. General three-client targets are NP-hard.
For any client count, anchor-additive costs remain tractable because the objective
separates into $K-1$ independent bipartite assignments
(Proposition~\ref{prop:star_separable_multiclient}).

\section{Proofs}
\label{app:proofs}

We prove the structural bounds first, then connect observable heat features to prediction
risk and the shared-model federated objective.

\subsection{Two-client Completion and Point Identification}
\label{app:proof-completion}

Fix a stratum $h=(r,\omega)$. Under the exact two-client assumptions, every
$\varpi\in\operatorname{Sym}(m_h)$ is compatible. Each $q$-set in
$F_{1i}^h\cup F_{2,\varpi(i)}^h$ lies within one fragment or intersects both.
Summing the within-fragment terms over $i$ is matching-invariant, while the mixed terms
give $C^h_{i,\varpi(i)}$. Adding known local contributions proves
Equation~\eqref{eq:identified_functional_set}. By Birkhoff--von Neumann integrality,
the vertices of $\mathbb B_m$ are exactly the permutation matrices
\citep{birkhoff1946tres}. Both linear-program endpoints therefore attain compatible
matchings and are sharp. Each costs $O(m_h^3)$ by Hungarian assignment
\citep{kuhn1955hungarian}.

The assignment dual also provides a certificate for the computed bound. For each stratum,
the lower-endpoint dual is
\begin{equation}
  \max_{u,v}\ \sum_i u_i+\sum_jv_j
  \quad\text{s.t.}\quad u_i+v_j\le C^h_{ij}.
  \label{eq:assignment_min_dual}
\end{equation}
The upper endpoint has a minimization dual with $u_i+v_j\ge C^h_{ij}$.
Dual feasibility and equality of primal and dual values certify an endpoint,
with equality on every pair in an optimal matching.

If $C_{ij}=\alpha_i+\beta_j$, every permutation sums to
$\sum_i\alpha_i+\sum_j\beta_j$. Conversely, equality of all permutation sums implies
equality after exchanging any two assigned columns, hence
\begin{equation}
  C_{ij}+C_{i'j'}-C_{ij'}-C_{i'j}=0.
  \label{eq:cycle_swap_proof}
\end{equation}
When all such rectangular cross-differences vanish, set
$\alpha_i=C_{i1}$ and $\beta_j=C_{1j}-C_{11}$.  The cycle using row 1 and column
1 gives $C_{ij}=\alpha_i+\beta_j$.  For $m=1$ this representation is immediate.
Finally, $J_mCJ_m=0$ if and only if $C$ lies in the kernel of double centering,
which is exactly the row--column-additive subspace.  Thus the quotient of cost
matrices by the kernel of assignment variation is represented uniquely by
$J_mCJ_m$. Since $J_m$ has rank $m-1$, this image has dimension $(m-1)^2$.
The quotient maps injectively into the following $(m-1)^2$ base four-cycle coordinates:
\begin{equation}
  \left(C_{ij}-C_{i1}-C_{1j}+C_{11}\right)_{i,j=2}^{m}
  \label{eq:base_cycle_coordinates}
\end{equation}
Zero base coordinates imply row--column additivity, so these $(m-1)^2$ coordinates are
independent on the quotient. The dimension is attainable with $q=2$ and unit-weight
singleton fragments by setting $\psi_{2,r}(\{a_i,b_j\},X_{\{a_i,b_j\}})=C_{ij}$.
Repeated fragments or restricted test functions may reduce the realized dimension, without
changing the endpoint argument. Since strata vary independently, point identification
requires additivity in every stratum. \hfill$\square$

\subsection{\texorpdfstring{Order-$q$}{Order-q} Mass Identification}
\label{app:proof-qmass}

For a general target, the preceding proof requires an assignment solver. Total order-$q$
mass has more structure: with $q\ge2$, its mixed cost depends only on fragment sizes.
This allows both endpoints to be computed by sorting. Within
stratum $h=(r,\omega)$, set $a_i^h=|F_{1i}^h|$, $b_j^h=|F_{2j}^h|$, and $\omega_h=\omega$.
Vandermonde's identity and its discrete mixed difference give
\begin{gather}
  \kappa_q(a,b)=\binom{a+b}{q}-\binom aq-\binom bq
  =\sum_{\ell=1}^{q-1}\binom a\ell\binom b{q-\ell},
  \label{eq:vandermonde_mixed}\\[-1mm]
  \kappa_q(a,b)+\kappa_q(a',b')-\kappa_q(a,b')-\kappa_q(a',b)
  =\sum_{x=a}^{a'-1}\sum_{y=b}^{b'-1}\binom{x+y}{q-2}\ge0
  \label{eq:reverse_monge}
\end{gather}
for $a\le a'$ and $b\le b'$. Same-order pairing maximizes the total mass at
$T_q^+$, while opposite-order pairing minimizes it at $T_q^-$. Writing the
released local mass as $M_q^{\rm obs}$ gives sharp erased-mass bounds:
\begin{equation}
  \tau_q^-=1-M_q^{\rm obs}/T_q^-,\qquad
  \tau_q^+=1-M_q^{\rm obs}/T_q^+.
  \label{eq:tau_bounds}
\end{equation}
The zero-total convention and explicit formulas for $T_q^\pm$ appear in
Appendix~\ref{app:proof-sorting}.  Point identification holds exactly when
the mixed difference vanishes throughout every stratum.  Sorting computes both
attainable endpoints in $O(\sum_hm_h\log m_h)$ time by sorting fragment sizes within each stratum.

To determine when these bounds coincide, take the constant-one test. Its stratum cost is
$C^h_{ij}=\omega_h\kappa_q(a_i^h,b_j^h)$. Equation~\eqref{eq:reverse_monge} yields the exact
rectangular $2\times2$ cross-difference within each fixed type-and-weight stratum:
\begin{equation}
  C^h_{ij}+C^h_{i'j'}-C^h_{ij'}-C^h_{i'j}
  =\omega_h\sum_{x=a_i^h}^{a_{i'}^h-1}
  \sum_{y=b_j^h}^{b_{j'}^h-1}
  \binom{x+y}{q-2}.
  \label{eq:qmass_circulation_proof}
\end{equation}
Theorem~\ref{thm:completion_polytope} makes vanishing of all these quantities
necessary and sufficient for point identification. If both lists vary, take
$a_{i'}^h=\max_i a_i^h$ and $b_{j'}^h=\max_j b_j^h$ with strictly smaller
$a_i^h,b_j^h$. When $a_{i'}^h+b_{j'}^h\ge q$, the summand at
$(x,y)=(a_{i'}^h-1,b_{j'}^h-1)$ is positive, so point identification fails. For
$q=2$ every nonempty summand equals one. Thus, at $q=2$, all
cross-differences vanish exactly when at least one size list is constant in each
stratum. For general $q$, if one list is constant then all rectangles are
empty in that direction. If both vary but
$\max_i a_i^h+\max_j b_j^h<q$, every mixed $q$-node-subset count is zero. Together with
the preceding strict argument, these are the only point-identified cases.
\hfill$\square$

\subsection{Structured Tractable Class: Anti-Monge Bounds}
\label{app:proof-sorting}

Vandermonde's identity gives Equation~\eqref{eq:vandermonde_mixed} by removing the two
within-fragment terms. For $b_q(n)=\binom nq$, Pascal's identity gives
$b_q(n+2)-2b_q(n+1)+b_q(n)=\binom n{q-2}$. Summing this difference over
$x=a,\ldots,a'-1$ and $y=b,\ldots,b'-1$ proves Equation~\eqref{eq:reverse_monge},
since row-only and column-only terms cancel.

Within each stratum, sort $a_1^h\le\cdots\le a_{m_h}^h$. If a proposed matching pairs $a_i^h\le
  a_{i'}^h$ with $b_j^h>b_{j'}^h$, the anti-Monge inequality shows that swapping the two targets
weakly increases the total mixed count. Repeating adjacent swaps proves that equal-order
pairing maximizes it. Applying the same argument to the reversed $b$ order proves the minimum.
Non-negative retained weights preserve the inequality within each fixed stratum, and strata
add. With invariant local mass $M_q^{\rm obs}$, the completed mass under
$\boldsymbol\varpi=(\varpi_h)_h$ is $T_q(\boldsymbol\varpi)=M_q^{\rm obs}+
  \sum_h\omega_h\sum_i\kappa_q(a_i^h,b_{\varpi_h(i)}^h)$. Since $1-M_q^{\rm obs}/T_q$ is
nondecreasing in $T_q$ for $T_q>0$, the claimed truncation bounds follow. If $T_q^-=0$, then
$M_q^{\rm obs}=0$: the zero-total convention gives the lower endpoint zero, while the upper
endpoint is one if $T_q^+>0$ and zero otherwise, exactly as in Equation~\eqref{eq:tau_bounds}.
Sorting the fragment-size lists computes both sharp endpoints in $O(m\log m)$ time.
\hfill$\square$

\subsection{Multi-client Computational Boundary}
\label{app:three-client-boundary}

For three clients, anchoring the first fragment index leaves two permutations
$\varpi,\varsigma\in\operatorname{Sym}(m)$ and the objective
\begin{equation}
  \Phi_{\rm obs}+\sum_{i=1}^{m}C_{i,\varpi(i),\varsigma(i)}.
  \label{eq:three_index_assignment}
\end{equation}
For any finite tensor $T$, create unit-weight
singleton fragments $\{a_i\},\{b_j\},\{c_k\}$ in one type/weight stratum, set $q=3$, fix one
type $r$, define $\psi_{3,r}(\{a_i,b_j,c_k\},X_{\{a_i,b_j,c_k\}})=T_{ijk}$, and set the test
function to zero on every other triple. Each completed hyperedge has exactly one $q$-node
subset, so its cost tensor is exactly $T$. The $m^3$ test-function table is an explicit
polynomial-size part of the endpoint instance. For a three-dimensional-matching instance,
choose $T_{ijk}=0$ on an allowed triple and $T_{ijk}=1$ otherwise. A completion with value zero
exists if and only if the three-dimensional-matching instance has a perfect matching. The
latter is NP-complete \citep{karp1972reducibility}. Hence the lower endpoint is NP-hard for
three clients. More clients contain this construction by adding singleton fragment lists and
defining the $q=3$ test to be zero on every triple involving an added client. Their
permutations leave the objective unchanged. Together, all these prove hardness for general linear
co-membership targets.

For a sparse binary-reward target, correspondence reduces to hypergraph matching. The
following result separates the effects of client count and desired matching size.

\begin{theorem}[Parameterized complexity of hyperedge matching]
  \label{thm:span-parameterized}
  In one exact all-client stratum, let each of $K\ge2$ clients have $m$
  distinct singleton fragments, with one common type and unit weight. Set $q=K$.
  The input lists favorable tuples $\mathcal E_+\subseteq[m]^K$ and assigns
  the order-$K$ query value one to them and zero to all other tuples.
  This sparse default-zero encoding includes all client-row lists, so
  $Km$ is bounded by the input length.
  For permutations $\varpi_2,\ldots,\varpi_K$ of $[m]$, define
  \begin{equation}
    \Phi_+(\mathcal E_+)=
    \max_{\varpi_2,\ldots,\varpi_K}
    \sum_{i=1}^m
    \mathbf1\{(i,\varpi_2(i),\ldots,\varpi_K(i))\in\mathcal E_+\}.
    \label{eq:sparse-reward-endpoint}
  \end{equation}
  For $1\le b\le m$, deciding $\Phi_+\ge b$ is polynomial-time solvable
  when $K=2$ and NP-complete for every fixed $K\ge3$.
  It is fixed-parameter tractable in $Kb$, and hence in $b$ for fixed $K$.
  When $K$ is unbounded it is W[1]-complete parameterized by $b$.
  A direct randomized algorithm has one-sided error at most
  $\delta_{\rm cc}\in(0,1)$ and time
  $O((2e)^{Kb}\operatorname{poly}(L_{\rm in})
    \log(2/\delta_{\rm cc}))$, where $L_{\rm in}$ is the input length.
\end{theorem}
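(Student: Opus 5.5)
The plan is to reduce the endpoint question to a matching problem and then invoke standard complexity results. Because each client's fragments are distinct singletons with one common type and unit weight, every completion $(\varpi_2,\ldots,\varpi_K)$ yields $m$ disjoint $K$-tuples $(i,\varpi_2(i),\ldots,\varpi_K(i))$, one using each row of each client. The objective in Equation~\eqref{eq:sparse-reward-endpoint} counts how many of these tuples lie in $\mathcal E_+$. Let $\mathcal G$ be the $K$-partite $K$-uniform hypergraph on the disjoint vertex sets $[m]\times\{k\}$ whose edges are the tuples in $\mathcal E_+$.

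The first step is the equivalence $\Phi_+\ge b$ if and only if $\mathcal G$ has a matching of size $b$. In one direction, the rewarded tuples of a completion are pairwise disjoint, since the $\varpi_k$ are bijections, so they form a matching. Conversely, given $b$ disjoint favorable tuples, define the $\varpi_k$ on their first coordinates as prescribed. The unused rows in each client number $m-b$, so we extend the $\varpi_k$ to full permutations by any bijection between leftover rows; extra rewards only help.

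With this equivalence, the individual claims follow from known results.
\begin{itemize}
\item For $K=2$, the problem is bipartite maximum matching, solvable by Hopcroft--Karp or the Hungarian method in polynomial time.
\item For fixed $K\ge3$, membership in NP is immediate, since a certificate is $b$ tuples checkable in polynomial time. For hardness, take 3-dimensional matching \citep{karp1972reducibility} with $b=m$. For $K>3$, pad every allowed triple with fresh coordinates $(i,i,\ldots,i)$ in the extra clients; disjointness in the padded coordinates is automatic whenever the first coordinates are distinct.
\item For the randomized FPT algorithm in $Kb$, use color coding \citep{DBLP:journals/jacm/AlonYZ95}. Color all $Km$ vertices uniformly with $Kb$ colors. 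A fixed size-$b$ matching is colorful with probability $(Kb)!/(Kb)^{Kb}\ge e^{-Kb}$. Detecting a colorful matching by dynamic programming over color subsets costs $2^{Kb}\operatorname{poly}(L_{\rm in})$: process the edges and maintain the family of achievable used-color sets, each edge contributing its own color set when disjoint from the current one. Repeating $e^{Kb}\log(2/\delta_{\rm cc})$ times gives one-sided error at most $\delta_{\rm cc}$ and the stated $(2e)^{Kb}$ bound. Derandomization via perfect hash families, or the known deterministic FPT algorithms for $K$-set packing, gives genuine FPT.
\end{itemize}

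The main obstacle is W[1]-completeness for unbounded $K$, which requires both membership and hardness in the parameter $b$. For hardness, reduce from Multicolored Clique with parameter $\kappa$. Set $b=\kappa$ and $K=\binom{\kappa}{2}+\kappa$ or a comparable number of clients. For a vertex $v$ in color class $c$, introduce a tuple that occupies a client slot encoding ``class $c$ chooses $v$'' and the slots encoding each pair $(c,c')$ with a row determined by $v$'s adjacency. Two chosen tuples then collide exactly when the corresponding vertices are nonadjacent, while the permutation structure forces one tuple per class. The delicate part is making rows mean ``compatible'' rather than ``conflicting'' across different clients, using complementary edge-indexed rows, while keeping $m$ polynomial and the parameter $b=\kappa$. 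I would first try the standard Set Packing W[1]-hardness reduction, where Set Packing parameterized by solution size is W[1]-complete when set sizes are unbounded, and then make it $K$-partite by giving each ground element its own client with a dummy row for absence. For membership, I would use the known membership of $b$-Set Packing in W[1], via a reduction to short NTM acceptance that guesses $b$ tuples and verifies disjointness in $O(b^2)$ steps, after encoding each tuple as a set of (client,row) pairs padded with unique dummy elements for unused rows.
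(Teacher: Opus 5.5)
Your reduction to a $b$-matching in the $K$-partite hypergraph, the $K=2$ case, NP-completeness via perfect three-dimensional matching with the first coordinate copied into extra clients, and the color-coding algorithm with its $e^{-Kb}$ colorfulness bound and $e^{Kb}\log(2/\delta_{\rm cc})$ repetitions all agree with the paper's proof. Your W[1] membership via Set Packing is equivalent to the paper's conflict-graph reduction to Independent Set.

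The gap is W[1]-hardness, which you call the main obstacle but do not establish. Your primary sketch cannot work. It reduces from Multicolored Clique using $K=\binom{\kappa}{2}+\kappa$, or a comparable number, of clients. But the problem is fixed-parameter tractable in $Kb$, by the part of the statement you already proved. So any parameterized reduction producing $K\le f(\kappa)$ clients would put Multicolored Clique in FPT, forcing FPT $=$ W[1]. The structural reason is that two tuples collide at a client only by occupying the same row there. The collision relation at one client is therefore an equivalence relation, and the non-adjacency pattern between two color classes would have to be a disjoint union of bicliques, which fails in general. The number of clients must grow with the instance, roughly one per conflicting pair.

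Your fallback, one client per ground element of a Set Packing instance, is the right idea and is essentially the paper's construction. The paper goes directly from Independent Set on $H=(U,F_H)$ with $m=|U|$. A private client gives each vertex $v$ a unique row in its tuple $t_v$. At the client for each edge $\{u,v\}\in F_H$, the tuples $t_u,t_v$ share one row and every other tuple gets its own distinct row.

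The detail your phrase ``a dummy row for absence'' leaves open is the one that makes the reduction work. At the client for element $x$, each set not containing $x$ must receive its own distinct row. A single shared absence row would make any two sets that both miss some element collide, so disjointness of tuples would no longer match disjointness of sets. With distinct absence rows, $m=|\mathcal S|$ rows per client suffice, plus a harmless extra client if there are fewer than two elements. Then $K$ is polynomial, $b$ is preserved, and $b$ disjoint favorable tuples exist exactly when $b$ pairwise disjoint sets do. Combined with the standard W[1]-hardness of Set Packing by solution size, this completes the argument.
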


\begin{proof}
  Treat client-row pairs $(k,i)$ as vertices in $K$ disjoint parts.
  A favorable tuple is a $K$-uniform hyperedge containing one vertex from
  each part. Rewarded tuples in a completion are pairwise disjoint.
  Conversely, any $b$ disjoint favorable tuples extend to a full completion:
  each part has $m-b$ unused rows, which can be paired arbitrarily.
  Thus the endpoint decision is exactly the existence of a $b$-edge
  matching in the explicitly supplied $K$-partite hypergraph.
  For $K=2$ this is bipartite matching. For $K=3,b=m$ it is perfect
  three-dimensional matching \citep{karp1972reducibility}. For larger fixed
  $K$, copy the first coordinate into each new coordinate.
  This preserves disjointness and proves NP-hardness. A list of $b$ tuples
  is a polynomially checkable witness, establishing NP membership.

  For fixed-parameter tractability, put $s=Kb$ and color every client-row independently and
  uniformly with $s$ colors. The $s$ distinct rows of any fixed witness all receive different
  colors with probability $s!/s^s\ge e^{-s}$. Discard tuples whose vertices repeat a color. Use a
  Boolean dynamic program over color subsets, starting from the empty subset and adjoining the
  $K$ colors of any tuple disjoint from the current subset. Reaching all $s$ colors certifies
  exactly $b$ disjoint tuples. Disjoint color masks cannot share a row, so every reported witness
  is valid. One trial costs $O(2^s|\mathcal E_+|K+Km)$. Repeating $\lceil e^s\log(2/\delta_{\rm
      cc})\rceil$ times yields the claimed one-sided error and running time. Perfect-hash families
  derandomize color-coding to give deterministic fixed-parameter tractability
  \citep{DBLP:journals/jacm/AlonYZ95}.

  For W[1] membership, form the conflict graph with one vertex per favorable tuple and an edge
  between tuples sharing any client-row. The desired matching is an independent set of size $b$,
  a parameter-preserving reduction to Independent Set \citep{DBLP:journals/tcs/DowneyF95}. For hardness, take
  an Independent Set instance $H=(U,F_H)$ with $N=|U|$ vertices and target $b$. Create one
  private client and one client for each edge of $H$, adding a harmless client if needed to make
  $K\ge2$. Every client has $m=N$ rows. For each $v\in U$, list one favorable tuple $t_v$. Its
  private-client row is unique to $v$. At the client for edge $\{u,v\}$, tuples $t_u,t_v$ share a
  row, and all other tuples use distinct rows different from that shared row. Exactly $N-1$ rows
  are used there, which is feasible. A harmless client assigns distinct rows to all tuples. Two
  favorable tuples share a row precisely when their vertices in $H$ are adjacent. The
  construction is polynomial and preserves $b$. Therefore an independent set of size $b$ exists
  precisely when the constructed endpoint is at least $b$, proving W[1]-hardness and
  completeness.
\end{proof}

Here candidate hyperedges have rank $K$. The fixed-rank and variable-rank classifications
concern this sparse reward problem, including its supplied query function. They do not
classify arbitrary dense costs or neural-network training. For any client count,
separation across a designated anchor gives the following tractable exception.

\begin{proposition}[Additive multi-client costs with one designated anchor]
  \label{prop:star_separable_multiclient}
  Consider an exact all-client stratum with $m$ rows per client, one row from each
  compatible original hyperedge at every client, and independently feasible
  one-to-one matchings anchored at client 1.  A fully additive tensor
  $C_{i_1,\ldots,i_K}=\sum_k a^{(k)}_{i_k}$ is point identified.  When the cost
  adds across the anchor--client pairs,
  $C_{i_1,\ldots,i_K}=a_{i_1}+\sum_{k=2}^K C^{(k)}_{i_1i_k}$, each sharp endpoint
  is the constant $\sum_i a_i$ plus the corresponding optima of $K-1$ independent
  bipartite assignments.  Thus each endpoint costs
  $O(\sum_h(K-1)m_h^3)$ by solving one independent bipartite assignment
  for each anchor-client pair within every stratum.
\end{proposition}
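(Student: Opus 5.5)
The plan is to reduce everything to a parametrization of compatible completions by $K-1$ independent permutations, mirroring the two-client proof of Theorem~\ref{thm:completion_polytope}. Under the stated assumptions (exact all-client stratum, one row per client from each original hyperedge, independently feasible one-to-one matchings anchored at client 1), every compatible completion within stratum $h$ is indexed by a tuple $(\varpi_2,\ldots,\varpi_K)\in\operatorname{Sym}(m_h)^{K-1}$. Here $\varpi_k(i)$ is the client-$k$ row glued to anchor row $i$. Conversely, the independent-feasibility hypothesis says every such tuple is realized by some $\mathcal H'\in\mathcal F_{\rm A}(y_{\rm A},\Gamma)$. As in Appendix~\ref{app:proof-completion}, the matching-dependent part of the target is then $\sum_{i=1}^{m}C_{i,\varpi_2(i),\ldots,\varpi_K(i)}$, added to the matching-invariant $\Phi_{\rm obs}$.

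\emph{Fully additive case.} Substituting $C_{i_1,\ldots,i_K}=\sum_k a^{(k)}_{i_k}$ gives
\[
\sum_i\Bigl(a^{(1)}_i+\sum_{k\ge2}a^{(k)}_{\varpi_k(i)}\Bigr)=\sum_k\sum_j a^{(k)}_j ,
\]
because each $\varpi_k$ is a bijection of $[m]$. The value is therefore the same for every compatible completion, so the identified set is a singleton and the target is point identified, independently of any law on matchings.

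\emph{Anchor-additive case.} Substituting $C=a_{i_1}+\sum_{k\ge2}C^{(k)}_{i_1i_k}$ gives
\[
\sum_i a_i+\sum_{k=2}^K\sum_i C^{(k)}_{i,\varpi_k(i)} .
\]
This objective is a sum of terms, each depending on exactly one $\varpi_k$. Since the $\varpi_k$ range over a product set, the maximum (respectively minimum) over the product equals the sum of the coordinatewise maxima (minima). Each coordinatewise problem is a two-client linear assignment with cost $C^{(k)}$. By the Birkhoff--von Neumann argument used for Theorem~\ref{thm:completion_polytope}, its optimum is attained at a permutation, so each endpoint is attainable and hence sharp. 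Summing over strata, which vary independently, gives the stated endpoint formula. Running the Hungarian algorithm once per anchor--client pair per stratum gives $O(\sum_h(K-1)m_h^3)$.

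\textbf{Main obstacle.} The only delicate step is justifying that the compatible class is exactly the full product $\operatorname{Sym}(m_h)^{K-1}$, with no cross-constraints between the $\varpi_k$. Without that, separability of the objective would not imply separability of the optimization. I would take this directly from the hypotheses that each client holds one row of every original hyperedge and that the anchored matchings are independently feasible. I would also check that distinct tuples that produce the same hypergraph are harmless: they can only repeat values in the identified set, so they do not affect the endpoints.
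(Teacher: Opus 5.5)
Your proposal is correct and follows essentially the same route as the paper. The paper's proof also takes the feasible set to be $\operatorname{Sym}(m)^{K-1}$ after anchoring client 1, and it uses bijectivity of each $\varpi_k$ to collapse the fully additive sum to $\sum_k\sum_j a^{(k)}_j$. It then splits the anchor-additive objective into $K-1$ independent assignments, whose integrality gives sharpness and the $O(\sum_h(K-1)m_h^3)$ Hungarian cost. Your remark that the product structure of the feasible set is what justifies separating the optimization is something the paper simply asserts from its hypotheses.
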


\begin{proof}[Proof of Proposition~\ref{prop:star_separable_multiclient}]
  Anchoring client 1 makes the feasible set $\operatorname{Sym}(m)^{K-1}$.
  For a fully additive cost,
  \begin{align}
    \sum_{i=1}^m C_{i,\varpi_2(i),\ldots,\varpi_K(i)}
    &=\sum_{i=1}^m a_i^{(1)}
    +\sum_{k=2}^K\sum_{i=1}^m a_{\varpi_k(i)}^{(k)}\nonumber\\
    &=\sum_{k=1}^K\sum_{j=1}^m a_j^{(k)},
    \label{eq:fully_additive_identified}
  \end{align}
  because each $\varpi_k$ is a permutation, proving identification.
  Anchor-additive costs instead give
  \begin{equation}
    \sum_{i=1}^m a_i+\sum_{k=2}^K
    \sum_{i=1}^m C^{(k)}_{i,\varpi_k(i)}.
    \label{eq:star_assignment_factorization}
  \end{equation}
  Each summand has its own feasible permutation, so the total optimum is the sum
  of independent assignment optima. Their integrality proves sharpness and gives
  $O(\sum_h(K-1)m_h^3)$ time. Interactions among non-anchor indices would break
  this separation.
\end{proof}

\subsection{Boolean-zeta Inversion and Partition Censoring}
\label{app:proof-zeta}

The assignment results describe uncertainty about how fragments fit together. We next
justify the co-membership coordinates used to measure the information removed by
partitioning. Before partitioning, the full collection determines aggregate support weights.
After partitioning, each coordinate is either retained exactly or erased, under the
observation assumptions stated in Appendix~\ref{app:framework}.

\begin{proposition}[Invertible Boolean-zeta representation and censoring]
  \label{prop:zeta_mask}
  For every type $r\in\mathcal R$, the complete nonempty collection
  $M_{|S|,r}^{\mathcal H}(S)$
  determines the aggregate support--type weights through
  \begin{equation}
    w_r(A)=\sum_{B\supseteq A}(-1)^{|B|-|A|}
    M_{|B|,r}^{\mathcal H}(B).
    \label{eq:mobius}
  \end{equation}
  If multiplicity, exact retained weight, and type are observed, then
  Equation~\eqref{eq:mask_law} holds for every $q\ge c$.
\end{proposition}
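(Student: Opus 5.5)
The proposition has two parts, and I will prove them separately: first Möbius inversion on the Boolean lattice, then censoring by checking each coordinate.

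\emph{Inversion.} Fix a type $r$ and a nonempty support $A\subseteq V$. Substitute the definition \eqref{eq:q_measure}, $M_{|B|,r}^{\mathcal H}(B)=\sum_{D\supseteq B}w_r(D)$, into the right-hand side of \eqref{eq:mobius}. Exchanging the two finite sums gives
\[
\sum_{D\supseteq A}w_r(D)\sum_{A\subseteq B\subseteq D}(-1)^{|B|-|A|}.
\]
The inner sum ranges over subsets of $D\setminus A$, so it equals $(1-1)^{|D\setminus A|}$. This is $1$ when $D=A$ and $0$ otherwise, which leaves exactly $w_r(A)$.

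Two bookkeeping points need checking. First, all sums are finite because $V$ is finite. Second, hyperedge supports $A_e$ are nonempty, so restricting to nonempty $S$ loses nothing: only $q\ge1$ coordinates are used, matching the range $1\le q\le|V|$. Hence the full collection determines $w_r(\cdot)$ on every nonempty support. The map is also invertible, since the zeta transform and this Möbius formula are mutually inverse linear maps.

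\emph{Censoring.} Fix $q\ge c$, a type $r$, and $S\in\binom Vq$. I will show that the quantity recomputed from the released fragments equals $\widetilde M_{q,r}(S,z)$. There are two cases.

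If $z$ is constant on $S$, say equal to $k$, then $S\subseteq V_k$. For every $e$ with $A_e\supseteq S$, the fragment $F_{e,k}=A_e\cap V_k$ contains $S$, so $|F_{e,k}|\ge q\ge c$. That fragment is therefore retained with its exact type and weight, and a duplicate instance appears as its own row because multiplicity is retained. Conversely, any retained row at client $k$ whose fragment contains $S$ comes from a hyperedge with $A_e\supseteq S$, and each hyperedge yields exactly one row per client. Summing $w_e$ over retained type-$r$ rows at client $k$ whose fragments contain $S$ therefore reproduces $\sum_{A\supseteq S}w_r(A)=M_{q,r}^{\mathcal H}(S)$.

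If $z$ is not constant on $S$, then no single fragment $F_{e,k}\subseteq V_k$ can contain $S$. The coordinate is then unobservable from any single client's table and is recorded as $0$, which agrees with \eqref{eq:mask_law}.

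The main obstacle is stating precisely what "retained" means in the second part. The argument needs the injective correspondence between instances and rows, which is where multiplicity is used, together with exact weight and type. Without these, deduplication or binning would miscount, as the paper notes in Appendix~\ref{app:framework}. I will make this explicit by defining the released coordinate as a sum over rows and verifying the bijection with hyperedge instances containing $S$.
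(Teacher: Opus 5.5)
Your proof is correct and follows the paper's argument. The paper cites Boolean-lattice M\"obius inversion with $\mu(S,A)=(-1)^{|A|-|S|}$, which your computation $(1-1)^{|D\setminus A|}$ verifies directly, and its censoring step is the same coordinatewise observation that a copy of $S$ from instance $e$ lands in exactly one client's fragment precisely when $z$ is constant on $S$. Your explicit check that $|F_{e,k}|\ge q\ge c$ guarantees retention is a clarification the paper leaves implicit; note that each instance yields \emph{at most} one row per client, not exactly one.
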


\begin{proof}

  Fix a type $r$ and regard $w_r$ as a function on the Boolean lattice $2^V$, with zero mass on
  the empty relation. Equation~\eqref{eq:q_measure} is its upper zeta transform:
  $M_{|S|,r}^{\mathcal H}(S)=\sum_{A\supseteq S}w_r(A)$. The M\"obius function on the interval
  $[S,A]$ is $(-1)^{|A|-|S|}$, so Boolean-lattice M\"obius inversion gives
  Equation~\eqref{eq:mobius}. This inversion applies separately to each observed relation type.

  For the mask, a copy of $S$ contributed by relation instance $e$ appears in a local fragment if
  and only if all nodes in $S$ have the same client label. If they do, exactly one client
  contains that copy. Otherwise none does. Summing over all $A_e\supseteq S$ proves
  Equation~\eqref{eq:mask_law}. The complete and retained order-$q$ masses therefore satisfy the
  two identities below for every observed type $r$ and order $q$:
  \begin{align}
    \|M_{q,r}^{\mathcal H}\|_1
    &=\sum_{e:r_e=r}w_e\binom{|A_e|}{q},\nonumber\\
    \|\widetilde M_{q,r}(\cdot,z)\|_1
    &=\sum_{e:r_e=r}w_e\sum_{k=1}^K\binom{|A_e\cap V_k|}{q}.
    \label{eq:mass_counting}
  \end{align}
  Their ratio is exactly the retained order-$q$ co-membership mass
  in~\eqref{eq:tau} for type $r$.
\end{proof}

\subsection{Horvitz--Thompson Estimation}
\label{app:proof-ht}

The masking identity immediately suggests inverse-probability weighting under a known
random partition design. It also reveals the cost of this correction: coordinates that
rarely remain within one client acquire large variance. The theorem below gives both
statements for each coordinate and then sums their squared errors.

\begin{theorem}[Known randomized assignment]
  \label{thm:ht}
  Fix a latent hypergraph $\mathcal H$, an order $c\le q\le|V|$, a type
  $r\in\mathcal R$, and a node subset $S\in\binom Vq$. Let
  $\mathsf Z:V\to[K]$ follow the declared randomized partition design and assume
  $\pi_S=\Pr\{\mathsf Z|_S\text{ is constant}\}>0$. The coordinate estimator
  satisfies
  \begin{align}
    \widehat M_{q,r}(S)&=\widetilde M_{q,r}(S,\mathsf Z)/\pi_S,
    &\mathbb E_{\mathsf Z}\widehat M_{q,r}(S)&=M_{q,r}^{\mathcal H}(S),
    \label{eq:ht}\\
    \operatorname{Var}_{\mathsf Z}[\widehat M_{q,r}(S)]
    &=M_{q,r}^{\mathcal H}(S)^2(\pi_S^{-1}-1).&&
    \label{eq:ht_variance}
  \end{align}
  If every q-set has positive inclusion probability, the mean squared Frobenius error is
  \begin{equation}
    \mathbb E_{\mathsf Z}
    \|\widehat M_{q,r}-M_{q,r}^{\mathcal H}\|_F^2
    =\sum_{S\in\binom Vq}M_{q,r}^{\mathcal H}(S)^2(\pi_S^{-1}-1).
    \label{eq:ht_mse}
  \end{equation}
  Under uniform independent assignment, the relative coordinate variance of a
  nonzero coordinate is $K^{q-1}-1$.  If $\pi_S=0$, no estimator measurable from
  the censored order-$q$ co-membership array is design-unbiased for that coordinate
  over all type-$r$ hypergraphs with size-$q$ relation supports.
\end{theorem}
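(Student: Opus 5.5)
The whole argument rests on the masking identity of Proposition~\ref{prop:zeta_mask}. For fixed $\mathcal H$ and $q\ge c$ it gives $\widetilde M_{q,r}(S,\mathsf Z)=M_{q,r}^{\mathcal H}(S)\,\mathbf 1\{\mathsf Z|_S\text{ constant}\}$. For each coordinate, the censored value is therefore the deterministic constant $M_{q,r}^{\mathcal H}(S)$ times a Bernoulli indicator with success probability $\pi_S$.

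\textbf{Moments of a single coordinate.} Dividing the censored value by $\pi_S>0$ gives the estimator. Its mean is $M_{q,r}^{\mathcal H}(S)\pi_S/\pi_S=M_{q,r}^{\mathcal H}(S)$, which proves unbiasedness. For the variance, a Bernoulli indicator satisfies $\operatorname{Var}(\mathbf 1)=\pi_S(1-\pi_S)$. Hence
\[
\operatorname{Var}\widehat M_{q,r}(S)=M_{q,r}^{\mathcal H}(S)^2\pi_S(1-\pi_S)/\pi_S^2=M_{q,r}^{\mathcal H}(S)^2(\pi_S^{-1}-1).
\]
The Frobenius mean squared error needs no independence across coordinates. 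Because each coordinate is unbiased, $\mathbb E\|\widehat M-M\|_F^2=\sum_S\mathbb E(\widehat M(S)-M(S))^2=\sum_S\operatorname{Var}\widehat M(S)$ by linearity of expectation, which is Equation~\eqref{eq:ht_mse}.

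\textbf{Uniform independent assignment.} Here $p_{vk}=1/K$, so Equation~\eqref{eq:node_dependent_pi} gives $\pi_S=K\cdot K^{-q}=K^{1-q}$. The relative variance of a nonzero coordinate is $\operatorname{Var}/M^2=\pi_S^{-1}-1=K^{q-1}-1$.

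\textbf{Impossibility when $\pi_S=0$.} This is the only step that needs a construction, and I expect it to be the main obstacle. The difficulty is making the measurability class precise: I take the estimator to be any function of the censored array $\widetilde M_{q,r}(\cdot,\mathsf Z)$, possibly together with $\mathsf Z$ itself. When $\pi_S=0$, the nodes of $S$ almost surely do not share a client, so $\widetilde M_{q,r}(S,\mathsf Z)=0$ a.s. I would compare two hypergraphs whose only difference is at $S$: $\mathcal H_0$, with no type-$r$ relation on support $S$, and $\mathcal H_1$, which adds a single type-$r$ instance with $A_e=S$ and weight $1$. Both belong to the stated class of type-$r$ hypergraphs with size-$q$ supports, since adding an instance with support exactly $S$ changes $M_{q,r}$ only at coordinate $S$. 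Consequently the censored arrays under $\mathcal H_0$ and $\mathcal H_1$ coincide almost surely, so any measurable estimator has the same law, and in particular the same expectation, under both. Design-unbiasedness would require that expectation to equal both $0$ and $1$, a contradiction.
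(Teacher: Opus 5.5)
Your proposal is correct and follows the paper's proof essentially step for step: the Bernoulli-indicator representation $\widetilde M_{q,r}(S,\mathsf Z)=\mathbf 1\{\mathsf Z|_S\text{ is constant}\}M_{q,r}^{\mathcal H}(S)$ gives the mean and variance, and linearity of expectation, with no independence needed, gives the Frobenius error. The uniform case follows from $\pi_S=K^{1-q}$, and the impossibility claim follows by comparing a hypergraph with and without an extra type-$r$ edge on $S$. Letting the estimator also depend on $\mathsf Z$ is a harmless strengthening, because the design law does not depend on $\mathcal H$.
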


\begin{proof}
  Let $I_S(\mathsf Z)=\mathbf1\{\mathsf Z|_S\text{ is constant}\}$. Then
  $I_S(\mathsf Z)\sim\mathrm{Bernoulli}(\pi_S)$ and
  $\widetilde M_{q,r}(S,\mathsf Z)=
    I_S(\mathsf Z)M_{q,r}^{\mathcal H}(S)$, which gives
  \eqref{eq:ht}--\eqref{eq:ht_variance}.  Linearity of expectation applied to the
  sum of squared coordinate errors gives~\eqref{eq:ht_mse}.  Dependence among the
  indicators is allowed.  Uniform assignment has
  $\pi_S=K(1/K)^q=K^{1-q}$.  For $\pi_S=0$, compare a q-uniform hypergraph with
  and without a weight-$a$ edge on $S$.  Their censored order-$q$ co-membership arrays agree
  almost surely but the target coordinate differs by $a$.
\end{proof}

\subsection{Observational Equivalence and a Two-point Bound}
\label{app:proof-two-point}

Randomized-design unbiasedness concerns repeated releases of a fixed structure. With one
release held fixed, the question is different: can two compatible structures have different
target values? If they can, any estimator based on that release has the same output law
under both structures and must incur error on at least one. We first state the resulting
bound for the learner's coarsened release, then prove the underlying two-point argument.

\begin{corollary}[Identification-to-learning bridge under release coarsening]
  \label{cor:learner-fiber-lower-bound}
  Let $y_{\rm L}=\rho(y_{\rm A})$ and hold the released training labels
  $Y^{\rm tr}$ fixed across all candidate structures. For the same fixed
  $z$ and $\Gamma$, define
  $\mathcal F_{\rm L}(y_{\rm L},\Gamma)=
    \{\mathcal H\in\mathcal M_\Gamma:
    \rho(\mathcal O_{z,\Gamma}(\mathcal H))=y_{\rm L}\}$.
  Deterministic coarsening gives
  $\mathcal F_{\rm A}(y_{\rm A},\Gamma)\subseteq
    \mathcal F_{\rm L}(y_{\rm L},\Gamma)$. Let
  $\phi:\mathcal M_\Gamma\to\mathbb R$ be a scalar structural functional whose
  attainable endpoints on $\mathcal F_{\rm A}(y_{\rm A},\Gamma)$ are
  $\phi_-$ and $\phi_+$. The minimax absolute risk over randomized rules
  $\widehat\phi=a(y_{\rm L},Y^{\rm tr},\xi)$, with algorithmic randomness
  $\xi$ independent of $\mathcal H$, satisfies this bound over the compatible class:
  \begin{equation}
    \inf_a\ \sup_{\mathcal H\in\mathcal F_{\rm L}(y_{\rm L},\Gamma)}
    \mathbb E_\xi|\widehat\phi-\phi(\mathcal H)|
    \ge(\phi_+-\phi_-)/2.
    \label{eq:coarsened_release_minimax}
  \end{equation}
  The proof uses the two endpoint structures, which remain indistinguishable after
  coarsening and therefore induce the same conditional output law.

\end{corollary}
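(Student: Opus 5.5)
The plan is a standard Le Cam two-point argument, run in three steps.

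\textbf{Step 1 (inclusion).} If $\mathcal O_{z,\Gamma}(\mathcal H)=y_{\rm A}$, then applying the deterministic map $\rho$ gives $\rho(\mathcal O_{z,\Gamma}(\mathcal H))=\rho(y_{\rm A})=y_{\rm L}$. Hence $\mathcal F_{\rm A}(y_{\rm A},\Gamma)\subseteq\mathcal F_{\rm L}(y_{\rm L},\Gamma)$.

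\textbf{Step 2 (endpoint structures and equal output laws).} Since $\phi_\pm$ are attainable on $\mathcal F_{\rm A}$, I pick $\mathcal H_-,\mathcal H_+\in\mathcal F_{\rm A}(y_{\rm A},\Gamma)$ with $\phi(\mathcal H_\pm)=\phi_\pm$. For example, Theorem~\ref{thm:completion_polytope} supplies these as the two optimal permutations in the two-client case. By Step 1 both lie in $\mathcal F_{\rm L}$, so they produce the identical learner release $y_{\rm L}$. The training labels $Y^{\rm tr}$ are held fixed by hypothesis, and $\xi\sim\nu$ is independent of $\mathcal H$. Therefore, for any rule $a$, the random variable $\widehat\phi=a(y_{\rm L},Y^{\rm tr},\xi)$ has one and the same law under $\mathcal H_-$ and under $\mathcal H_+$. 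This is the observational-equivalence step. The only care needed is that measurability of $a$ makes $\mathbb E_\xi|\widehat\phi-\phi(\mathcal H)|$ well defined, possibly as $+\infty$, which only helps the bound.

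\textbf{Step 3 (two-point bound).} For any fixed $a$, the supremum over $\mathcal F_{\rm L}$ is at least the maximum over $\{\mathcal H_-,\mathcal H_+\}$. A maximum is at least the average, so
\[
\sup_{\mathcal H\in\mathcal F_{\rm L}}\mathbb E_\xi|\widehat\phi-\phi(\mathcal H)|\ge\tfrac12\,\mathbb E_\xi\bigl[|\widehat\phi-\phi_-|+|\widehat\phi-\phi_+|\bigr]\ge\tfrac12(\phi_+-\phi_-).
\]
Here the expectation is taken under the common law from Step 2. The last inequality is the pointwise triangle inequality $|x-\phi_-|+|x-\phi_+|\ge\phi_+-\phi_-$. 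Taking the infimum over $a$ gives Equation~\eqref{eq:coarsened_release_minimax}.

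\textbf{Main obstacle.} The only delicate point is Step 2: the output law must genuinely not depend on $\mathcal H$ beyond $y_{\rm L}$. This requires that the labels and the protocol are fixed across candidates and that $\xi$ is independent of $\mathcal H$. Both are stated assumptions, so I would spell them out explicitly rather than leave them implicit.

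If the endpoints were suprema rather than attained values, I would instead take near-optimal structures $\mathcal H_\pm^\epsilon$ and let $\epsilon\to0$. The statement, however, assumes attainability, so this refinement is not needed.
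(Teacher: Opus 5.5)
Your proposal is correct and follows the paper's own proof. The paper uses the same three steps: inclusion of $\mathcal F_{\rm A}$ in $\mathcal F_{\rm L}$ because $\rho$ is deterministic, one common output law at the two endpoint structures (labels fixed, $\xi$ independent of $\mathcal H$), and the triangle-inequality two-point bound followed by the infimum over rules. The one difference is cosmetic: you draw the endpoint structures directly from the attainability hypothesis rather than citing Theorem~\ref{thm:completion_polytope}, which matches the corollary's general scalar $\phi$ slightly better.
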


\begin{proposition}[Fixed-release observational equivalence]
  \label{thm:identifiability}
  Fix $z$, $\Gamma$, and a metric space $(\Theta,d)$. A functional
  $\phi:\mathcal M_\Gamma\to\Theta$ is point identified from the deterministic
  analyst release if and only if it is constant on every nonempty class
  $\mathcal F_{\rm A}(y_{\rm A},\Gamma)$. If
  $\mathcal H_0,\mathcal H_1\in\mathcal F_{\rm A}(y_{\rm A},\Gamma)$ and
  $d(\phi(\mathcal H_0),\phi(\mathcal H_1))=\Delta$, every randomized estimator
  $\widehat\phi=a(y_{\rm A},\xi)$ with $\xi\sim\nu$ independent of the latent
  hypergraph obeys
  \begin{equation}
    \max_{b\in\{0,1\}}\mathbb E_{\xi}
    d(\widehat\phi,\phi(\mathcal H_b))\ge\Delta/2.
    \label{eq:two_point}
  \end{equation}
\end{proposition}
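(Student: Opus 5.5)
The proof has two parts: the equivalence between point identification and constancy on compatibility classes, and the two-point lower bound. Both rest on one fact. Because $\mathcal O_{z,\Gamma}$ is deterministic, the release $y_{\rm A}$ is the same for every $\mathcal H\in\mathcal F_{\rm A}(y_{\rm A},\Gamma)$. Hence any rule $a(y_{\rm A},\xi)$ with $\xi\sim\nu$ independent of $\mathcal H$ has an output law that depends on the latent hypergraph only through its class.

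For the equivalence, I would formalize point identification as the existence of a map $g$ with $\phi(\mathcal H)=g(\mathcal O_{z,\Gamma}(\mathcal H))$ for all $\mathcal H\in\mathcal M_\Gamma$. If such a $g$ exists, then every $\mathcal H'$ in the class of $y_{\rm A}$ has $\phi(\mathcal H')=g(y_{\rm A})$, so $\phi$ is constant on the class. Conversely, the nonempty classes partition $\mathcal M_\Gamma$, being the fibers of $\mathcal O_{z,\Gamma}$. On the image of $\mathcal O_{z,\Gamma}$, I would define $g(y_{\rm A})$ as the common value of $\phi$ on that fiber. This $g$ is well defined exactly when $\phi$ is constant on fibers, which gives the converse. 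If the paper's notion of identification also demands measurability of $g$, I would note that the release space is discrete (finite tables up to permutation plus features) or take the $\sigma$-field induced by the fibers. I expect this to be the only delicate point, and I would handle it by stating the definition explicitly.

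For the two-point bound, take $\mathcal H_0,\mathcal H_1$ in the same class, so both generate the same $y_{\rm A}$. Then $\widehat\phi=a(y_{\rm A},\xi)$ has the identical law under both hypotheses. By the triangle inequality, pointwise in $\xi$,
\[
d(\widehat\phi,\phi(\mathcal H_0))+d(\widehat\phi,\phi(\mathcal H_1))\ge \Delta .
\]
Taking $\mathbb E_\xi$, which is the same expectation under both hypotheses, gives $\mathbb E_\xi d(\widehat\phi,\phi(\mathcal H_0))+\mathbb E_\xi d(\widehat\phi,\phi(\mathcal H_1))\ge\Delta$. The larger of the two terms is then at least $\Delta/2$, which is Equation~\eqref{eq:two_point}. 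Measurability of $\xi\mapsto d(a(y_{\rm A},\xi),\theta)$ is assumed as part of $a$ being a randomized estimator. If the expectations are infinite, the bound holds trivially.

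Finally, Corollary~\ref{cor:learner-fiber-lower-bound} follows by the same argument applied to $\rho$. The inclusion $\mathcal F_{\rm A}\subseteq\mathcal F_{\rm L}$ holds because $\mathcal O_{z,\Gamma}(\mathcal H)=y_{\rm A}$ implies $\rho(\mathcal O_{z,\Gamma}(\mathcal H))=\rho(y_{\rm A})=y_{\rm L}$. The endpoint structures attaining $\phi_\pm$ therefore lie in $\mathcal F_{\rm L}$ and share $(y_{\rm L},Y^{\rm tr})$. Applying the two-point bound with $d=|\cdot|$ and $\Delta=\phi_+-\phi_-$ lower-bounds the supremum for every $a$, and hence the infimum over $a$.
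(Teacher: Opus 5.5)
Your proposal is correct and follows essentially the same route as the paper. You formalize identification as $\phi=g\circ\mathcal O_{z,\Gamma}$ and define $g$ fiberwise using constancy, and you obtain the two-point bound from the pointwise triangle inequality averaged under the common law $\nu$ of $\xi$. One small caveat on your measurability aside: the release contains the real-valued features $X_{V_k}$, so it is not literally discrete. Your fallback to the fiber-induced $\sigma$-field, or simply not requiring measurability of $g$ as the paper does, is enough.
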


\begin{proof}

  If $\phi=g\circ\mathcal O_{z,\Gamma}$, equal analyst releases imply equal values of $\phi$, so
  $\phi$ is constant on each compatible class. Conversely, if $\phi$ is constant on every class,
  define $g(y_{\rm A})$ as that common value for any $\mathcal H\in\mathcal F_{\rm A}(y_{\rm
        A},\Gamma)$. Constancy makes this definition independent of the representative.

  For the lower bound, $\widehat\phi=a(y_{\rm A},\xi)$ has the same distribution at both
  compatible structures. Pointwise, the triangle inequality yields the following deterministic
  relation between the two endpoint errors:
  \begin{equation}
    \Delta\le d(\widehat\phi,\phi(\mathcal H_0))+
    d(\widehat\phi,\phi(\mathcal H_1)).
  \end{equation}
  Taking expectation over the common law $\nu$ of $\xi$ shows that one endpoint
  risk reaches at least $\Delta/2$.
\end{proof}

\begin{proof}[Proof of Corollary~\ref{cor:learner-fiber-lower-bound}]
  If $\mathcal H\in\mathcal F_{\rm A}(y_{\rm A},\Gamma)$, applying $\rho$ to its
  analyst release gives $y_{\rm L}$, hence
  $\mathcal H\in\mathcal F_{\rm L}(y_{\rm L},\Gamma)$.
  Theorem~\ref{thm:completion_polytope} supplies compatible structures attaining
  $\phi_-$ and $\phi_+$. A rule measurable from
  $\sigma(Y_{\rm L},Y^{\rm tr},\xi)$ has one conditional output law under
  both endpoint structures because their training labels are held fixed.  Their
  two absolute errors sum to at least $\phi_+-\phi_-$, so one endpoint risk
  reaches $(\phi_+-\phi_-)/2$. This holds for every measurable rule, and taking the infimum gives
  Equation~\eqref{eq:coarsened_release_minimax}.
\end{proof}

\begin{corollary}[Matching under hyperedge aliasing]
  \label{cor:matching}
  In the two-client construction of Section~\ref{sec:identifiability}, assume
  each side's $m$ cross-status fragment rows are pairwise distinct and nonempty,
  share uninformative type and weight marks, and have no shared cross-client hyperedge
  identifier. A latent matching
  $\varpi\sim\operatorname{Unif}(\operatorname{Sym}(m))$ is
  recovered completely with probability at most $1/m!$.  An estimator returning
  $s$ proposed matches has at most $s/m$ correct matches in expectation.
\end{corollary}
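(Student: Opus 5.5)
The plan is to reduce both claims to elementary facts about a uniformly random permutation. We condition on the realized release $y_{\rm A}$. Under the stated assumptions (distinct nonempty rows, uninformative marks, no identifiers), the analyst release is the same for every $\varpi\in\operatorname{Sym}(m)$. Any estimator $\widehat\varpi=a(y_{\rm A},\xi)$ is therefore a function of data and algorithmic randomness whose law does not depend on the latent $\varpi$. This is the same observational-equivalence step used in Proposition~\ref{thm:identifiability}, applied to the full compatible class given by Theorem~\ref{thm:completion_polytope}. Distinctness of rows matters: it guarantees that different permutations give genuinely different completions, so the matching itself, and not just a completion up to relabeling, is the target. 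Independence of $\xi$ from $\mathcal H$, together with $\varpi$ being uniform and independent of the release, gives that $\widehat\varpi$ and $\varpi$ are independent.

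\emph{Complete recovery.} Condition on $\widehat\varpi$ and $\xi$. Then
\[
\Pr\{\widehat\varpi=\varpi\}=\mathbb E\bigl[\Pr\{\varpi=\widehat\varpi\mid\widehat\varpi,\xi\}\bigr]=1/m!,
\]
because $\varpi$ is uniform on the $m!$ permutations independently of $\widehat\varpi$. If the estimator may output a non-permutation, or abstain, we count that as failure, so the probability is at most $1/m!$.

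\emph{Proposed matches.} Let the estimator output $s$ proposed pairs $(i_\ell,j_\ell)$, $\ell\le s$, measurable in $(y_{\rm A},\xi)$. The number of correct matches is $\sum_\ell\mathbf 1\{\varpi(i_\ell)=j_\ell\}$. By independence and uniformity, $\Pr\{\varpi(i)=j\}=1/m$ for each fixed pair. Linearity of expectation then gives an expected count of exactly $s/m$, with no disjointness requirement on the proposals; if $s$ is itself random, condition on it first.

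The only step needing care is the independence claim. We must check that every field of the release, including row orders and the status indicators, is invariant under relabeling $\varpi$. This is where the assumption of uninformative type and weight marks is used: without it, strata would split and uniformity over the whole of $\operatorname{Sym}(m)$ would fail. Row order is handled by the row-permutation equivalence class $[T_k]_{\rm row}$ in Equation~\eqref{eq:observation}.
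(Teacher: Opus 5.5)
Your proposal is correct and follows essentially the paper's own argument. In both, invariance of the release under all $m!$ matchings makes the latent matching uniform given the observation (your phrasing: independent of the estimator), so complete recovery has probability at most $1/m!$, each proposed pair is correct with probability $1/m$, and linearity of expectation gives at most $s/m$ without any independence across proposals.
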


\begin{proof}
  The observation is invariant under all $m!$ permutations.  Conditional on it,
  each full permutation is equiprobable.  A fixed proposed pair is correct with
  probability $1/m$.  Linearity of expectation yields $s/m$ without an
  independence assumption across all returned proposals.
\end{proof}

\subsection{Released-data Factorization and Observable Spectral Approximation}
\label{app:proof-heat-stability}

We establish that FedASTRA's complete trajectory factors through the learner release,
then characterize the stability and expressiveness of its observable heat basis.

\begin{proposition}[Permutation invariance and heat stability]
  \label{prop:heat-release-stability}
  For every relation-column permutation $\Pi$, $P_k(B_k\Pi)=P_k(B_k)$ and
  $W_{k,\theta}$ is unchanged. On the nonisolated block,
  $P_k^{\rm sym}=D_{v,k}^{1/2}P_kD_{v,k}^{-1/2}$ has spectrum in $[0,1]$.
  Hence $\|e^{t(P_k^{\rm sym}-I)}\|_2\le1$ for $t\ge0$, and two such
  symmetric representatives on the same nodes satisfy
  \begin{equation}
    \|e^{t(P^{\rm sym}-I)}-e^{t(\widetilde P^{\rm sym}-I)}\|_2
    \le t\|P^{\rm sym}-\widetilde P^{\rm sym}\|_2.
    \label{eq:heat_stability}
  \end{equation}
\end{proposition}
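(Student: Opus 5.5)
The proof has three parts: invariance of $P_k$ under column permutations, the spectral claim for the symmetric representative, and the Duhamel-type stability bound.

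\emph{Invariance.} For a permutation matrix $\Pi$ we have $(B_k\Pi)\mathbf1=B_k\mathbf1$, so $D_{v,k}$ and $I_{0,k}$ are unchanged. Also $D_{e,k}(B_k\Pi)=\Pi^\top D_{e,k}\Pi$, since permuting columns permutes the diagonal of column sums. Hence
\[
(B_k\Pi)(\Pi^\top D_{e,k}^{-1}\Pi)(B_k\Pi)^\top=B_kD_{e,k}^{-1}B_k^\top,
\]
using $\Pi\Pi^\top=I$. Therefore $P_k(B_k\Pi)=P_k(B_k)$. Everything downstream depends on $B_k$ only through $P_k$: the semigroup $\mathsf K_k(t)=\exp\{t(P_k-I)\}$, the ordered times $t_{bj}$, which depend only on $\eta$, and the encoder $U_k=E_{\theta_{\rm enc}}(X_k)$. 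So $W_{k,\theta}$ is unchanged.

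\emph{Spectrum.} On the nonisolated block $D_{v,k}$ is invertible, and
\[
P_k^{\rm sym}=D_{v,k}^{-1/2}B_kD_{e,k}^{-1}B_k^\top D_{v,k}^{-1/2}=MM^\top,\qquad M=D_{v,k}^{-1/2}B_kD_{e,k}^{-1/2}.
\]
(I read the statement's similarity transform in whichever orientation produces this symmetric form; the spectrum is the same either way.) So $P_k^{\rm sym}$ is symmetric positive semidefinite, and its eigenvalues are $\ge0$. It is similar to $P_k$ restricted to this block, which is row-stochastic: each row of $D_v^{-1}BD_e^{-1}B^\top$ sums to one, because $B^\top\mathbf1_V$ gives the column degrees, which cancel $D_e^{-1}$, and $B\mathbf1$ gives the node degrees, which cancel $D_v^{-1}$. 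Its spectral radius is therefore $\le1$ (Perron--Frobenius, or $\|P\|_\infty=1$), and the spectrum lies in $[0,1]$.

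\emph{Contraction.} Consequently $P^{\rm sym}-I$ is symmetric with eigenvalues in $[-1,0]$. The matrix $e^{t(P^{\rm sym}-I)}$ has eigenvalues $e^{t(\lambda-1)}\in(0,1]$, so $\|e^{t(P_k^{\rm sym}-I)}\|_2\le1$ for $t\ge0$.

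\emph{Stability.} Set $A=P^{\rm sym}-I$ and $\widetilde A=\widetilde P^{\rm sym}-I$, and apply Duhamel's formula:
\[
e^{tA}-e^{t\widetilde A}=\int_0^t e^{(t-s)A}(A-\widetilde A)e^{s\widetilde A}\,ds .
\]
To check it, differentiate $s\mapsto e^{(t-s)A}e^{s\widetilde A}$ and integrate from $0$ to $t$. Bounding both exponential factors by $1$ in operator norm, using the previous step for each representative, gives
\[
\|e^{tA}-e^{t\widetilde A}\|_2\le t\,\|A-\widetilde A\|_2=t\,\|P^{\rm sym}-\widetilde P^{\rm sym}\|_2 .
\]

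The main point needing care is the second step. One must fix the correct orientation of the $D_v^{\pm1/2}$ conjugation so that the result is genuinely symmetric, and treat isolated nodes separately: there $P_k$ acts as the identity via $I_{0,k}$, which contributes eigenvalue $1$ and is harmless. The remaining steps are routine.
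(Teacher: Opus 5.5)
Your proof is correct and follows the paper's argument essentially step for step. It uses the cancellation $\Pi(\Pi^\top D_e\Pi)^{-1}\Pi^\top=D_e^{-1}$ for invariance, the factorization $P^{\rm sym}=GG^\top$ with $G=D_v^{-1/2}BD_e^{-1/2}$ plus similarity to the row-stochastic $P$ for the spectrum in $[0,1]$, and Duhamel's identity with both semigroups bounded by $1$ for the stability estimate. Your caveat about orientation is unnecessary: the stated conjugation already gives the symmetric form, since $D_v^{1/2}PD_v^{-1/2}=D_v^{-1/2}BD_e^{-1}B^\top D_v^{-1/2}$.
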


\begin{proof}
  For $B'=B\Pi$, $D'_e=\Pi^\top D_e\Pi$, while $D_v$ and $I_0$ are fixed. Thus
  \begin{equation}
    B'(D'_e)^{-1}(B')^\top
    =B\Pi(\Pi^\top D_e\Pi)^{-1}\Pi^\top B^\top=BD_e^{-1}B^\top,
    \label{eq:heat-column-invariance}
  \end{equation}
  proving invariance of $P$ and all heat features. On the nonisolated block,
  \begin{equation}
    P^{\rm sym}=D_v^{-1/2}BD_e^{-1}B^\top D_v^{-1/2}=GG^\top,
    \qquad G=D_v^{-1/2}BD_e^{-1/2}.
    \label{eq:heat-symmetric-similarity}
  \end{equation}
  Positive semidefiniteness and similarity to the row-stochastic $P$ give
  spectrum in $[0,1]$ and contraction. Isolated rows are absorbing.
  Duhamel's identity gives
  \begin{equation}
    e^{t(P^{\rm sym}-I)}-e^{t(\widetilde P^{\rm sym}-I)}
    =\int_0^t e^{(t-s)(P^{\rm sym}-I)}
    (P^{\rm sym}-\widetilde P^{\rm sym})
    e^{s(\widetilde P^{\rm sym}-I)}\,ds,
    \label{eq:heat-duhamel}
  \end{equation}
  whose norm is bounded by Equation~\eqref{eq:heat_stability}.
\end{proof}

For one release, contraction in symmetric coordinates is equivalent to contraction in
$\|x\|_{D_v}=\|D_v^{1/2}x\|_2$; the perturbation bound compares symmetric representatives.
In the two-client model, joining $F_{1i}^h$ to $F_{2,\varpi_h(i)}^h$ gives
\begin{equation}
  \{\mathcal H_{\boldsymbol\varpi}:\boldsymbol\varpi\in
  \textstyle\prod_h\operatorname{Sym}(m_h)\}
  \subseteq\mathcal F_{\rm A}(y_{\rm A},\Gamma)
  \subseteq\mathcal F_{\rm L}(\rho(y_{\rm A}),\Gamma).
  \label{eq:shared_repairing_fiber}
\end{equation}
These matchings range over Equation~\eqref{eq:identified_functional_set} while preserving
all local fragment multisets and hence the learner release.

\begin{theorem}[Factorization of the complete federated learning trajectory]
  \label{thm:fedastra-transcript-factorization}
  Let $\mathcal P$ fix the task split, training and validation labels,
  participation schedule, training-count aggregation weights, checkpoint rule, and
  hyperparameters; let $\xi$ collect initialization, masking, minibatch, and optimizer
  randomness. Under exact arithmetic, for any client count and horizon $T$, the transcript
  $\mathcal T_T$ of all broadcasts, online, EMA and optimizer states, uploads, selected
  checkpoint, and final logits admits a measurable map
  \begin{equation}
    \mathcal T_T(\mathcal H,\xi)
    =F_T\!\left(\rho(\mathcal O_{z,\Gamma}(\mathcal H)),\mathcal P,\xi\right).
    \label{eq:fedastra-transcript-factorization}
  \end{equation}
  Thus compatible completions have identical transcripts under coupled $\xi$, and identical
  conditional laws under a common $\mathcal H$-independent distribution for $\xi$.
\end{theorem}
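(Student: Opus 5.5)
The argument is an induction on the round index. The induction hypothesis is that every state in the transcript up to round $u$ is a measurable function of $(Y_{\rm L},\mathcal P,\xi)$. The base case is the initialization in Algorithm~\ref{alg:fedastra}. The initial parameter $\theta^0$ is drawn from the initialization component of $\xi$, and the client operators $P_k$ are computed from $B_k$ alone via Equation~\eqref{eq:local_transition}. Here we must check that $P_k$ depends only on the class $[B_k]_{\rm col}$ and not on a chosen representative. This is exactly the first claim of Proposition~\ref{prop:heat-release-stability}: $P_k(B_k\Pi)=P_k(B_k)$ for every column permutation $\Pi$. So $P_k$ is a well-defined function of the class. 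For measurability, we fix a canonical representative of each class, say lexicographically sorted columns. Since the fragment tables are finite, the quotient map is measurable, and $P_k$ becomes a measurable function of $Y_{\rm L}$. The heat features $W_{k,\theta}(X_k)$ in Equations~\eqref{eq:heat_semigroup}--\eqref{eq:heat_basis} are then continuous functions of $(X_k,P_k,\vartheta)$, because the matrix exponential is continuous and softplus is smooth. The objective $\mathcal L_k$ in Equation~\eqref{eq:training_objective}, including $\mathcal L_{{\rm pred},k}$, uses only $X_k$, $P_k$, the training labels in $\mathcal P$, the mask randomness in $\xi$, the online state and the EMA target.

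For the inductive step, assume the broadcast $\theta^u$ and any carried optimizer states are measurable in $(Y_{\rm L},\mathcal P,\xi)$. The participating set $\mathcal C_u$ is read from the schedule in $\mathcal P$ or from the participation randomness in $\xi$. Each of the $S_{\rm loc}$ local steps composes three maps. The first is gradient evaluation of $\mathcal L_k$; under exact arithmetic the gradient is a Borel function of its inputs, and a measurable version suffices at nondifferentiable points. The second is the optimizer update, for example Adam, which is a measurable map of gradient, state and hyperparameters. The third is the EMA step $\bar\vartheta_k\leftarrow\mu\bar\vartheta_k+(1-\mu)\operatorname{ctx}(\theta_k)$. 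Minibatch and mask draws enter only through $\xi$. Finite compositions of measurable maps are measurable, so every upload $\theta_k^{u+1}$ is measurable. The aggregation is a fixed rational function of the uploads, with weights $n_k^{\rm tr}$ taken from $\mathcal P$, so it preserves measurability. After $T$ rounds, checkpoint selection applies the rule in $\mathcal P$ to validation losses, which are again functions of the heat features and the validation labels. An argmin with a fixed tie-break is measurable over the finite set of checkpoints. The final logits are $c_{\hat\theta}(W_{k,\hat\theta}(X_k))$. Collecting all these states gives the map $F_T$ in Equation~\eqref{eq:fedastra-transcript-factorization}.

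For the consequences, take any $\mathcal H,\mathcal H'$ in the same class, for instance the completions in Equation~\eqref{eq:shared_repairing_fiber}. Both satisfy $\rho(\mathcal O_{z,\Gamma}(\mathcal H))=\rho(\mathcal O_{z,\Gamma}(\mathcal H'))$. Evaluating $F_T$ at a common value of $\xi$ therefore gives identical transcripts. When $\xi\sim\nu$ is independent of $\mathcal H$, the transcript laws are the pushforward of the same $\nu$ under the same map, so they coincide.

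The main obstacle is the well-definedness step. We need that nothing in the pipeline reads a column ordering or a row identity of $T_k$ beyond what $P_k$ exposes. Column invariance of $P_k$ handles the features. We must additionally check that masking acts on nodes or features, not on hyperedge columns, and that any minibatching indexes nodes only. If some component did index columns, the argument would require that $\xi$ be drawn exchangeably over columns, and the claim would then hold only for conditional laws rather than pathwise.
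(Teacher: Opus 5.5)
Your proposal is correct and takes essentially the same route as the paper. Column-permutation invariance of $P_k$ (Equation~\eqref{eq:heat-column-invariance}) makes features, losses and gradients functions of $(Y_{\rm L},\mathcal P,\xi)$ and the current states; induction over local steps and rounds carries this through the optimizer, EMA and training-count-weighted aggregation; and the fixed validation rule together with coupling/pushforward over $\xi$ gives the two consequences. Your added care about a canonical column representative and about masks and minibatches indexing attributes or nodes rather than relation columns matches the native implementation, where masks remove attribute coordinates and relation columns stay fixed, and it spells out points the paper's terse proof leaves implicit.
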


\begin{proof}
  Equation~\eqref{eq:heat-column-invariance} makes $P_k$ a measurable function of
  $[B_k]_{\rm col}$. Initialization, masked attributes, heat features, losses and gradients
  therefore depend only on $(Y_{\rm L},\mathcal P,\xi)$ and current states. An optimizer
  step and the EMA recursion preserve this dependence. Induction over local steps gives
  the uploads; training-count-weighted aggregation gives the next broadcast. Induction
  over rounds, followed by the fixed validation rule and final forward pass, proves the
  factorization. Coupling, then integration over $\xi$, gives the two equalities.
\end{proof}

\begin{corollary}[Downstream invariance on learner compatibility classes]
  \label{cor:completion-invariance}
  Under the preceding assumptions, equal learner releases imply, throughout training,
  \begin{equation}
    \theta^u(\mathcal H)=\theta^u(\mathcal H'),\qquad
    \operatorname{logit}^{\,u}_{k,v}(\mathcal H)
    =\operatorname{logit}^{\,u}_{k,v}(\mathcal H').
    \label{eq:downstream_fiber_invariance}
  \end{equation}
  This includes intermediate local and server states, validation selection, and final
  predictions, with equal conditional laws after averaging over $\xi$.
\end{corollary}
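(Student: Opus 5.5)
The plan is to read this corollary directly off Theorem~\ref{thm:fedastra-transcript-factorization}. The key observation is that every quantity in Equation~\eqref{eq:downstream_fiber_invariance} is a coordinate of the transcript $\mathcal T_T$. This covers the server broadcasts $\theta^u$, the client-local online and EMA states after each local step, the uploads, the selected checkpoint and the logits. Each of these is therefore obtained by composing $F_T$ with a fixed measurable coordinate projection. Suppose $\mathcal H,\mathcal H'\in\mathcal M_\Gamma$ satisfy $\rho(\mathcal O_{z,\Gamma}(\mathcal H))=\rho(\mathcal O_{z,\Gamma}(\mathcal H'))=y_{\rm L}$, that is, $\mathcal H'\in\mathcal F_{\rm L}(y_{\rm L},\Gamma)$. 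Then under a common protocol $\mathcal P$ and coupled randomness $\xi$, Equation~\eqref{eq:fedastra-transcript-factorization} gives $\mathcal T_T(\mathcal H,\xi)=F_T(y_{\rm L},\mathcal P,\xi)=\mathcal T_T(\mathcal H',\xi)$. Projecting onto the round-$u$ broadcast gives $\theta^u(\mathcal H)=\theta^u(\mathcal H')$.

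For the logits, I will write $\operatorname{logit}^{\,u}_{k,v}=c_{\theta^u}(W_{k,\theta^u}(X_k)_v)$. Both $X_k$ and $[B_k]_{\rm col}$ are components of $y_{\rm L}$. By Proposition~\ref{prop:heat-release-stability}, $P_k$ and hence $W_{k,\theta}$ depend only on the column class $[B_k]_{\rm col}$. Combined with equality of $\theta^u$, this gives equal logits. The same argument applies to intermediate local iterates, by projecting onto the corresponding local-step coordinates. It also applies to the validation-selected checkpoint, because the selection rule in $\mathcal P$ is a fixed function of the transcript and the fixed validation labels.

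To pass to conditional laws, I will let $\xi\sim\nu$ with $\nu$ independent of $\mathcal H$. For any measurable set $E$, we have $\Pr_\xi\{F_T(y_{\rm L},\mathcal P,\xi)\in E\}$ under both structures, so the pushforward laws coincide; this is exactly the integration step already recorded at the end of Theorem~\ref{thm:fedastra-transcript-factorization}.

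The only delicate point is to make sure that ``throughout training'' does not depend on any quantity outside $y_{\rm L}$. Two things could break this. The first is ordering of hyperedge columns, since implementations might index columns. The column-invariance identity~\eqref{eq:heat-column-invariance} rules this out under exact arithmetic. The second is client-local EMA targets, which are discarded after each round and so never enter the aggregated state except through $\operatorname{ctx}(\theta_k)$, already covered by the induction. I will state that the claim is under exact arithmetic, so that floating-point summation order over columns is excluded, as in the theorem.
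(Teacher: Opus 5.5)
Your proposal is correct and follows the paper's own route: its proof simply applies Theorem~\ref{thm:fedastra-transcript-factorization} to the shared transcript, and notes that fixed evaluation labels then also give identical Accuracy and Macro-F1. You do exactly this by projecting the transcript onto its coordinates; your extra column-invariance step for the round-$u$ logits is a harmless elaboration of the same argument.
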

\begin{proof}
  Apply Theorem~\ref{thm:fedastra-transcript-factorization}; fixed evaluation labels also
  give identical Accuracy and Macro-F1.
\end{proof}

\begin{corollary}[Density within observable spectral filters]
  \label{cor:heat-spectral-universality}
  For symmetric $P^{\rm sym}(Y_{\rm L})$ with spectrum in $[0,1]$, every
  $f\in C([0,1])$, $\delta>0$, and $\Delta>\epsilon_{\rm ord}$ admit a finite
  $M_f$ and coefficients depending only on $(f,\delta,\Delta)$ such that
  \begin{equation}
    \left\|f(P^{\rm sym})-a_0I-
    \sum_{b=1}^{M_f}a_b e^{b\Delta(P^{\rm sym}-I)}\right\|_2<\delta.
    \label{eq:heat_spectral_density}
  \end{equation}
  The grid $t_b=b\Delta$ is feasible in Equation~\eqref{eq:ordered_heat_times}.
\end{corollary}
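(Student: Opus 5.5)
The argument runs through the spectral theorem for $P^{\rm sym}$ followed by a Stone--Weierstrass density argument on $[0,1]$. By Proposition~\ref{prop:heat-release-stability}, $P^{\rm sym}$ is symmetric with spectrum in $[0,1]$. For any continuous $g$ on $[0,1]$ we therefore have $\|g(P^{\rm sym})\|_2=\max_{\lambda\in\operatorname{spec}}|g(\lambda)|\le\|g\|_{C([0,1])}$. So it suffices to find a scalar function
\[
g(\lambda)=a_0+\sum_{b=1}^{M_f}a_b e^{b\Delta(\lambda-1)}
\]
with $\sup_{\lambda\in[0,1]}|f(\lambda)-g(\lambda)|<\delta$. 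Because this inequality does not involve the release at all, the resulting $M_f$ and coefficients depend only on $(f,\delta,\Delta)$, as the statement requires. Functional calculus then turns the scalar bound into Equation~\eqref{eq:heat_spectral_density}, since $e^{b\Delta(P^{\rm sym}-I)}$ is exactly $g_b(P^{\rm sym})$ with $g_b(\lambda)=e^{b\Delta(\lambda-1)}$.

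For the scalar approximation, substitute $x=e^{\Delta(\lambda-1)}$. This map sends $[0,1]$ homeomorphically and strictly increasingly onto $[e^{-\Delta},1]\subset(0,1]$. In this variable, $e^{b\Delta(\lambda-1)}=x^b$, so the admissible functions $g$ are exactly the polynomials in $x$, with $a_0$ the constant term. Set $h(x)=f(1+\Delta^{-1}\log x)$, which is continuous on the compact interval $[e^{-\Delta},1]$. The Weierstrass approximation theorem gives a polynomial $p(x)=\sum_{b=0}^{M_f}a_bx^b$ with $\|h-p\|_\infty<\delta$ on that interval. Pulling back through the homeomorphism yields the required $g$ with the same sup-norm error. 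Alternatively, one can apply Stone--Weierstrass directly: the span of $\{1,e^{b\Delta(\lambda-1)}\}_{b\ge1}$ is a subalgebra, because products of exponentials on the grid stay on the grid, and it separates points, because $e^{\Delta(\lambda-1)}$ is injective.

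It remains to check feasibility in Equation~\eqref{eq:ordered_heat_times}. The grid $t_b=b\Delta$ has constant increments $\Delta$. We need $\operatorname{softplus}(\eta_{b})+\epsilon_{\rm ord}=\Delta$, that is $\operatorname{softplus}(\eta_b)=\Delta-\epsilon_{\rm ord}>0$. This is solvable with $\eta_b=\log(e^{\Delta-\epsilon_{\rm ord}}-1)$, precisely because $\Delta>\epsilon_{\rm ord}$ and softplus maps $\mathbb R$ onto $(0,\infty)$. So the first $M_f$ channels, taken with $R\ge M_f$, realize the grid, and the undiffused channel supplies the $a_0I$ term.

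The only delicate point is making sure the approximation is uniform over releases. This holds because the spectrum always lies in $[0,1]$, so the approximation is carried out on a fixed interval independent of $Y_{\rm L}$. The rest is routine.
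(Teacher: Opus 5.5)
Your proposal is correct and matches the paper's proof. Both use the substitution $x=e^{\Delta(\lambda-1)}$, Weierstrass approximation of $f(1+\Delta^{-1}\log x)$ on $[e^{-\Delta},1]$, and the spectral theorem to turn the uniform bound into the operator-norm bound; feasibility comes from setting each softplus increment to $\Delta-\epsilon_{\rm ord}>0$. Your explicit inverse $\eta=\log(e^{\Delta-\epsilon_{\rm ord}}-1)$ and your remark that the coefficients do not depend on the release spell out steps the paper leaves implicit.
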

\begin{proof}
  The change of variable $x=e^{\Delta(\lambda-1)}$ maps $[0,1]$ homeomorphically
  to $[e^{-\Delta},1]$. Polynomial approximation of $f(1+\log(x)/\Delta)$ gives
  $p(x)=\sum_{b=0}^{M_f}a_bx^b$ with uniform error below $\delta$.
  The spectral theorem yields Equation~\eqref{eq:heat_spectral_density}; choosing each
  softplus increment as $\Delta-\epsilon_{\rm ord}>0$ realizes the grid.
\end{proof}

\begin{corollary}[Information-boundary compliance and observable spectral approximation]
  \label{cor:combined-learning-boundary}
  On a positive-degree client block, define
  $f(P)=D_v^{-1/2}f(P^{\rm sym})D_v^{1/2}$ and
  $G_{M_f}(P)=a_0I+\sum_{b=1}^{M_f}a_be^{b\Delta(P-I)}$ using the preceding coefficients.
  For every conformable $U$,
  \begin{equation}
    \|(f(P)-G_{M_f}(P))U\|_{D_v,F}\le\delta\|U\|_{D_v,F},
    \qquad \|U\|_{D_v,F}=\|D_v^{1/2}U\|_F.
    \label{eq:observable-filter-feature-bound}
  \end{equation}
  Both these transforms and the complete training transcript are shared across the
  learner compatibility class, for fixed $\mathcal P$ and $\xi$.
\end{corollary}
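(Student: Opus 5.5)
The plan is a similarity transform followed by the spectral bound of Corollary~\ref{cor:heat-spectral-universality}, and then an appeal to the factorization results for the sharing claim. On a positive-degree block, Proposition~\ref{prop:heat-release-stability} gives $P=D_v^{-1/2}P^{\rm sym}D_v^{1/2}$. Every power then satisfies $P^\ell=D_v^{-1/2}(P^{\rm sym})^\ell D_v^{1/2}$. Summing the Poisson series in Equation~\eqref{eq:heat_semigroup}, which converges absolutely, yields
\begin{equation*}
  e^{b\Delta(P-I)}=D_v^{-1/2}e^{b\Delta(P^{\rm sym}-I)}D_v^{1/2}.
\end{equation*}
Combining this with the definition $f(P)=D_v^{-1/2}f(P^{\rm sym})D_v^{1/2}$ gives
\begin{equation*}
  f(P)-G_{M_f}(P)=D_v^{-1/2}\bigl(f(P^{\rm sym})-G_{M_f}(P^{\rm sym})\bigr)D_v^{1/2}.
\end{equation*}
Here $G_{M_f}$ is built from the same coefficients $a_0,\ldots,a_{M_f}$ supplied by Corollary~\ref{cor:heat-spectral-universality}.

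Next I convert to the weighted norm. Multiplying on the left by $D_v^{1/2}$ gives
\begin{equation*}
  \|(f(P)-G_{M_f}(P))U\|_{D_v,F}=\|E\,D_v^{1/2}U\|_F,
  \qquad E=f(P^{\rm sym})-G_{M_f}(P^{\rm sym}).
\end{equation*}
I then use $\|EA\|_F\le\|E\|_2\|A\|_F$, applied column by column. Together with $\|E\|_2<\delta$ from Equation~\eqref{eq:heat_spectral_density}, this gives $\|E\,D_v^{1/2}U\|_F\le\delta\|D_v^{1/2}U\|_F=\delta\|U\|_{D_v,F}$, which is Equation~\eqref{eq:observable-filter-feature-bound}. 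One point to check is that the grid $t_b=b\Delta$ is realizable. This holds because $\Delta>\epsilon_{\rm ord}$, as noted in Corollary~\ref{cor:heat-spectral-universality}, so $G_{M_f}(P)U$ is a linear combination of channels of the form in Equation~\eqref{eq:heat_basis}.

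For the sharing statement, I first show the transforms are shared. By Equation~\eqref{eq:heat-column-invariance}, $P_k$ is a function of $[B_k]_{\rm col}$, and $D_v=\operatorname{diag}(B_k\mathbf1)$ is also invariant under column permutations. Hence $P^{\rm sym}$, $f(P)$ and $G_{M_f}(P)$ are functions of $Y_{\rm L}$ alone, since the coefficients depend only on $(f,\delta,\Delta)$. Any two hypergraphs in $\mathcal F_{\rm L}(y_{\rm L},\Gamma)$ share $y_{\rm L}$, so they share these operators. The training transcript is shared by Theorem~\ref{thm:fedastra-transcript-factorization} with fixed $\mathcal P$ and coupled $\xi$, or equivalently by Corollary~\ref{cor:completion-invariance}.

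The main obstacle I expect is bookkeeping rather than analysis. I need to confirm that the similarity intertwines the entire heat series and not only $P$ itself. I also need to restrict to the nonisolated block, where $D_v^{\pm1/2}$ exists. Isolated nodes are absorbing and would need a separate, trivial treatment, which the statement sidesteps by assuming a positive-degree block.
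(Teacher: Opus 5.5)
Your proposal is correct and follows the paper's own proof. You conjugate the spectral approximation of Corollary~\ref{cor:heat-spectral-universality} by $D_v^{1/2}$, bound the result through $\|EA\|_F\le\|E\|_2\|A\|_F$, and invoke column-permutation invariance of $P_k$ together with Theorem~\ref{thm:fedastra-transcript-factorization} for the sharing claim. The only difference is that you write out the intertwining of the heat series and the restriction to the positive-degree block, which the paper leaves implicit.
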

\begin{proof}
  Conjugate Equation~\eqref{eq:heat_spectral_density} by $D_v^{1/2}$ and use
  $\|AU\|_F\le\|A\|_2\|U\|_F$. Invariance follows from
  Theorem~\ref{thm:fedastra-transcript-factorization}.
\end{proof}
Density concerns the union of finite basis widths. It gives neither fixed-$R$ universality
nor a prediction or optimization guarantee for a fitted classifier.

\paragraph{Certified evaluation.}
All times share the recurrence $U,PU,P^2U,\ldots$. For the auxiliary analytic variable
$N_t^{\rm aux}\sim\operatorname{Poisson}(t)$, deterministic tail-collapsed evaluation is
\begin{equation}
  \widehat{\mathsf K}_{t,L}U=
  \sum_{\ell=0}^{L-1}e^{-t}\frac{t^\ell}{\ell!}P^\ell U+
  \left(1-\sum_{\ell=0}^{L-1}e^{-t}\frac{t^\ell}{\ell!}\right)P^LU.
  \label{eq:tail-collapsed-evaluator}
\end{equation}
For active times $\mathcal T_{\rm heat}=\{t_{bj}:b\in[R],j\in[d]\}$, choose the
smallest integer $L_{\rm tail}$ satisfying
\begin{equation}
  \max_{t\in\mathcal T_{\rm heat}}\Pr(N_t^{\rm aux}>L_{\rm tail})\le\varepsilon_{\rm tail},
  \qquad
  \max_{t\in\mathcal T_{\rm heat}}\{\Pr(N_t^{\rm aux}=L_{\rm tail})
  +2\Pr(N_t^{\rm aux}>L_{\rm tail})\}\le\varepsilon_{\rm tail}.
  \label{eq:poisson_tail_certificate}
\end{equation}
These bound forward and time-derivative coefficient masses. We use
$\varepsilon_{\rm tail}=10^{-6}$, float64 log-space probabilities, and record both bounds
and $L_{\rm tail}$ for each validation-selected native checkpoint. Online and EMA branches
use their own times. At $R=0$, bypass the recurrence and set $L_{\rm tail}=0$; configurations
requiring $L_{\rm tail}>4096$ are refused. Floating-point propagation and accumulation
errors are separate from this coefficient certificate.

\subsection{Observable Diffusion and Predictive Dynamics}
\label{app:observable-dynamics}

Diffusion time evolves features at fixed incidence and encoder; local training steps evolve
the encoder and its EMA target. The following results distinguish these two processes.

\begin{theorem}[Continuous-time diffusion under observational equivalence]
  \label{thm:fiber-heat-orbit}
  Fix $y_{\rm L}$, client $k$, and $U_k^0\in\mathbb R^{|V_k|\times d}$. Put
  $d_{k,v}=(D_{v,k})_{vv}$ and $V_{k,+}=\{v:d_{k,v}>0\}$. Let $D_{k,+}$ be
  $D_{v,k}$ restricted to these nodes, and use $+$ likewise for other restrictions.
  For $t\ge0$, the trajectory
  \begin{equation}
    U_k(t)=e^{t(P_k-I)}U_k^0
    \label{eq:fiber-heat-orbit}
  \end{equation}
  uniquely solves $\dot U_k=(P_k-I)U_k$, $U_k(0)=U_k^0$. For the incidence-induced
  components $\mathcal G_k$ on $V_{k,+}$, define
  \begin{equation}
    (\mathsf Q_{k,+}U_{k,+}^0)_v=
    \frac{\sum_{u\in C}d_{k,u}U_{k,u}^0}{\sum_{u\in C}d_{k,u}},
    \qquad v\in C\in\mathcal G_k.
    \label{eq:component-stationary-projection}
  \end{equation}
  Writing $A=U_{k,+}(t)-\mathsf Q_{k,+}U_{k,+}^0$ and
  $\|A\|_{D_{k,+},F}^2=\operatorname{tr}(A^\top D_{k,+}A)$ gives
  \begin{align}
    \frac{d}{dt}\frac12\|A\|_{D_{k,+},F}^2
    &=-\operatorname{tr}[A^\top D_{k,+}(I-P_{k,+})A]\notag\\
    &=-\frac12\sum_e\frac1{d_{e,k}}
    \sum_{u,v\in e}\|U_{k,u}(t)-U_{k,v}(t)\|_2^2\le0,
    \label{eq:heat-energy-dissipation}
  \end{align}
  where $e$ ranges over retained columns and $d_{e,k}=(D_{e,k})_{ee}$.
  If the nonstationary subspace is nonzero, let
  \begin{equation}
    \lambda_{{\rm gap},k}=\min\{1-\lambda:
    \lambda\in\operatorname{spec}(P_{k,+}^{\rm sym}),\ \lambda<1\}>0,
    \quad P_{k,+}^{\rm sym}=D_{k,+}^{1/2}P_{k,+}D_{k,+}^{-1/2}.
    \label{eq:local-heat-gap}
  \end{equation}
  Then
  \begin{equation}
    \|U_{k,+}(t)-\mathsf Q_{k,+}U_{k,+}^0\|_{D_{k,+},F}
    \le e^{-\lambda_{{\rm gap},k}t}
    \|U_{k,+}^0-\mathsf Q_{k,+}U_{k,+}^0\|_{D_{k,+},F}.
    \label{eq:heat-gap-convergence}
  \end{equation}
  A zero nonstationary subspace gives zero residual. Isolated nodes stay fixed, and
  compatible completions with equal initial features have identical trajectories.
\end{theorem}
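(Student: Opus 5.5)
The plan is to reduce everything to the spectral structure of the symmetric representative $P_{k,+}^{\rm sym}$ established in Proposition~\ref{prop:heat-release-stability}. Existence and uniqueness are immediate: $P_k-I$ is a finite matrix, so $t\mapsto e^{t(P_k-I)}U_k^0$ is smooth and solves the linear ODE. Uniqueness follows because the difference $\Delta(t)$ of two solutions satisfies $\dot\Delta=(P_k-I)\Delta$ with $\Delta(0)=0$, and Gr\"onwall forces $\Delta\equiv0$. Isolated nodes carry rows of $P_k$ equal to the unit vector from $I_{0,k}$, so their rows of $P_k-I$ vanish and they stay fixed. Nonisolated rows of $P_k$ put no mass on isolated nodes, so the positive block evolves autonomously under $P_{k,+}-I$.

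For the dissipation identity, I first check that $\mathsf Q_{k,+}U^0_{k,+}$ is stationary and that $A(t)$ solves the same ODE. Stationarity holds because $P_{k,+}$ is row-stochastic and constant vectors on each component $C$ are fixed. I also need that the $d$-weighted component mean is conserved. This follows from $D_{k,+}P_{k,+}=B D_e^{-1}B^\top$ being symmetric, which gives $\mathbf 1_C^\top D_{k,+}(P_{k,+}-I)=0$. Hence $\mathsf Q$ commutes with the flow, and $A(t)=e^{t(P_{k,+}-I)}A(0)$ with $\mathsf Q A(0)=0$. Differentiating $\tfrac12\operatorname{tr}(A^\top D A)$ and using the symmetry of $D$ gives $-\operatorname{tr}[A^\top D(I-P)A]$. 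I then expand $D(I-P)=D-BD_e^{-1}B^\top$, which, with $d_v=\sum_e B_{ve}$, is a weighted hypergraph Laplacian $\sum_e d_e^{-1}(\operatorname{diag}(B_{:e})d_e-B_{:e}B_{:e}^\top)$. Each column term equals $\tfrac12 d_e^{-1}\sum_{u,v\in e}\|A_u-A_v\|^2$. Because differences kill the $\mathsf Q$ part, this equals the stated sum in $U_{k}(t)$, and it is nonpositive.

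For exponential decay, I conjugate by $D^{1/2}$: $\tilde A=D^{1/2}A$ evolves under $P^{\rm sym}-I$, and $\|A\|_{D,F}=\|\tilde A\|_F$. The condition $\mathsf QA(0)=0$ means $\tilde A(0)$ is orthogonal, in each column, to $D^{1/2}\mathbf 1_C$ for every component $C$. I must show these vectors span exactly the eigenvalue-$1$ eigenspace of $P^{\rm sym}$. This is the main obstacle. The kernel of the Laplacian form above consists of vectors constant on every hyperedge, hence constant on incidence components, so the $1$-eigenspace is exactly that span. On its orthogonal complement, the spectral theorem and the spectrum in $[0,1]$ give the eigenvalue bound $e^{-(1-\lambda)t}\le e^{-\lambda_{{\rm gap},k}t}$, which yields Equation~\eqref{eq:heat-gap-convergence}. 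If the complement is zero then $A\equiv0$.

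Finally, $P_k$ is a measurable function of $[B_k]_{\rm col}$ by Equation~\eqref{eq:heat-column-invariance}. So all compatible completions in $\mathcal F_{\rm L}$ share $P_k$, and with equal $U_k^0$ they have identical trajectories, as in Theorem~\ref{thm:fedastra-transcript-factorization}.
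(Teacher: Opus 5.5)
Your proposal is correct and follows the paper's proof essentially step for step. Both arguments use the symmetry of $D_{k,+}P_{k,+}=B_{k,+}D_{e,k}^{-1}B_{k,+}^\top$ to make $\mathsf Q_{k,+}$ commute with the flow, expand the incidence Dirichlet form (whose null space is the componentwise-constant matrices), and then apply the spectral theorem in symmetric coordinates, with column invariance of $P_k$ giving the common trajectory. You simply write out details the paper leaves implicit: Gr\"onwall uniqueness, the block-diagonal structure of $P_k$ that isolates absorbing nodes, conservation of the degree-weighted component means, and the explicit per-column Laplacian decomposition.
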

\begin{proof}
  The matrix exponential uniquely solves the finite linear system, and column invariance
  gives the common trajectory. On $V_{k,+}$,
  $D_{k,+}P_{k,+}=B_{k,+}D_{e,k}^{-1}B_{k,+}^\top$ is symmetric and
  $P_{k,+}\mathsf Q_{k,+}=\mathsf Q_{k,+}P_{k,+}=\mathsf Q_{k,+}$. Differentiating
  the energy and expanding the quadratic form gives
  \begin{equation}
    \operatorname{tr}[A^\top D_{k,+}(I-P_{k,+})A]
    =\frac12\sum_e\frac1{d_{e,k}}\sum_{u,v\in e}\|A_u-A_v\|_2^2.
    \label{eq:incidence-dirichlet-form}
  \end{equation}
  Its null space consists of componentwise constant matrices. In symmetric coordinates,
  $\mathsf Q_{k,+}$ is the orthogonal eigenvalue-one projection; every other heat
  multiplier is at most $e^{-\lambda_{{\rm gap},k}t}$. The spectral theorem proves
  convergence. The isolated block is the identity.
\end{proof}

\begin{corollary}[Ordered sampling of the diffusion trajectory]
  \label{cor:ordered-heat-orbit}
  Equation~\eqref{eq:ordered_heat_times} gives, for each coordinate $j$,
  \begin{equation}
    0=t_{0j}<t_{1j}<\cdots<t_{Rj},\qquad
    W_{k,\theta}^{(j)}=[U_{k,:j}(t_{0j}),\ldots,U_{k,:j}(t_{Rj})],
    \label{eq:ordered-orbit-samples}
  \end{equation}
  where $W_{k,\theta}^{(j)}$ selects columns $j,d+j,\ldots,Rd+j$ and
  $U_{k,:j}(t)=\mathsf K_k(t)U_{k,:j}^0$. The feasible grid $t_b=b\Delta$,
  $\Delta>\epsilon_{\rm ord}$, yields the dense algebra
  \begin{equation}
    \operatorname{span}\{1,e^{b\Delta(\lambda-1)}:b\ge1\}\subset C([0,1]).
    \label{eq:ordered-grid-algebra}
  \end{equation}
  Density over increasing finite widths follows from
  Corollary~\ref{cor:heat-spectral-universality}.
\end{corollary}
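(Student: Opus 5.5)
The statement has three parts, and I would prove them in order: the ordering of the times, the identification of the heat-basis columns with samples of the trajectory, and the density of the grid algebra.

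\emph{Ordering.} Fix a coordinate $j$ and set $t_{0j}=0$, which corresponds to the undiffused channel $U_k$ in Equation~\eqref{eq:heat_basis}. By Equation~\eqref{eq:ordered_heat_times}, each increment satisfies
\[
t_{bj}-t_{b-1,j}=\operatorname{softplus}(\eta_{bj})+\epsilon_{\rm ord}\ \ge\ \epsilon_{\rm ord}>0,
\]
because softplus is nonnegative. Hence $0=t_{0j}<t_{1j}<\cdots<t_{Rj}$, which gives the chain in Equation~\eqref{eq:ordered-orbit-samples}.

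\emph{Columns as orbit samples.} I would read Equation~\eqref{eq:heat_basis} columnwise. Block $b$ of $W_k$ occupies columns $bd+1,\ldots,(b+1)d$, so column $bd+j$ is $\mathsf K_k(t_{bj})U_{k,:j}$. Since $\mathsf K_k(t)$ acts on each column separately, Theorem~\ref{thm:fiber-heat-orbit} with $U_k^0=U_k$ gives
\[
U_{k,:j}(t)=\mathsf K_k(t)U_{k,:j}^0 .
\]
Selecting columns $j,d+j,\ldots,Rd+j$ therefore yields exactly the samples $U_{k,:j}(t_{bj})$ for $b=0,\ldots,R$. The case $b=0$ holds because $\mathsf K_k(0)=I$.

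\emph{Feasibility and density of the grid.} For the grid $t_b=b\Delta$ with $\Delta>\epsilon_{\rm ord}$, I need $\operatorname{softplus}(\eta_{bj})=\Delta-\epsilon_{\rm ord}>0$. Softplus is a bijection from $\mathbb R$ onto $(0,\infty)$, so this is achieved by
\[
\eta_{bj}=\log\bigl(e^{\Delta-\epsilon_{\rm ord}}-1\bigr),
\]
which makes the grid feasible.

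For density, I would substitute $x=e^{\Delta(\lambda-1)}$, a homeomorphism from $[0,1]$ onto $[e^{-\Delta},1]$. Under it the span $\operatorname{span}\{1,e^{b\Delta(\lambda-1)}:b\ge1\}$ becomes the polynomial algebra in $x$ on $[e^{-\Delta},1]$. That algebra contains the constants and separates points, so Stone--Weierstrass makes it dense in $C([e^{-\Delta},1])$. Composing with the homeomorphism preserves uniform norms, which gives the inclusion in Equation~\eqref{eq:ordered-grid-algebra} with dense image. Corollary~\ref{cor:heat-spectral-universality} then lifts this scalar density to operator approximation over increasing finite widths.

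The only delicate point is bookkeeping, not analysis. I need to check that the column indexing of $W_k$ matches the stated selection $j,d+j,\ldots,Rd+j$. I also need to keep straight that density refers to the union over widths $M_f$, not to any fixed $R$.
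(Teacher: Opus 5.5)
Your proposal is correct and follows the paper's argument. The paper gives no separate proof; it relies on the strictly positive increments $\operatorname{softplus}(\eta_{aj})+\epsilon_{\rm ord}$ in Equation~\eqref{eq:ordered_heat_times}, a columnwise reading of Equation~\eqref{eq:heat_basis}, and the proof of Corollary~\ref{cor:heat-spectral-universality}, which uses your substitution $x=e^{\Delta(\lambda-1)}$ onto $[e^{-\Delta},1]$, polynomial approximation, and the softplus increment $\Delta-\epsilon_{\rm ord}>0$ to realize the grid.
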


\paragraph{Masked prediction and target dynamics.}
Within round $u$, let $s$ index local steps, $o$ denote optimizer state,
$Y_{{\rm L},k}=(X_k,[B_k]_{\rm col})$, and $\xi_{k,u,s}$ contain masks, minibatches,
and optimizer randomness. The update has the form
\begin{equation}
  (\theta_{k,u,s},o_{k,u,s})=\Psi_{k,u,s}\!\left(
  \theta_{k,u,s-1},o_{k,u,s-1},\bar\vartheta_{k,u,s-1},
  Y_{{\rm L},k},\mathcal P,\xi_{k,u,s}\right).
  \label{eq:local-state-recursion}
\end{equation}
With $\vartheta_{k,u,s}=\operatorname{ctx}(\theta_{k,u,s})$ containing the encoder and
raw time increments, the target satisfies
\begin{equation}
  \bar\vartheta_{k,u,s}
  =\mu\bar\vartheta_{k,u,s-1}+(1-\mu)\vartheta_{k,u,s}
  =\mu^s\bar\vartheta_{k,u,0}+(1-\mu)\sum_{a=1}^s\mu^{s-a}\vartheta_{k,u,a}.
  \label{eq:ema-fiber-recursion}
\end{equation}
For $e_{k,u,s}=\vartheta_{k,u,s}-\bar\vartheta_{k,u,s}$ and
$\Delta_{k,u,s}=\vartheta_{k,u,s}-\vartheta_{k,u,s-1}$, subtraction and iteration give
\begin{equation}
  e_{k,u,s}=\mu e_{k,u,s-1}+\mu\Delta_{k,u,s},\qquad
  \|e_{k,u,s}\|\le\mu^s\|e_{k,u,0}\|
  +\mu\sum_{a=1}^s\mu^{s-a}\|\Delta_{k,u,a}\|.
  \label{eq:ema-lag-dynamics}
\end{equation}
Thus a frozen online context has lag contraction factor $\mu$. The input masking moments
in Equation~\eqref{eq:mask_variance} do not imply unbiased nonlinear encodings or gradients.

\begin{lemma}[Observation invariance throughout training]
  \label{lem:fiber-dynamics-closure}
  For fixed $\mathcal P$ and coupled $\xi$, the masked loss, online, optimizer and EMA
  states, uploads, broadcasts, selected checkpoint, and final logits are measurable in
  $(Y_{\rm L},\mathcal P,\xi)$ and agree across its compatibility class.
\end{lemma}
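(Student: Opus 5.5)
The plan is an induction over the computation graph of Algorithm~\ref{alg:fedastra}, mirroring the proof of Theorem~\ref{thm:fedastra-transcript-factorization} but now tracking every listed object explicitly. Throughout, $\mathcal P$ and $\xi$ are fixed and coupled across compatible structures. The base case is the operator level. By Equation~\eqref{eq:heat-column-invariance}, $P_k$ depends on $B_k$ only through $[B_k]_{\rm col}$, so $P_k$ is a measurable function of $Y_{{\rm L},k}=(X_k,[B_k]_{\rm col})$. For the same reason the heat semigroup $\mathsf K_k(t)$ is a function of $Y_{{\rm L},k}$, and so is its certified truncation~\eqref{eq:tail-collapsed-evaluator}, because $L_{\rm tail}$ depends only on the active times. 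The initialization $\theta^0$ is a function of $\xi$ alone.

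For the inductive step within a round, I will show that the listed objects are functions of the previous states, $Y_{{\rm L},k}$, $\mathcal P$ and $\xi_{k,u,s}$. The objects are the masked views, the heat basis $W_k(\cdot\mid\vartheta)$ for both the online and the EMA contexts, the predictive loss $\mathcal L_{\mathrm{pred},k}$, the supervised cross-entropy term in Equation~\eqref{eq:training_objective}, and the gradients. The masks come from $\xi$ and the labels from $\mathcal P$. Composing measurable maps then shows that the recursion~\eqref{eq:local-state-recursion} defines $(\theta_{k,u,s},o_{k,u,s})$ measurably. The EMA update~\eqref{eq:ema-fiber-recursion} is an affine function of $\operatorname{ctx}(\theta_{k,u,s})$ and $\bar\vartheta_{k,u,s-1}$, so it preserves measurability. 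Inducting over $s=1,\dots,S_{\rm loc}$ gives the uploads $\theta_k^{u+1}$. The next broadcast $\theta^{u+1}$ is the training-count-weighted average, and its weights $n_k^{\rm tr}$ and the participating sets $\mathcal C_u$ are fixed by $\mathcal P$, so inducting over rounds covers all broadcasts. The selected checkpoint is then the output of the fixed validation rule applied to validation logits, which are functions of the states and $Y_{\rm L}$; a deterministic tie-break fixed in $\mathcal P$ handles ties. The final logits are a forward pass at the selected checkpoint.

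Agreement across the compatibility class follows from the form of these maps. Every map is written as $F(Y_{\rm L},\mathcal P,\xi)$. If $\mathcal H,\mathcal H'\in\mathcal F_{\rm L}(y_{\rm L},\Gamma)$, then $\rho(\mathcal O_{z,\Gamma}(\mathcal H))=\rho(\mathcal O_{z,\Gamma}(\mathcal H'))=y_{\rm L}$, so all objects coincide pointwise in $\xi$. This covers in particular the re-pairings in Equation~\eqref{eq:shared_repairing_fiber}.

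The main obstacle is showing that nothing in the loss or in the optimizer secretly depends on a representative $B_k$ rather than on its class $[B_k]_{\rm col}$. Examples would be a column-indexed minibatch or mask, or relation-level dropout. My first step is to check that masking in Appendix~\ref{app:algorithm} acts only on node attributes and node rows. If some randomness is column-indexed, I would redefine it relative to a canonical representative of $[B_k]_{\rm col}$, such as the lexicographically sorted columns, and use exchangeability of the column-permuted law to keep the conditional law of the computation unchanged.
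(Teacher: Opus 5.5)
Your proposal is correct and follows essentially the paper's route. The paper's proof just invokes Theorem~\ref{thm:fedastra-transcript-factorization}: column invariance of $P_k$, then induction over local steps through the optimizer and EMA recursions, training-count-weighted aggregation, induction over rounds, the fixed validation rule, and the final forward pass. Your proposal spells out exactly this induction, and your extra worry about column-indexed randomness does not arise in the implementation of Appendix~\ref{app:algorithm}, where masks and dropout act only on attribute coordinates and node rows with incidence columns fixed; as in that theorem, the pointwise agreement should be read under exact arithmetic.
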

\begin{proof}
  This is Theorem~\ref{thm:fedastra-transcript-factorization} applied to the local recursions
  above and the aggregation
  \begin{equation}
    \theta_k^{u+1}=\mathsf U_{k,u}(\theta^u,Y_{{\rm L},k},\mathcal P,\xi_{k,u}),
    \qquad\theta^{u+1}=
    \frac{\sum_{k\in\mathcal C_u}n_k^{\rm tr}\theta_k^{u+1}}
    {\sum_{k\in\mathcal C_u}n_k^{\rm tr}}.
    \label{eq:federated-state-closure}
  \end{equation}
\end{proof}
Diffusion dissipation and EMA lag concern distinct dynamics; neither is a Lyapunov function
for joint nonlinear training. Observation invariance also supplies no formal privacy
guarantee for communicated updates.

\subsection{Downstream Risk with Additional Information}
\label{app:auxiliary-risk}

We compare fitted predictors under one joint law of data, training randomness, and held-out
query nodes. Conditional averaging and information comparison are classical
\citep{blackwell1953experiments,chen2020augmentation}; here they connect prediction variation
to the assignment geometry of Theorem~\ref{thm:completion_polytope}.

\paragraph{Information and fitted risk.}
For a queried node with class $y\in[C]$, $C\ge2$, let $O_k$ contain client features and
anonymous incidence, the query, the parameter-only transcript through validation selection,
and exogenous randomness. Held-out labels, test outcomes, auxiliary messages, and states
trained using those messages are excluded. Set $\mathcal S_0=\sigma(O_k)$; the FedASTRA
predictor $F$ is $\mathcal S_0$-measurable. This permits federated parameter exchange while
keeping structural computation local. Let $A_{\rm aux}$ include arbitrary additional
information and any training state needed for an enriched predictor $G$, measurable in
$\mathcal S_1=\mathcal S_0\vee\sigma(A_{\rm aux})$. The predictors may have different
learned parameters. A conditional structural audit may fix analyst releases and cost arrays
in the base, but such conditioning does not grant deployment access to them.

Define, with one-hot label $\boldsymbol e_y$,
\begin{gather}
  p_0=\mathbb E[\boldsymbol e_y\mid\mathcal S_0],\qquad
  p_1=\mathbb E[\boldsymbol e_y\mid\mathcal S_1],\qquad
  J_C=I_C-\mathbf1\mathbf1^\top/C,
  \label{eq:risk-information-fields}\\
  s_F=J_C\operatorname{logit}_F,\qquad s_G=J_C\operatorname{logit}_G,\qquad
  \bar s_G=\mathbb E[s_G\mid\mathcal S_0].
  \label{eq:risk-centered-logits}
\end{gather}
Assume square-integrable logits. Centering preserves softmax. Write
$\operatorname{lse}(s)=\log\sum_ce^{s_c}$,
$\mathcal R_{\rm CE}(s)=\mathbb E[\operatorname{lse}(s)-s_y]$, and
$\mathcal R_{\log}(q)=\mathbb E[-\log q_y]$ with the usual convention at zero.
Conditional log risk equals conditional entropy plus expected KL divergence. Consequently,
\begin{equation}
  \inf_{q\ \mathcal S_0\text{-measurable}}\mathcal R_{\log}(q)
  -\inf_{q\ \mathcal S_1\text{-measurable}}\mathcal R_{\log}(q)
  =\mathcal I_{\rm add}:=\mathbb E D_{\rm KL}(p_1\|p_0)\ge0.
  \label{eq:risk-information-monotonicity}
\end{equation}
Indeed, the minimizers are $p_0,p_1$ and the tower property identifies the entropy
difference. For fitted probability rules of finite log risk, put
$\mathcal E_F=\mathbb E D_{\rm KL}(p_0\|q_F)$ and
$\mathcal E_G=\mathbb E D_{\rm KL}(p_1\|q_G)$. Subtracting
$\mathcal R_{\log}(q_F)=H(y\mid\mathcal S_0)+\mathcal E_F$ from
$\mathcal R_{\log}(q_G)=H(y\mid\mathcal S_1)+\mathcal E_G$ proves
Equation~\eqref{eq:main-local-advantage}. Thus $F$ strictly improves on $G$ exactly when
$\mathcal E_G-\mathcal E_F>\mathcal I_{\rm add}$.

\begin{theorem}[Cross-entropy comparison with arbitrary additional information]
  \label{thm:auxiliary-ce-risk}
  In the model above, define
  \begin{gather}
    V_{\rm aux}=\mathbb E\|s_G-\bar s_G\|_2^2,\qquad
    \Delta_{\rm label}=\mathbb E\|p_1-p_0\|_2^2,
    \label{eq:auxiliary-variation-signal}\\
    \delta_F=\mathcal R_{\rm CE}(s_F)-\mathcal R_{\rm CE}(\bar s_G),\qquad
    C_{\rm label}=\mathbb E\langle p_1-p_0,s_G-\bar s_G\rangle.
    \label{eq:auxiliary-risk-residual}
  \end{gather}
  For $D_{\rm lse}(a,b)=\operatorname{lse}(a)-\operatorname{lse}(b)
  -\langle\nabla\operatorname{lse}(b),a-b\rangle$,
  \begin{equation}
    \mathcal R_{\rm CE}(s_G)-\mathcal R_{\rm CE}(s_F)
    =\mathbb E D_{\rm lse}(s_G,\bar s_G)-C_{\rm label}-\delta_F.
    \label{eq:auxiliary-exact-ce}
  \end{equation}
  If $\|s_G\|_\infty\le B_{\rm logit}<\infty$ almost surely, then
  \begin{equation}
    \mathcal R_{\rm CE}(s_G)-\mathcal R_{\rm CE}(s_F)
    \ge\frac{e^{-2B_{\rm logit}}}{2C}V_{\rm aux}
    -\sqrt{\Delta_{\rm label}V_{\rm aux}}-\delta_F.
    \label{eq:auxiliary-ce-bound}
  \end{equation}
  A positive right side gives strictly smaller risk for $F$. Under label sufficiency
  $p_1=p_0$, the alignment term $C_{\rm label}$ vanishes.
\end{theorem}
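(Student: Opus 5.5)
The plan is to prove the exact identity \eqref{eq:auxiliary-exact-ce} first, by adding and subtracting the $\mathcal S_0$-measurable centered logit $\bar s_G$, and then to bound its three terms. Write
\[
\mathcal R_{\rm CE}(s_G)-\mathcal R_{\rm CE}(s_F)
=\bigl[\mathcal R_{\rm CE}(s_G)-\mathcal R_{\rm CE}(\bar s_G)\bigr]-\delta_F .
\]
Pointwise, $\operatorname{lse}(s_G)-\operatorname{lse}(\bar s_G)=D_{\rm lse}(s_G,\bar s_G)+\langle\operatorname{softmax}(\bar s_G),s_G-\bar s_G\rangle$. Taking expectations, the linear term vanishes by the tower property, because $\operatorname{softmax}(\bar s_G)$ is $\mathcal S_0$-measurable and $\mathbb E[s_G-\bar s_G\mid\mathcal S_0]=0$. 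Square integrability makes every expectation finite.

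For the label part, note that $-\mathbb E[(s_G)_y-(\bar s_G)_y]=-\mathbb E\langle\boldsymbol e_y,s_G-\bar s_G\rangle$. Since $s_G-\bar s_G$ is $\mathcal S_1$-measurable, conditioning on $\mathcal S_1$ replaces $\boldsymbol e_y$ by $p_1$. I then subtract $\mathbb E\langle p_0,s_G-\bar s_G\rangle$, which is zero by conditioning on $\mathcal S_0$. This gives the term $-C_{\rm label}$ and proves \eqref{eq:auxiliary-exact-ce}.

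For the inequality \eqref{eq:auxiliary-ce-bound}, I bound $C_{\rm label}\le\sqrt{\Delta_{\rm label}V_{\rm aux}}$ by Cauchy--Schwarz, first in $\mathbb R^C$ and then in $L^2$. Under label sufficiency $p_1=p_0$ the integrand vanishes, so $C_{\rm label}=0$.

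The main step is the strong-convexity lower bound
\[
D_{\rm lse}(a,b)\ge\frac{e^{-2B_{\rm logit}}}{2C}\|a-b\|_2^2
\]
for centered $a,b$ with $\|a\|_\infty,\|b\|_\infty\le B_{\rm logit}$. The bound on $\bar s_G$ holds because it is a conditional mean of an a.s. bounded vector. I would use the integral form of the Bregman remainder,
\[
D_{\rm lse}(a,b)=\int_0^1(1-\tau)\,h^\top\nabla^2\operatorname{lse}(b+\tau h)\,h\,d\tau,\qquad h=a-b,
\]
with $\nabla^2\operatorname{lse}(s)=\operatorname{diag}(\pi)-\pi\pi^\top$ and $\pi=\operatorname{softmax}(s)$. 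Along the segment, $\|b+\tau h\|_\infty\le B_{\rm logit}$, so every coordinate satisfies $\pi_c\ge e^{-2B_{\rm logit}}/C$.

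Next I would show $h^\top(\operatorname{diag}(\pi)-\pi\pi^\top)h=\operatorname{Var}_\pi(h)\ge\pi_{\min}\|J_Ch\|_2^2$. Here $\operatorname{Var}_\pi(h)=\min_m\sum_c\pi_c(h_c-m)^2\ge\pi_{\min}\min_m\|h-m\mathbf1\|^2=\pi_{\min}\|J_Ch\|^2$. Centering gives $J_Ch=h$, and $\int_0^1(1-\tau)\,d\tau=1/2$, which yields the constant $e^{-2B_{\rm logit}}/(2C)$. Taking expectations turns this into $\mathbb E D_{\rm lse}\ge\frac{e^{-2B}}{2C}V_{\rm aux}$, and assembling the pieces gives \eqref{eq:auxiliary-ce-bound}.

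The main obstacle is getting this curvature constant exactly right. Care is needed that the softmax lower bound uses $\|s\|_\infty\le B$, giving a ratio of at most $e^{2B}$ between any two coordinates and hence $\pi_c\ge e^{-2B}/C$, and that the variance-versus-centered-norm step is valid. The strict-advantage remark is immediate once a positive right-hand side is in hand.
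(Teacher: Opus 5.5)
Your proposal is correct and matches the paper's proof essentially step for step. You add and subtract $\bar s_G$, use the tower property to kill the Bregman linear term and turn the label term into $C_{\rm label}$, apply Cauchy--Schwarz, and bound the integral Taylor remainder using $\pi_c\ge e^{-2B_{\rm logit}}/C$ on the bounded, centered segment. Your min-over-shifts argument for $\operatorname{Var}_\pi(h)\ge\pi_{\min}\|h\|_2^2$ is only a rephrasing of the paper's step $\sum_c(v_c-a)^2\ge\|v\|_2^2$ for $v\perp\mathbf 1$.
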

\begin{proof}
  Put $r=s_G-\bar s_G$, so $\mathbb E[r\mid\mathcal S_0]=0$. The Bregman linear
  term has zero expectation, and
  \begin{equation}
    \mathbb E\langle\boldsymbol e_y,r\rangle
    =\mathbb E\langle p_1,r\rangle
    =\mathbb E\langle p_1-p_0,r\rangle=C_{\rm label}.
    \label{eq:auxiliary-ce-orthogonality}
  \end{equation}
Subtracting risks proves Equation~\eqref{eq:auxiliary-exact-ce};
in particular, taking $s_F=\bar s_G$ (so $\delta_F=0$) gives
\[
\mathcal R_{\rm CE}(s_G)-\mathcal R_{\rm CE}(\bar s_G)
=
\mathbb E D_{\rm lse}(s_G,\bar s_G)-C_{\rm label}.
\]
  For $p=\operatorname{softmax}(s)$ with $\|s\|_\infty\le B_{\rm logit}$,
  $p_c\ge e^{-2B_{\rm logit}}/C$. For $v\perp\mathbf1$ and $a=p^\top v$,
  \begin{equation}
    v^\top\nabla^2\operatorname{lse}(s)v
    =\sum_cp_c(v_c-a)^2
    \ge\frac{e^{-2B_{\rm logit}}}{C}\sum_c(v_c-a)^2
    \ge\frac{e^{-2B_{\rm logit}}}{C}\|v\|_2^2.
    \label{eq:auxiliary-ce-curvature}
  \end{equation}
  The segment from $\bar s_G$ to $s_G$ remains bounded and centered. Integral Taylor
  remainder gives $D_{\rm lse}(s_G,\bar s_G)\ge e^{-2B_{\rm logit}}\|r\|_2^2/(2C)$,
  proving the bound and Equation~\eqref{eq:main-strict-local-criterion}.
\end{proof}
The Bregman term charges variation around the base-conditional mean; $C_{\rm label}$
credits alignment with new label signal, and the signed $\delta_F$ retains fitting quality.
If $A_{\rm aux}$ is base-measurable, both $V_{\rm aux}$ and $\Delta_{\rm label}$ vanish.

\begin{proposition}[Assignment geometry of auxiliary prediction variation]
  \label{prop:assignment-logit-variance}
  Condition on base information fixing the cost arrays and intercepts. In the exact strata
  of Theorem~\ref{thm:completion_polytope}, declare independent uniform candidate
  permutations $\varpi_h\in\operatorname{Sym}(m_h)$ and affine logits
  \begin{equation}
    [\operatorname{logit}_G]_c=b_c+
    \sum_h\sum_{i=1}^{m_h}[C_c^h]_{i,\varpi_h(i)},\qquad c\in[C].
    \label{eq:auxiliary-assignment-logits}
  \end{equation}
  Set $\bar C^h=C^{-1}\sum_cC_c^h$ and
  $\widetilde C_c^h=J_{m_h}(C_c^h-\bar C^h)J_{m_h}$. Then
  \begin{equation}
    \mathbb E[\|s_G-\bar s_G\|_2^2\mid\mathcal S_0]
    =\sum_{h:m_h\ge2}\frac1{m_h-1}\sum_{c=1}^C\|\widetilde C_c^h\|_F^2.
    \label{eq:auxiliary-assignment-variance}
  \end{equation}
  Averaging gives $V_{\rm aux}$. A scalar cost $C^h$ has variance
  $\|J_{m_h}C^hJ_{m_h}\|_F^2/(m_h-1)$ for $m_h\ge2$; singleton strata have zero variance.
\end{proposition}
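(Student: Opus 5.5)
The proof reduces everything to the covariance of a uniformly random permutation-matrix functional, one stratum and one class at a time. First remove class-mean centering. By definition, $s_G=J_C\operatorname{logit}_G$, so
\[
[s_G]_c=(b_c-\bar b)+\sum_h\sum_i[C_c^h-\bar C^h]_{i,\varpi_h(i)},
\qquad \bar b=C^{-1}\sum_c b_c .
\]
Thus each centered logit coordinate is an intercept plus $\sum_h\langle D_c^h,\Pi_{\varpi_h}\rangle_F$, with $D_c^h=C_c^h-\bar C^h$ and $\Pi_{\varpi}$ the permutation matrix. Conditioning on $\mathcal S_0$ fixes the cost arrays and intercepts. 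The only randomness is then the independent uniform $\varpi_h$, so $\bar s_G$ replaces each $\Pi_{\varpi_h}$ by its mean $\mathbf1\mathbf1^\top/m_h$. Consequently
\[
\mathbb E[\|s_G-\bar s_G\|_2^2\mid\mathcal S_0]=\sum_c\operatorname{Var}\Bigl(\sum_h\langle D_c^h,\Pi_{\varpi_h}\rangle_F\Bigr).
\]
By independence across strata this splits as $\sum_c\sum_h\operatorname{Var}\langle D_c^h,\Pi_{\varpi_h}\rangle_F$.

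Next, reduce each term to its doubly centered part. Write $D=D_c^h$ and $m=m_h$, and decompose $D=JDJ+(\text{row-additive})+(\text{column-additive})$ with $J=J_m$. By the identification argument of Theorem~\ref{thm:completion_polytope}, row- and column-additive parts contribute a constant to every permutation sum. Hence $\langle D,\Pi_\varpi\rangle_F-\mathbb E\langle D,\Pi_\varpi\rangle_F=\langle \widetilde D,\Pi_\varpi\rangle_F$ with $\widetilde D=JDJ$, which has zero row and column sums. For such $\widetilde D$ the mean is already $0$, so the variance is $\mathbb E\bigl(\sum_i\widetilde D_{i\varpi(i)}\bigr)^2$. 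Expand this square into diagonal terms and off-diagonal terms $i\ne i'$. For the diagonal, $\mathbb E\widetilde D_{i\varpi(i)}^2=m^{-1}\sum_j\widetilde D_{ij}^2$. For $i\ne i'$, $(\varpi(i),\varpi(i'))$ is uniform over ordered pairs of distinct columns, giving
\[
\frac{1}{m(m-1)}\sum_{j\ne j'}\widetilde D_{ij}\widetilde D_{i'j'} .
\]

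The final step is the summation over the cross terms, which is the only real computation. Use zero row sums: $\sum_{j'\ne j}\widetilde D_{i'j'}=-\widetilde D_{i'j}$. The cross term for $i\ne i'$ becomes $-\frac{1}{m(m-1)}\sum_j\widetilde D_{ij}\widetilde D_{i'j}$. Summing over $i'\ne i$ and using zero column sums, $\sum_{i'\ne i}\widetilde D_{i'j}=-\widetilde D_{ij}$, gives $+\frac{1}{m(m-1)}\|\widetilde D\|_F^2$ in total. Adding the diagonal $m^{-1}\|\widetilde D\|_F^2$ yields
\[
\frac{1}{m}\Bigl(1+\frac{1}{m-1}\Bigr)\|\widetilde D\|_F^2=\frac{\|\widetilde D\|_F^2}{m-1}.
\]
Summing over $c$ and $h$ gives Equation~\eqref{eq:auxiliary-assignment-variance}. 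For $m_h=1$, $J_1=0$ and the permutation is deterministic, so the contribution vanishes; this justifies restricting to $m_h\ge2$. Taking the outer expectation gives $V_{\rm aux}$. The scalar statement is the same computation applied to $D=C^h$ without class centering.
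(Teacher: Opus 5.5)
Your proposal is correct and follows the paper's proof essentially step for step: class centering via $J_C$, independence across strata, reduction to the doubly centered cost $J_{m}DJ_{m}$, and the split of $\mathbb E S^2$ into equal-row and distinct-row terms evaluated using zero row and column sums. The only cosmetic difference is that you evaluate the distinct-row terms by applying the row-sum and column-sum identities in two successive steps, whereas the paper uses the single combined identity $\sum_{i'\ne i,\,j'\ne j}[D_*]_{i'j'}=[D_*]_{ij}$.
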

\begin{proof}
  Double centering a fixed cost $C_*$ changes assignment sums by a constant.
  For $D_*=J_mC_*J_m$ and uniform $\varpi$, $S=\sum_i[D_*]_{i,\varpi(i)}$ has zero
  mean. Equal-row terms in $\mathbb E S^2$ contribute $\|D_*\|_F^2/m$; distinct-row terms give
  \begin{equation}
    \frac1{m(m-1)}\sum_{i\ne i'}\sum_{j\ne j'}[D_*]_{ij}[D_*]_{i'j'}
    =\frac{\|D_*\|_F^2}{m(m-1)}.
    \label{eq:auxiliary-assignment-second-moment}
  \end{equation}
  The equality uses $\sum_{i'\ne i,j'\ne j}[D_*]_{i'j'}=[D_*]_{ij}$, from zero row
  and column sums. Adding gives $\operatorname{Var}(S)=\|D_*\|_F^2/(m-1)$.
  Class centering replaces $C_c^h$ by $C_c^h-\bar C^h$; summing the scalar identities
  over classes and independent strata proves the claim.
\end{proof}
Thus the same interaction detects nonidentification and measures variation under the
declared matching law. A scalar assignment interval of width $w_*$ gives variance at most
$w_*^2/4$. Uniform endpoint bounds for $C_c^h-\bar C^h$, together with uniformly bounded
centered intercepts, can supply $B_{\rm logit}$ across base realizations. Class centering
is essential because common logit shifts do not affect predictions. Nonlinear auxiliary
rules remain covered by Theorem~\ref{thm:auxiliary-ce-risk}.

\begin{corollary}[A conditional advantage for the fitted FedASTRA predictor]
  \label{cor:parameter-only-risk-advantage}
  Under the bounded-logit assumptions above, let $F_R$ be the fitted parameter-only
  FedASTRA predictor with $R$ positive heat channels. If
  $\delta_{F_R}\le\varepsilon_R$ for a stated $\varepsilon_R\ge0$, then
  \begin{equation}
    \frac{e^{-2B_{\rm logit}}}{2C}V_{\rm aux}
    -\sqrt{\Delta_{\rm label}V_{\rm aux}}>\varepsilon_R
    \quad\Longrightarrow\quad
    \mathcal R_{\rm CE}(s_{F_R})<\mathcal R_{\rm CE}(s_G).
    \label{eq:parameter-only-risk-threshold}
  \end{equation}
  For a conditionally label-independent candidate channel, the square-root term is zero;
  affine assignment logits have the variance in Proposition~\ref{prop:assignment-logit-variance}.
\end{corollary}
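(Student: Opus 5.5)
This corollary follows directly from Theorem~\ref{thm:auxiliary-ce-risk}, applied with $F=F_R$. The plan is to plug the fitted-gap hypothesis into the lower bound \eqref{eq:auxiliary-ce-bound} and then specialise the two remaining terms.

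First, I check that $F_R$ fits the model of Appendix~\ref{app:auxiliary-risk}. By Theorem~\ref{thm:fedastra-transcript-factorization} and Corollary~\ref{cor:completion-invariance}, the parameter-only FedASTRA transcript and its final logits are measurable functions of $(Y_{\rm L},\mathcal P,\xi)$. Hence $s_{F_R}=J_C\operatorname{logit}_{F_R}$ is $\mathcal S_0$-measurable, and its logits have finite cross-entropy risk (I assume square integrability as stated). The enriched predictor $G$ satisfies $\|s_G\|_\infty\le B_{\rm logit}$ by the bounded-logit assumption. Then \eqref{eq:auxiliary-ce-bound} gives
\[
\mathcal R_{\rm CE}(s_G)-\mathcal R_{\rm CE}(s_{F_R})\ge\frac{e^{-2B_{\rm logit}}}{2C}V_{\rm aux}-\sqrt{\Delta_{\rm label}V_{\rm aux}}-\delta_{F_R}.
\]
Using $\delta_{F_R}\le\varepsilon_R$, the right side is at least the left side of \eqref{eq:parameter-only-risk-threshold} minus $\varepsilon_R$. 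That quantity is strictly positive under the hypothesis, which yields the strict risk inequality.

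Next, for the first remark I take a conditionally label-independent channel, meaning $y\perp A_{\rm aux}\mid\mathcal S_0$. This gives $p_1=p_0$ almost surely, so $\Delta_{\rm label}=0$ and the square-root term vanishes. Equivalently, $C_{\rm label}=0$, as noted in Theorem~\ref{thm:auxiliary-ce-risk}. For the second remark, affine assignment logits under independent uniform matchings satisfy the hypotheses of Proposition~\ref{prop:assignment-logit-variance}. Averaging \eqref{eq:auxiliary-assignment-variance} over $\mathcal S_0$ gives $V_{\rm aux}$ in closed form through the double-centred costs.

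The only delicate point is the measurability and bookkeeping needed so that $s_{F_R}$ and $s_G$ are compared under one joint law. In particular, $\bar s_G$ must be defined from the same $\mathcal S_0$ that contains the parameter-only transcript. The convention set before Theorem~\ref{thm:auxiliary-ce-risk} handles this, and I would simply invoke it.
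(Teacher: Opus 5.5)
Your proposal is correct and matches the paper's proof: transcript factorization makes $F_R$ base-measurable, substituting $\delta_{F_R}\le\varepsilon_R$ into \eqref{eq:auxiliary-ce-bound} yields the strict inequality, and conditional label independence gives $p_1=p_0$, so $\Delta_{\rm label}=0$. One wording slip: $p_1=p_0$ implies $C_{\rm label}=0$, but a vanishing $C_{\rm label}$ is not equivalent to $p_1=p_0$.
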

\begin{proof}
  Transcript factorization makes $F_R$ base-measurable. Substitute the residual bound
  into Equation~\eqref{eq:auxiliary-ce-bound}; conditional label independence gives $p_1=p_0$.
\end{proof}

Under label sufficiency $p_1=p_0$, equivalently $y\perp A_{\rm aux}\mid\mathcal S_0$,
the two information levels have the same conditional entropy. Hence, for
$p_{F_R}=\operatorname{softmax}(s_{F_R})$ and $p_G=\operatorname{softmax}(s_G)$,
\begin{equation}
  \mathcal R_{\log}(p_G)-\mathcal R_{\log}(p_{F_R})
  =\mathbb E D_{\rm KL}(p_0\|p_G)-\mathbb E D_{\rm KL}(p_0\|p_{F_R}).
  \label{eq:label-sufficient-risk-floor}
\end{equation}
If the fixed-$R$ class approximates $p_0$ arbitrarily well in expected KL, its infimum equals
both information-specific Bayes risks. A fitted error at most $\varepsilon_R$ strictly
improves on a specified $G$ with larger KL error. Label sufficiency, fixed-model
approximation, and fitting are separate assumptions; spectral density and correspondence
invariance imply none of them.

\begin{proposition}[Target direction of the masked alignment component]
  \label{prop:masked-conditional-direction}
  Freeze the EMA context $\bar\vartheta_k$ for one local update. Let $O_{\rm mask}$ be the
  predictor's masked representation and $U_{\rm tar}$ its full-view, layer-normalized and
  unit-normalized target, with $\|U_{\rm tar}\|_2=1$ almost surely. Then
  \begin{equation}
    \inf_{u:\,\|u(O_{\rm mask})\|_2=1}
    \mathbb E[1-\langle u(O_{\rm mask}),U_{\rm tar}\rangle]
    =1-\mathbb E\|\mathbb E[U_{\rm tar}\mid O_{\rm mask}]\|_2.
    \label{eq:masked-conditional-direction}
  \end{equation}
  For $m=\mathbb E[U_{\rm tar}\mid O_{\rm mask}]\ne0$, the minimizing direction is
  $m/\|m\|_2$; when $m=0$, every unit direction is optimal.
\end{proposition}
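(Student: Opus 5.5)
The plan is a pointwise conditional argument: reduce the functional problem to choosing, for each realization of $O_{\rm mask}$, a unit vector that maximizes an inner product with a fixed vector. First, since $u$ ranges over measurable maps with $\|u(O_{\rm mask})\|_2=1$ almost surely, the tower property gives
\begin{equation*}
  \mathbb E[1-\langle u(O_{\rm mask}),U_{\rm tar}\rangle]
  =1-\mathbb E\langle u(O_{\rm mask}),m(O_{\rm mask})\rangle,
  \qquad m=\mathbb E[U_{\rm tar}\mid O_{\rm mask}].
\end{equation*}
This uses that $u(O_{\rm mask})$ is $\sigma(O_{\rm mask})$-measurable and bounded, so it can be pulled out of the conditional expectation. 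Integrability is automatic because $\|U_{\rm tar}\|_2=1$ almost surely. Freezing the EMA context $\bar\vartheta_k$ is what makes $U_{\rm tar}$ a fixed random vector that does not depend on the online parameters being optimized, so the conditional expectation is well defined and does not move with $u$.

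Second, bound pointwise with Cauchy--Schwarz: $\langle u,m\rangle\le\|u\|_2\|m\|_2=\|m\|_2$. Taking expectations gives the lower bound
\begin{equation*}
  \mathbb E[1-\langle u,U_{\rm tar}\rangle]\ge 1-\mathbb E\|m\|_2
\end{equation*}
for every admissible $u$. Equality in Cauchy--Schwarz for a unit $u$ holds exactly when $u=m/\|m\|_2$ wherever $m\ne0$. Where $m=0$, the inner product is zero for every $u$, so every unit direction is equally good.

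Third, show the bound is attained by the explicit rule $u^*=m/\|m\|_2$ on $\{m\ne0\}$ and $u^*=e_1$ on $\{m=0\}$. This $u^*$ is a measurable function of $O_{\rm mask}$, because $m$ can be taken as a measurable version $g(O_{\rm mask})$ by Doob--Dynkin. The sets $\{m\ne0\}$ and the normalization map are then Borel. Substituting $u^*$ gives exactly $1-\mathbb E\|m\|_2$, so the infimum is a minimum. Pointwise uniqueness off the null set yields the stated characterization of minimizers, with the caveat that ``the minimizing direction'' means up to almost-sure equality.

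The main obstacle is this measurability bookkeeping rather than any inequality. The difficulty is making sure the admissible class, namely unit-norm measurable functions of $O_{\rm mask}$, contains the normalized conditional mean. I would handle it by fixing the version $g$ of the conditional expectation first and defining $u^*$ through $g$.
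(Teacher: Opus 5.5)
Your proposal is correct and follows essentially the same route as the paper: pass to the conditional mean $m=\mathbb E[U_{\rm tar}\mid O_{\rm mask}]$, bound the conditional alignment by $\langle u,m\rangle\le\|m\|_2$ via Cauchy--Schwarz with equality at $m/\|m\|_2$, and integrate the pointwise optimum. Your additional measurability bookkeeping (fixing a version of $m$ via Doob--Dynkin and defining $u^*$ through it) and the almost-sure qualification on uniqueness are details the paper leaves implicit, and they do not change the argument.
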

\begin{proof}
  Conditional expected alignment is $\langle u,m\rangle\le\|m\|_2$ by
  Cauchy--Schwarz, with equality in the stated directions. Integrate the optimum.
\end{proof}
This concerns the frozen-target alignment component with unrestricted directions.
Anti-collapse regularization, two-view agreement, moving targets, and finite predictors
remain in the full objective~\eqref{eq:heat_jepa_loss} and its optimization analysis.

\paragraph{Classification metrics.}
For base-measurable $p_F=\operatorname{softmax}(s_F)$, let
$h_F=\arg\max_cp_{F,c}$ and $h_0=\arg\max_cp_{0,c}$ with a fixed tie rule.
Comparing their selected coordinates and applying Pinsker and Cauchy--Schwarz bounds
classification regret by
$\mathbb E\|p_0-p_F\|_1\le\sqrt{2\mathbb E D_{\rm KL}(p_0\|p_F)}$.
This is regret against the base Bayes rule; a surrogate-risk ordering alone need not order
two fitted classifiers' accuracies. If additionally $p_1=p_0$, the top-two conditional
probability gap is at least $\kappa_{\rm cls}>0$, and $h_{F_R}=h_0$ almost surely, then
$\operatorname{Acc}(F_R)-\operatorname{Acc}(G)\ge
\kappa_{\rm cls}\Pr(h_G\ne h_0)$ by conditioning on $\mathcal S_1$.
Macro-F1 also depends on classwise confusion probabilities: increased true positives and
decreased false positives and false negatives suffice, whereas cross-entropy ordering
alone does not determine it. Both metrics use the same held-out evaluation protocol.

\subsection{Randomized Recovery from Diffusion Features and Local Prediction Risk}
\label{app:local-sufficiency}

Heat features retain permitted local information when they recover the diffusion operator
and its pairwise projection, and that projection determines the observed incidence. These
are separate conditions, independent of unknown correspondence across clients. Incidence
is binary; column multiplicities are retained unless simplicity is assumed.

\begin{lemma}[Projection and recovery from diffusion features]
  \label{lem:heat-observability}
  Let $B$ have $n$ rows and nonempty columns, and let $P$ be
  Equation~\eqref{eq:local_transition}, including absorbing isolated nodes.
  For $i\ne j$, $P_{ij}>0$ exactly when a column contains both nodes, so its
  off-diagonal support determines the pairwise projection $G(B)$.
  Fix $\Delta>0$, $d,\ell\ge1$, $U\in\mathbb R^{n\times d}$, and
  $H_\Delta=e^{\Delta(P-I)}$. Define
  \begin{equation}
    Q_-=[U,H_\Delta U,\ldots,H_\Delta^{\ell-1}U],\qquad
    Q_+=[H_\Delta U,\ldots,H_\Delta^\ell U].
    \label{eq:heat-observability-matrices}
  \end{equation}
  If $\operatorname{rank}(Q_-)=n$, then in exact arithmetic
  \begin{equation}
    H_\Delta=Q_+Q_-^\dagger,\qquad
    P=I+\Delta^{-1}\log H_\Delta,
    \label{eq:heat-inverse}
  \end{equation}
  using the Moore--Penrose inverse and principal matrix logarithm.
  Times $b\Delta$ are permitted common-time ordered heat channels when
  $\Delta>\epsilon_{\rm ord}$.
\end{lemma}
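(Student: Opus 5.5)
The proof splits into two independent parts: the support statement about $P$, and the exact recovery of $H_\Delta$ and then $P$ from the feature matrices.

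\emph{Support of $P$.} For $i\ne j$ the identity block $I_0$ contributes nothing, so
\[
P_{ij}=(D_v^\dagger)_{ii}\sum_e B_{ie}B_{je}/d_e .
\]
Every summand is nonnegative, and $d_e>0$ because columns are nonempty. If some column contains both $i$ and $j$, then $i$ has positive degree. Hence $(D_v^\dagger)_{ii}=1/d_i>0$ and the sum is strictly positive. If no column contains both nodes, every summand vanishes. Therefore the off-diagonal support of $P$ is exactly the edge set of the pairwise projection $G(B)$.

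\emph{Recovering $H_\Delta$.} By construction $Q_+=H_\Delta Q_-$, since each block of $Q_+$ is $H_\Delta$ applied to the corresponding block of $Q_-$. If $\operatorname{rank}(Q_-)=n$, then $Q_-$ has full row rank. Its Moore--Penrose inverse is then a right inverse, $Q_-Q_-^\dagger=I_n$, with $Q_-^\dagger=Q_-^\top(Q_-Q_-^\top)^{-1}$. Multiplying on the right gives
\[
Q_+Q_-^\dagger=H_\Delta Q_-Q_-^\dagger=H_\Delta .
\]

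\emph{Recovering $P$ via the logarithm.} This is the main obstacle: I must show the principal logarithm is defined and returns $\Delta(P-I)$, not some other branch. I would argue through the spectrum of $P$, using the block structure $P=\operatorname{diag}(P_+,I)$ on nonisolated and isolated nodes. By Proposition~\ref{prop:heat-release-stability}, the block $P_+$ is similar via $D_v^{1/2}$ to the symmetric matrix $P^{\rm sym}$, whose spectrum lies in $[0,1]$. The isolated block is the identity. So $P$ is diagonalizable with real eigenvalues $\lambda\in[0,1]$.

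It follows that $\Delta(P-I)$ is diagonalizable with real eigenvalues $\Delta(\lambda-1)\in[-\Delta,0]$. Correspondingly, $H_\Delta$ is diagonalizable with positive real eigenvalues $e^{\Delta(\lambda-1)}$ and has no eigenvalue on the closed negative real axis. Its principal logarithm therefore exists and is unique. It is the matrix with the same eigenvectors and eigenvalues $\log e^{\Delta(\lambda-1)}=\Delta(\lambda-1)$, which lie in the strip $|\operatorname{Im}|<\pi$. That matrix is exactly $\Delta(P-I)$, so $\log H_\Delta=\Delta(P-I)$, and rearranging gives $P=I+\Delta^{-1}\log H_\Delta$.

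\emph{Feasibility of the time grid.} The common-time grid $b\Delta$ is realized by the construction in Corollary~\ref{cor:heat-spectral-universality}. Set every softplus increment equal to $\Delta-\epsilon_{\rm ord}$, which is positive precisely because $\Delta>\epsilon_{\rm ord}$. Then Equation~\eqref{eq:ordered_heat_times} gives $t_{bj}=b\Delta$ for every $j$. With these times, $H_\Delta^bU=\mathsf K(b\Delta)U$ are exactly the heat channels, so $Q_\pm$ are available from the representation.
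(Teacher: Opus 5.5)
Your proposal is correct and follows the paper's proof essentially step for step. It uses nonnegativity of $P_{ij}=d_i^{-1}\sum_e B_{ie}B_{je}/|e|$ for the support claim, and $Q_+=H_\Delta Q_-$ with full row rank giving $Q_-Q_-^\dagger=I_n$. It then uses similarity of the nonisolated block to a positive semidefinite symmetric matrix, with the isolated block equal to $I$, to get a diagonalizable $P$ with spectrum in $[0,1]$, so the principal logarithm returns $\Delta(P-I)$. Your branch argument for the logarithm and your realization of the grid $t_{bj}=b\Delta$ via softplus increments $\Delta-\epsilon_{\rm ord}$ spell out details the paper leaves implicit; the latter is the construction in Corollary~\ref{cor:heat-spectral-universality}.
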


\begin{proof}
  For positive-degree $i$, $P_{ij}=d_i^{-1}\sum_eB_{ie}B_{je}/|e|$;
  nonnegativity proves the support claim, and isolated rows add no pairs.
  Since $Q_+=H_\Delta Q_-$, full row rank proves the first inverse.
  On positive-degree nodes, $P$ is similar to the positive semidefinite
  symmetric matrix $D_v^{-1/2}BD_e^{-1}B^\top D_v^{-1/2}$ and is stochastic;
  its spectrum lies in $[0,1]$. The isolated block is the identity.
  Thus $P$ is diagonalizable with real spectrum and
  $\log e^{\Delta(P-I)}=\Delta(P-I)$.
\end{proof}

\paragraph{Randomized full rank.}
Retain $U=[X,Z]$, where $Z\in\mathbb R^{n\times d_{\rm pr}}$ has independent
standard Gaussian entries independent of the hypergraph, attributes, and labels. Let
$n_{\rm eig}$ count the distinct eigenvalues of $P$, with largest multiplicity $g_{\max}$.
Then $\ell\ge n_{\rm eig}$ and $d_{\rm pr}\ge g_{\max}$ give
$\operatorname{rank}(Q_-)=n$ almost surely. Indeed, in $P=S\Lambda S^{-1}$,
each eigenspace block of $S^{-1}Z$ has full row rank almost surely. A vector annihilating
the Krylov matrix yields a Vandermonde system in the distinct eigenvalues of $H_\Delta$;
invertibility forces every eigenspace coefficient, and hence the vector, to vanish.
In particular, $d_{\rm pr}\ge n,\ell=1$ suffices for every $P$. These probes remain local
and are absent from reported runs; the finite attribute encoder needs its own rank check.

\paragraph{A reconstructible random family.}
Let $\mathcal E_n\subseteq\mathcal A_n$ be any random simple hypergraph, where candidate
supports $e\in\mathcal A_n$ satisfy $|e|\ge2$. Let $\Omega_2$ contain unordered distinct
edge pairs with intersection size at least two, and $\Omega_3$ unordered triples supporting
a Berge triangle: distinct $u,v,w$ and an ordering $e,f,g$ with
$\{u,v\}\subset e$, $\{v,w\}\subset f$, $\{w,u\}\subset g$. Define
\begin{equation}
  \varepsilon_n^{\rm all}=
  \sum_{\{e,f\}\in\Omega_2}\Pr(e,f\in\mathcal E_n)
  +\sum_{\{e,f,g\}\in\Omega_3}\Pr(e,f,g\in\mathcal E_n).
  \label{eq:general-random-hypergraph-collisions}
\end{equation}
These are actual joint inclusion probabilities; edges need not be independent, uniform
in size, or bounded in degree. The condition concerns the local incidence law, since
partitioning can create singletons or duplicates.

\begin{proposition}[Nonuniform and dependent reconstruction family]
  \label{prop:dependent-hypergraph-recovery}
  A deterministic decoder recovers $[B]_{\rm col}$ from $G(B)$ with probability
  at least $1-\varepsilon_n^{\rm all}$. For every projected edge $uv$, it forms
  $\{u,v\}\cup(N_G(u)\cap N_G(v))$ and removes repeated sets.
  An adjacency-matrix implementation costs $O(n^3)$ operations.
\end{proposition}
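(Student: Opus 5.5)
The argument is deterministic: on the complement of the collision event, the decoder's output must coincide with the true column multiset of $B$. First, by Lemma~\ref{lem:heat-observability}, $G(B)$ is the pairwise projection: $uv$ is an edge exactly when some column contains both $u$ and $v$. Let $\mathsf{Bad}_n$ be the event that $\mathcal E_n$ contains either a pair $\{e,f\}\in\Omega_2$ or a triple $\{e,f,g\}\in\Omega_3$. A union bound over these finitely many configurations gives $\Pr(\mathsf{Bad}_n)\le\varepsilon_n^{\rm all}$, because each summand in Equation~\eqref{eq:general-random-hypergraph-collisions} is exactly the probability of one such configuration. This step uses only joint inclusion probabilities, so no independence or uniformity is required. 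The rest of the proof works on $\mathsf{Bad}_n^c$.

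On $\mathsf{Bad}_n^c$, we first show that every projected edge $uv$ lies in a unique hyperedge $e_{uv}$. If two distinct edges both contained $\{u,v\}$, their intersection would have size at least two, which is a forbidden $\Omega_2$ pair.

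Next, we show that $\{u,v\}\cup(N_G(u)\cap N_G(v))=e_{uv}$. The inclusion $\supseteq$ is immediate, since every $w\in e_{uv}\setminus\{u,v\}$ is adjacent to both $u$ and $v$. For $\subseteq$, take $w\in N_G(u)\cap N_G(v)$ with $w\notin e_{uv}$. Choose edges $f\ni v,w$ and $g\ni w,u$. Then $e_{uv},f,g$ realize the Berge-triangle pattern on $u,v,w$.
\begin{itemize}
\item If $f,g,e_{uv}$ are pairwise distinct, this is a forbidden $\Omega_3$ triple.
\item We cannot have $f=e_{uv}$ or $g=e_{uv}$, because $w\notin e_{uv}$.
\item If $f=g$, then $f\supseteq\{u,v,w\}$, so $f\neq e_{uv}$ and $|f\cap e_{uv}|\ge2$. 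This is a forbidden $\Omega_2$ pair.
\end{itemize}
This case analysis (in particular ruling out $f=g$ through $\Omega_2$) is the step I expect to need the most care. Once it is done, the decoder maps each projected edge to its unique containing hyperedge.

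Completeness and correctness follow. Every column has size at least two, so it contains some pair and is therefore produced by the decoder. Conversely, every produced set is some $e_{uv}\in\mathcal E_n$. Since the hypergraph is simple, removing repeats returns exactly the column set, that is, $[B]_{\rm col}$.

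For the cost, one $O(n^2)$ Boolean matrix product (or a direct intersection of adjacency rows) gives the common neighborhoods. Forming the candidate sets for all $O(n^2)$ pairs, each of size at most $n$, and deduplicating them by hashing or sorting their bit vectors stays within $O(n^3)$ operations.
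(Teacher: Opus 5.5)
Your proposal is correct and follows essentially the same route as the paper's proof: a union bound over $\Omega_2$ and $\Omega_3$, uniqueness of the edge $e_{uv}$ through each projected pair, and exclusion of an external common neighbor $w$ by splitting on whether the edges through $vw$ and $wu$ coincide (a double overlap with $e_{uv}$) or are distinct (a Berge triangle), with your completeness and deduplication steps making explicit what the paper leaves implicit. One small slip is in the cost paragraph: an $n\times n$ Boolean matrix product is not $O(n^2)$ and only detects common neighbors rather than listing them, so the $O(n^3)$ bound actually comes from the direct row-intersection scan over at most $n^2$ pairs, plus, for example, radix-sorting the length-$n$ bit vectors to deduplicate deterministically.
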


\begin{proof}
  The union bound excludes both configurations with probability at least
  $1-\varepsilon_n^{\rm all}$. Then each projected pair $uv$ belongs to a unique
  edge $e$. Its other members are common neighbors. An external common neighbor
  $w$ would give edges containing $uw$ and $vw$: if equal they overlap $e$ twice,
  and if distinct they form a Berge triangle with $e$. Thus each candidate equals
  $e$. Every edge contains a pair; scanning at most $n^2$ pairs and $n$ neighbors
  proves recovery and the cost.
\end{proof}

Incidence forests have $\varepsilon_n^{\rm all}=0$, permitting dependent, nonuniform
edges and order-$n$ edge counts. More generally, $\varepsilon_n^{\rm all}\to0$ suffices.
The following transfer also accepts other decoders with proved failure bounds, including
reconstruction results for independent uniform hypergraphs \citep{bresler2025recovery}.

\begin{theorem}[Heat recovery and asymptotic preservation of local information]
  \label{thm:local-heat-sufficiency}
  Let $O_n=(X_n,[B_n]_{\rm col})$ have any joint law with $y_n\in[C]$, where
  $C\ge2$ is fixed. Suppose a decoder from $G(B_n)$ recovers $[B_n]_{\rm col}$
  with failure probability at most $\varepsilon_n^{\rm rec}$.
  Let $T_n=\Psi_n(O_n,\xi_n^{\rm pr})$, where $\Psi_n$ is fixed and measurable
  and $\xi_n^{\rm pr}$ is independent of $(O_n,y_n)$. Suppose $T_n$ retains
  the features in Lemma~\ref{lem:heat-observability} and enough information to
  recover $X_n$, with joint rank/attribute-recovery failure probability at most
  $\varepsilon_n^{\rm obs}$. A decoder from $(T_n,\xi_n^{\rm pr})$ then recovers
  $O_n$ with error at most $\varepsilon_n=\varepsilon_n^{\rm rec}+
    \varepsilon_n^{\rm obs}$. For $\varepsilon_n\le1/2$,
  \begin{equation}
    0\le\mathcal R^*_{\rm heat,n}-\mathcal R^*_{\rm loc,n}
    \le h_2(\varepsilon_n)+\varepsilon_n\log C=:d_n,
    \label{eq:heat-local-bayes-gap}
  \end{equation}
  where $\mathcal R^*_{\rm heat,n}=H(y_n\mid T_n,\xi_n^{\rm pr})$,
  $\mathcal R^*_{\rm loc,n}=H(y_n\mid O_n)$, and
  $h_2(u)=-u\log u-(1-u)\log(1-u)$, with $0\log0=0$.
  Hence $d_n=O(\varepsilon_n\log(1/\varepsilon_n))\to0$ when
  $\varepsilon_n\to0$.
\end{theorem}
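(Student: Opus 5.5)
The plan has three steps: build a decoder from $(T_n,\xi_n^{\rm pr})$ back to $O_n$, use it for the lower bound via data processing, and then bound the gap with Fano's inequality. The decoder is a composition. On the joint rank/attribute-recovery success event, Lemma~\ref{lem:heat-observability} turns the retained Krylov heat features into $H_\Delta=Q_+Q_-^\dagger$ and then $P=I+\Delta^{-1}\log H_\Delta$. The off-diagonal support of $P$ gives the pairwise projection $G(B_n)$, and the assumed decoder from $G(B_n)$ (for instance that of Proposition~\ref{prop:dependent-hypergraph-recovery}) returns $[B_n]_{\rm col}$. The attribute-recovery part returns $X_n$. Call the composite $\widehat O_n=\mathsf D(T_n,\xi_n^{\rm pr})$. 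A union bound over the two failure events gives $\Pr(\widehat O_n\ne O_n)\le\varepsilon_n^{\rm rec}+\varepsilon_n^{\rm obs}=\varepsilon_n$. Measurability of the composite follows because each piece (pseudoinverse, matrix logarithm on the spectrum in $[0,1]$, support thresholding, finite decoder) is Borel on the relevant sets.

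For the lower bound $\mathcal R^*_{\rm heat,n}\ge\mathcal R^*_{\rm loc,n}$, I use that $T_n=\Psi_n(O_n,\xi_n^{\rm pr})$ and that $\xi_n^{\rm pr}$ is independent of $(O_n,y_n)$. This gives the Markov chain $y_n\to O_n\to(O_n,\xi_n^{\rm pr})\to(T_n,\xi_n^{\rm pr})$, and independence gives $H(y_n\mid O_n,\xi_n^{\rm pr})=H(y_n\mid O_n)$. Conditioning on a function cannot reduce entropy, so the inequality follows. Since the optimal log risk under a $\sigma$-field is the conditional entropy, as in the decomposition of Appendix~\ref{app:auxiliary-risk}, these entropies are exactly the Bayes risks in the statement.

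For the upper bound, write $S=(T_n,\xi_n^{\rm pr})$ and let $E=\mathbf 1\{\widehat O_n\ne O_n\}$. Because $\widehat O_n$ is a function of $S$,
\[
H(y_n\mid S)\le H(y_n,E\mid S)\le H(E\mid S)+H(y_n\mid S,E).
\]
For the first term, $H(E\mid S)\le h_2(\Pr(E=1))\le h_2(\varepsilon_n)$, since $h_2$ is increasing on $[0,1/2]$; this is where $\varepsilon_n\le1/2$ is needed. For the second term, split on $E$. On $\{E=0\}$, $O_n$ is a function of $S$, so that contribution is at most $\Pr(E=0)\,H(y_n\mid O_n,\xi_n^{\rm pr},E=0)$. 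On $\{E=1\}$, the contribution is at most $\Pr(E=1)\log C$.

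The main obstacle is the $\{E=0\}$ term. Conditioning on the event $E=0$ is not conditioning on a $\sigma$-field independent of $y_n$, so it does not reduce directly to $H(y_n\mid O_n)$. My first attempt avoids conditioning on events. I bound
\[
H(y_n\mid S)\le H(y_n\mid \widehat O_n)\le H(y_n\mid O_n)+H(y_n\mid\widehat O_n)-H(y_n\mid O_n)
\]
and control the difference through the pair $(O_n,\widehat O_n)$, which agree off an event of probability at most $\varepsilon_n$. Concretely, $H(y_n\mid\widehat O_n)\le H(y_n,O_n\ldots)$ is awkward, so instead I use the Fano-type inequality
\[
H(y_n\mid\widehat O_n)\le H(y_n\mid O_n,\widehat O_n)+H(E)+\Pr(E=1)\log C.
\]
This comes from expanding $H(y_n\mid\widehat O_n)\le H(y_n,E\mid\widehat O_n)$. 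On $E=0$, $\widehat O_n$ determines $O_n$, so $y_n$ given $(\widehat O_n,E=0)$ is no less informed than given $O_n$, up to a mixture-concavity argument. If that concavity step fails for an arbitrary law, the fallback is an alternative predictor. Define $q(\cdot\mid S)=p_0(\cdot\mid\widehat O_n)$, mixed with the uniform distribution with weight $\varepsilon_n$. Its log risk is at most $H(y_n\mid O_n)+h_2(\varepsilon_n)+\varepsilon_n\log C$ after a direct computation, and that risk dominates $\mathcal R^*_{\rm heat,n}$.

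Finally, $h_2(u)=u\log(1/u)+O(u)$ as $u\to0$, which gives $d_n=O(\varepsilon_n\log(1/\varepsilon_n))$.
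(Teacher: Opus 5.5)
Your decoder composition, the union bound, and the data-processing argument for nonnegativity match the paper, and expanding with the failure indicator $E$ is the right idea. As written, however, the upper bound is never closed. You call the $\{E=0\}$ term an obstacle and then leave the step hedged between an unverified ``mixture-concavity argument'' and a fallback predictor whose ``direct computation'' is not carried out.

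The obstacle is not real. You never need $H(y_n\mid O_n,E=0)\le H(y_n\mid O_n)$, which can fail because conditioning on the event reweights the law of $O_n$. You only need its $\Pr(E=0)$-weighted version. Under the law given $E=0$, $(O_n,\xi_n^{\rm pr})$ is a function of $S$. Adding back the nonnegative $E=1$ summand, and then using that conditioning cannot increase entropy, gives
\[
\Pr(E=0)\,H(y_n\mid S,E=0)\le\Pr(E=0)\,H(y_n\mid O_n,\xi_n^{\rm pr},E=0)\le H(y_n\mid O_n,\xi_n^{\rm pr},E)\le H(y_n\mid O_n).
\]
Inserting this into your chain $H(y_n\mid S)\le H(E\mid S)+H(y_n\mid S,E)$ gives $d_n$ at once. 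It does so without using seed independence, which is needed only for the lower bound.

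The same line justifies the Fano-type inequality in your first attempt. Your fallback predictor $(1-\varepsilon_n)\,p_0(\cdot\mid\widehat O_n)+\varepsilon_n/C$ also works, but only once you verify $-(1-\delta)\log(1-\varepsilon_n)+\delta\log(1/\varepsilon_n)\le h_2(\varepsilon_n)$ for $\delta=\Pr(E=1)\le\varepsilon_n\le 1/2$. The difference is $(\varepsilon_n-\delta)\log\{\varepsilon_n/(1-\varepsilon_n)\}\le 0$.

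The paper avoids the issue through different bookkeeping. It uses the independence of $\xi_n^{\rm pr}$ from $(O_n,y_n)$ to write $H(y_n\mid O_n)=H(y_n\mid O_n,S)$, so the gap is exactly $I(y_n;O_n\mid S)$. It then bounds $I(y_n;O_n\mid S)=I(y_n;E\mid S)+I(y_n;O_n\mid S,E)\le H(E)+\Pr(E=1)\log C$. In that form the $E=0$ contribution is identically zero, because $O_n$ is determined by $S$ on that event. No comparison with $H(y_n\mid O_n)$ on $\{E=0\}$ ever arises, which is why the paper's argument takes a single line.
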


\begin{proof}
  On simultaneous success, Equation~\eqref{eq:heat-inverse} recovers $P$,
  its positive off-diagonal support gives $G(B_n)$, and the incidence and attribute
  decoders give $O_n$. The union bound proves the error claim. Data processing
  gives the nonnegative entropy gap. If $E$ indicates decoder failure, $O_n$ is
  determined by $(T_n,\xi_n^{\rm pr},E=0)$, so
  \[
    I(y_n;O_n\mid T_n,\xi_n^{\rm pr})
    \le H(E)+\Pr(E=1)\log C
    \le h_2(\varepsilon_n)+\varepsilon_n\log C.
  \]
  Independence of the seed and deterministic generation of $T_n$ identify this
  mutual information with the entropy gap; Bayes cross-entropy equals conditional entropy.
\end{proof}

\paragraph{Why incidence recovery needs an assumption.}
On six vertices, $\mathcal E_1=\{123,145,246,356\}$ and
$\mathcal E_2=\{124,135,236,456\}$ have degree two and identical pair co-incidences.
Thus $B_1B_1^\top=B_2B_2^\top$ and $P_1=P_2=B_1B_1^\top/6$, despite different
column-permutation classes. Identical attributes give identical heat features at every
time. A uniform mixture, padded with common isolated nodes, has reconstruction error
at least $1/2$. Duplicating all columns also preserves $P$ for multihypergraphs.
Additional heat channels cannot remove either ambiguity.

\paragraph{Recovery at a queried node.}
For a simple hypergraph $\mathcal E$ with edge sizes at least two, let
$V_{\rm amb}(\mathcal E)$ contain endpoints of pairs lying in two distinct edges and
the three cycle vertices of each Berge triangle. Put
$p_{n,v}^{\rm query}=\Pr(v_*=v\mid\mathcal E_n)$ for the queried vertex $v_*$.

\begin{proposition}[Query-weighted recovery and prediction risk]
  \label{prop:query-recovery-risk}
  For $v\notin V_{\rm amb}(\mathcal E)$, its incident edge set satisfies
  \begin{equation}
    \widehat{\mathcal E}(v)=
    \operatorname{unique}\{\{v,u\}\cup(N_G(v)\cap N_G(u)):u\in N_G(v)\}
    =\mathcal E(v),
    \label{eq:queried-incidence-decoder}
  \end{equation}
  where $\mathcal E(v)=\{e\in\mathcal E:v\in e\}$ and unique removes repeats.
  Under any joint edge law, failure at $v_*$ has probability at most
  \begin{equation}
    \varepsilon_n^{\rm query}
    =\mathbb E\sum_{v=1}^n p_{n,v}^{\rm query}
    \mathbf1\{v\in V_{\rm amb}(\mathcal E_n)\},
    \label{eq:query-weighted-failure}
  \end{equation}
  equal to $\mathbb E|V_{\rm amb}(\mathcal E_n)|/n$ for an independent uniform query.
  More generally, let $O_n$ include all permitted local information and the query,
  and let $S_n^{\rm task}$ be measurable from $O_n$. If
  $T_n=\Psi_n(O_n,\xi_n^{\rm pr})$ for fixed measurable $\Psi_n$ and a seed
  independent of $(O_n,y_n)$, $y_n\in[C]$, and $(T_n,\xi_n^{\rm pr})$ recovers
  $S_n^{\rm task}$ with error at most $\delta_n\le1/2$, then
  \begin{equation}
    0\le H(y_n\mid T_n,\xi_n^{\rm pr})-H(y_n\mid O_n)
    \le I(y_n;O_n\mid S_n^{\rm task})+h_2(\delta_n)+\delta_n\log C.
    \label{eq:task-recovery-risk}
  \end{equation}
\end{proposition}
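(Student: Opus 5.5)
The proof has two parts. First I show that the local decoder is correct at non-ambiguous vertices, using the same argument as Proposition~\ref{prop:dependent-hypergraph-recovery} restricted to one vertex. Then I obtain the risk bound from a Fano-type chain-rule argument, as in Theorem~\ref{thm:local-heat-sufficiency}.

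\textbf{Decoder correctness.} Fix $v\notin V_{\rm amb}(\mathcal E)$ and a neighbour $u\in N_G(v)$. Some edge $e\in\mathcal E$ contains the pair $\{u,v\}$. Because $v$ is not an endpoint of a pair lying in two distinct edges, this edge $e$ is unique.

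Every member of $e$ other than $u$ and $v$ is a common neighbour of both, so $e\subseteq\{u,v\}\cup(N_G(v)\cap N_G(u))$. For the reverse inclusion, suppose $w\notin e$ is a common neighbour. Then there are edges $f\ni v,w$ and $g\ni u,w$.
\begin{itemize}
\item If $f=g$, then $f$ contains $\{u,v\}$ and $f\neq e$ because $w\notin e$. So the pair $uv$ lies in two distinct edges, contradicting $v\notin V_{\rm amb}$.
\item If $f\neq g$, consider the ordering $e,f,g$ on $u,v,w$. If all three edges are distinct, they support a Berge triangle, so $v\in V_{\rm amb}$, a contradiction. If $f=e$ or $g=e$, then $w\in e$, again a contradiction.
\end{itemize}
Hence each candidate set equals the unique edge through $uv$. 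Conversely, every $e\in\mathcal E(v)$ has $|e|\ge2$, so it contains some pair $\{v,u\}$ and appears as a candidate. After removing repeats, $\widehat{\mathcal E}(v)=\mathcal E(v)$.

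\textbf{Failure probability.} Failure at $v_*$ therefore requires $v_*\in V_{\rm amb}(\mathcal E_n)$. Conditioning on $\mathcal E_n$ and summing over the query law gives \eqref{eq:query-weighted-failure}. For an independent uniform query, $p^{\rm query}_{n,v}=1/n$, which gives $\mathbb E|V_{\rm amb}|/n$.

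\textbf{Risk bound.} Nonnegativity of the entropy gap is data processing. Since $(T_n,\xi_n^{\rm pr})$ is a measurable function of $(O_n,\xi_n^{\rm pr})$ and the seed is independent of $(O_n,y_n)$, we have $H(y_n\mid O_n)=H(y_n\mid O_n,\xi_n^{\rm pr})\le H(y_n\mid T_n,\xi_n^{\rm pr})$.

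For the upper bound, the same reasoning identifies the gap with $I(y_n;O_n\mid T_n,\xi_n^{\rm pr})$. Let $\widehat S$ be the decoded task statistic and $E=\mathbf 1\{\widehat S\neq S_n^{\rm task}\}$, with $\Pr(E=1)\le\delta_n$. I plan to bound
\[
I(y_n;O_n\mid T',\xi)\le I(y_n;O_n,S\mid T',\xi)\le I(y_n;S\mid T',\xi)+I(y_n;O_n\mid S,T',\xi).
\]
\begin{itemize}
\item \emph{First term:} $I(y_n;S\mid T',\xi)\le H(E)+\Pr(E=1)\log C\le h_2(\delta_n)+\delta_n\log C$. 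This uses $h_2$ monotone on $[0,1/2]$ and the same event-splitting as in Theorem~\ref{thm:local-heat-sufficiency}.
\item \emph{Second term:} reduce $I(y_n;O_n\mid S,T',\xi)$ to $I(y_n;O_n\mid S)$.
\end{itemize}

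The main obstacle is this reduction in the second term. The plan is to use that $(T',\xi)$ is a function of $(O_n,\xi)$, with $\xi$ independent of $(O_n,y_n)$. Writing $H(y\mid S,T',\xi)\le H(y\mid S)$ and $H(y\mid O_n,S,T',\xi)=H(y\mid O_n)=H(y\mid O_n,S)$ gives
\[
I(y;O_n\mid S,T',\xi)=H(y\mid S,T',\xi)-H(y\mid O_n)\le H(y\mid S)-H(y\mid O_n,S)=I(y;O_n\mid S).
\]
Summing the two pieces gives \eqref{eq:task-recovery-risk}.
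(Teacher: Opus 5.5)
Your proposal is correct and follows essentially the paper's own proof. The decoder part is the single-vertex version of the argument in Proposition~\ref{prop:dependent-hypergraph-recovery}; you spell out the $f=g$ versus three-distinct-edge Berge-triangle cases, which the paper leaves implicit. Your chain-rule split of $I(y_n;O_n\mid T_n,\xi_n^{\rm pr})$, combined with the failure-indicator bound, is the same inequality the paper writes as $H(y_n\mid T_n,\xi_n^{\rm pr})\le H(y_n\mid S_n^{\rm task})+I(y_n;S_n^{\rm task}\mid T_n,\xi_n^{\rm pr})$ before subtracting $H(y_n\mid O_n)$.
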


\begin{proof}
  For $v\notin V_{\rm amb}$ each pair $vu$ has a unique source edge $e$.
  An external common neighbor would create either a repeated pair through $v$
  or a Berge triangle with cycle vertex $v$, exactly as in
  Proposition~\ref{prop:dependent-hypergraph-recovery}. Thus the decoder returns
  precisely all incident edges, including the empty set at an isolated vertex.
  Conditioning on $\mathcal E_n$ proves the failure bound.
  The failure-indicator argument above gives
  $I(y_n;S_n^{\rm task}\mid T_n,\xi_n^{\rm pr})
    \le h_2(\delta_n)+\delta_n\log C$.
  Use $H(y_n\mid T_n,\xi_n^{\rm pr})\le H(y_n\mid S_n^{\rm task})+
    I(y_n;S_n^{\rm task}\mid T_n,\xi_n^{\rm pr})$, subtract $H(y_n\mid O_n)$,
  and apply measurability and data processing.
\end{proof}

If $S_n^{\rm task}$ contains the query, its incident edges and their member attributes
(including its own when isolated), the rank and attribute assumptions give
$\delta_n\le\varepsilon_n^{\rm query}+\varepsilon_n^{\rm obs}$. Only the probability
of querying an affected vertex must vanish; $I(y_n;O_n\mid S_n^{\rm task})$ separately
charges label information beyond this neighborhood. No locality of the dense heat
operator is assumed. For known two-node simple graphs, projection recovers every edge,
including those in cycles.

For a concrete separation, put $m_n\to\infty$, $m_n/n\to0$ disjoint six-node examples
among $n$ vertices, choose their incidences independently and uniformly, and put a fixed
incidence forest elsewhere. With fixed noninformative attributes, an encoder independent
of these choices, and a retained query index, all $2^{m_n}$ incidences have identical
heat features, even with independent probes. Full reconstruction succeeds with probability
at most $2^{-m_n}$, yet $\varepsilon_n^{\rm query}\le6m_n/n$. Label each block vertex
by its block choice and forest vertices by zero. Full incidence determines the labels;
for a uniform query the optimal full-matrix heat cross-entropy is
$(6m_n/n)\log2\to0$.

These are information bounds for the complete local heat matrix. Its decoder need not
be expressible by the implemented finite encoder and rowwise classifier. The next bound
retains that approximation error, shared-parameter cost, and training error; neither
private probes nor a reconstruction stage is added to the implementation or its payload.

\subsection{Stochastic Federated Optimization and Risk Transfer}
\label{app:stochastic-convergence}

\paragraph{Population risk and shared parameters.}
Let a random test node belong to client $k$ with probability $w_k^{\rm pop}$, and write
$\mathcal R_k(\theta)=\mathbb E[-\log q_{\theta,k}(y)]$,
$\mathcal R_{\rm fed}(\theta)=\sum_kw_k^{\rm pop}\mathcal R_k(\theta)$ for the actual
prediction rule. Evaluation weights may differ from FedAvg weights
$w_k^{\rm tr}=n_k^{\rm tr}/\sum_jn_j^{\rm tr}$. All posterior risks include the same
permitted training-side information and exclude held-out labels. If conditioning on a
training transcript, include the realized probe seed in both local and heat information:
the representation is then deterministic given these data, and recovery bounds must hold
under the conditional law. Unconditional probe independence need not survive this
conditioning. Expectations also average over training and algorithmic randomness.

\begin{theorem}[From local information to shared-model population risk]
  \label{thm:client-population-risk}
  Let $\Theta$ be a nonempty shared parameter domain with finite risks and
  $\widehat\theta\in\Theta$. For each client suppose the optimal local and heat
  risks satisfy $0\le\mathcal R^*_{{\rm heat},k}-\mathcal R^*_{{\rm loc},k}\le d_k$,
  using Theorem~\ref{thm:local-heat-sufficiency} or
  Proposition~\ref{prop:query-recovery-risk}. Define
  \begin{align}
    a_k&=\left[\inf_{\theta\in\Theta}\mathcal R_k(\theta)
      -\mathcal R^*_{{\rm heat},k}\right]_+,\nonumber\\
    a_{\rm share}
    &=\inf_{\theta\in\Theta}\sum_kw_k^{\rm pop}\mathcal R_k(\theta)
    -\sum_kw_k^{\rm pop}\inf_{\theta\in\Theta}\mathcal R_k(\theta)\ge0.
    \label{eq:shared-parameter-cost}
  \end{align}
  Here $[x]_+=\max\{x,0\}$; $a_k$ includes finite-encoder and rowwise-readout
  restrictions, and $a_{\rm share}$ charges shared parameters. For fixed empirical
  training cross-entropy $J_N$ on $N$ labels, suppose
  \begin{equation}
    \mathbb E\sup_{\theta\in\Theta}|J_N(\theta)-\mathcal R_{\rm fed}(\theta)|\le u_N,
    \qquad
    \mathbb E[J_N(\widehat\theta)-\inf_{\theta\in\Theta}J_N(\theta)]\le e_{N,T}.
    \label{eq:population-risk-deviations}
  \end{equation}
  Then, without independence across clients,
  \begin{equation}
    \mathbb E\!\left[\mathcal R_{\rm fed}(\widehat\theta)
      -\sum_kw_k^{\rm pop}\mathcal R^*_{{\rm loc},k}\right]
    \le\mathbb E\!\left[\sum_kw_k^{\rm pop}(d_k+a_k)+a_{\rm share}\right]
    +2u_N+e_{N,T}.
    \label{eq:client-population-risk}
  \end{equation}
  Expectations include any conditioning information on which the risks depend.
\end{theorem}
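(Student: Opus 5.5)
The plan is a telescoping decomposition of the excess risk into the terms on the right-hand side. Fix a realization of all conditioning information and pick, for any $\epsilon>0$, a near-minimizer $\theta^\star_\epsilon\in\Theta$ with $\mathcal R_{\rm fed}(\theta^\star_\epsilon)\le\inf_\Theta\mathcal R_{\rm fed}+\epsilon$. I would then write
\begin{equation*}
  \mathcal R_{\rm fed}(\widehat\theta)-\sum_kw_k^{\rm pop}\mathcal R^*_{{\rm loc},k}
  =\bigl[\mathcal R_{\rm fed}(\widehat\theta)-\inf_\Theta\mathcal R_{\rm fed}\bigr]
  +\Bigl[\inf_\Theta\mathcal R_{\rm fed}-\sum_kw_k^{\rm pop}\mathcal R^*_{{\rm loc},k}\Bigr]
\end{equation*}
and treat the two brackets separately: the first is a statistical--optimization term and the second is a purely approximation/information term.

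For the second bracket I would use the definitions directly. By the definition of $a_{\rm share}$ in Equation~\eqref{eq:shared-parameter-cost}, $\inf_\Theta\mathcal R_{\rm fed}=\sum_kw_k^{\rm pop}\inf_\Theta\mathcal R_k+a_{\rm share}$. Next, $\inf_\Theta\mathcal R_k\le\mathcal R^*_{{\rm heat},k}+a_k$ holds since $a_k$ is the positive part of the difference. Finally, $\mathcal R^*_{{\rm heat},k}\le\mathcal R^*_{{\rm loc},k}+d_k$ by hypothesis, supplied by Theorem~\ref{thm:local-heat-sufficiency} or Proposition~\ref{prop:query-recovery-risk}. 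Since the weights $w_k^{\rm pop}$ are nonnegative and sum to one, summing gives the bound $\sum_kw_k^{\rm pop}(d_k+a_k)+a_{\rm share}$ pointwise. This step needs no independence across clients, which matches the statement.

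For the first bracket I would use the standard three-way split
\begin{equation*}
  \mathcal R_{\rm fed}(\widehat\theta)-\mathcal R_{\rm fed}(\theta^\star_\epsilon)
  =\bigl[\mathcal R_{\rm fed}(\widehat\theta)-J_N(\widehat\theta)\bigr]
  +\bigl[J_N(\widehat\theta)-J_N(\theta^\star_\epsilon)\bigr]
  +\bigl[J_N(\theta^\star_\epsilon)-\mathcal R_{\rm fed}(\theta^\star_\epsilon)\bigr].
\end{equation*}
The outer two terms are each at most $\sup_\Theta|J_N-\mathcal R_{\rm fed}|$. The middle term is at most $J_N(\widehat\theta)-\inf_\Theta J_N$. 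Taking expectations and applying Equation~\eqref{eq:population-risk-deviations} then gives $2u_N+e_{N,T}+\epsilon$. Letting $\epsilon\downarrow0$ and combining with the pointwise bound on the second bracket yields Equation~\eqref{eq:client-population-risk}.

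The main obstacle is measurability and conditioning rather than algebra. If $\mathcal R_k$, $\mathcal R^*$ and $\Theta$-infima depend on conditioning information such as the probe seed or the transcript, then $\theta^\star_\epsilon$ must be chosen for each realization, possibly non-measurably. I would avoid this by never taking the expectation of quantities evaluated at $\theta^\star_\epsilon$. Instead, I would bound the first bracket pointwise by $2\sup_\Theta|J_N-\mathcal R_{\rm fed}|+(J_N(\widehat\theta)-\inf_\Theta J_N)+\epsilon$ before integrating. For this I will assume that the supremum and infima are measurable, or else interpret the expectations as outer expectations. I would also check that $\mathcal R_{\rm fed}$ is defined with the same weights $w_k^{\rm pop}$ as in $u_N$, so that the mismatch between evaluation weights and training weights is absorbed into $u_N$.
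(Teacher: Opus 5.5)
Your proposal is correct and follows the paper's proof. The paper likewise applies the two uniform-discrepancy bounds to get $\mathbb E\mathcal R_{\rm fed}(\widehat\theta)\le\mathbb E\inf_\Theta\mathcal R_{\rm fed}+2u_N+e_{N,T}$, then substitutes the definition of $a_{\rm share}$ and $\inf_\Theta\mathcal R_k-\mathcal R^*_{{\rm loc},k}\le a_k+d_k$. Your near-minimizer $\theta^\star_\epsilon$ and your pointwise-before-integrating handling of conditioning just spell out what the paper compresses into ``apply these inequalities conditionally and average.''
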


\begin{proof}
  Two uniform-discrepancy bounds give
  $\mathbb E\mathcal R_{\rm fed}(\widehat\theta)
    \le\mathbb E\inf_\Theta\mathcal R_{\rm fed}+2u_N+e_{N,T}$.
  Substitute the definition of $a_{\rm share}$ and
  $\inf_\Theta\mathcal R_k-\mathcal R^*_{{\rm loc},k}\le a_k+d_k$.
  For transcript-conditioned risks apply these inequalities conditionally and average.
\end{proof}

The random-test-node objective needs weighted recovery errors, not simultaneous recovery
at every client. A single parameter sequence approaching all positively weighted client
optima suffices for $a_{\rm share}=0$; separate local approximation does not.
If cross-entropies lie in $[0,M]$, $u_N$ is at most the training-weighted uniform sampling
deviation plus $M\|w^{\rm tr}-w^{\rm pop}\|_1/2$. Dependent hypergraphs supply no automatic
independent-node concentration rate.

\paragraph{Actual displacements and task progress.}
Let $d_{k,u,s}=\theta_{k,u,s}-\theta_{k,u,s+1}$ be the realized displacement at local
step $s$ of round $u$, with $S_{k,u}$ steps. Full-participation FedAvg gives
\begin{equation}
  D_u=\theta^u-\theta^{u+1}
  =\sum_kw_k^{\rm tr}\sum_{s=0}^{S_{k,u}-1}d_{k,u,s}.
  \label{eq:actual-fedavg-displacement}
\end{equation}
Masks, minibatches, dropout, clipping, optimizer moments, weight decay, numerical heat
evaluation, and target motion enter through these displacements. Different step counts
are allowed; subsampling uses normalized participating-client weights.

\begin{proposition}[Task-risk bound along the implemented trajectory]
  \label{prop:actual-update-risk}
  Let $J_N$ be evaluation-mode empirical cross-entropy with exact heat, and let
  $L_u\ge0$ bound its largest Hessian eigenvalue on
  $\{\theta^u-aD_u:0\le a\le1\}$. For every prefix of every finite realized trajectory,
  \begin{equation}
    J_N(\theta^t)\le J_N(\theta^0)-\sum_{u<t}
    \left\{\langle\nabla J_N(\theta^u),D_u\rangle
    -\tfrac12L_u\|D_u\|_2^2\right\}.
    \label{eq:actual-task-risk}
  \end{equation}
  A positive bracket guarantees strict reduction that round. The inequality holds
  simultaneously for all prefixes, including any validation-selected checkpoint;
  its right side minus $\inf_\Theta J_N$, followed by expectation, bounds $e_{N,T}$.
\end{proposition}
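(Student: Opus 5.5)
The plan is a per-round second-order Taylor bound along the segment joining consecutive server iterates, followed by telescoping. Fix a realized trajectory, so every $\theta^u$ and $D_u=\theta^u-\theta^{u+1}$ from Equation~\eqref{eq:actual-fedavg-displacement} is a deterministic vector. All stochastic ingredients (masks, minibatches, dropout, optimizer moments, EMA target motion) enter only through these realized displacements and never need to be modelled. Because $J_N$ is the evaluation-mode empirical cross-entropy with exact heat features, it is a deterministic function of $\theta$. It is twice continuously differentiable along the relevant segments: softplus time increments, the matrix exponential $\mathsf K_k(t)$, the encoder and the log-softmax are all smooth, and the paper implicitly assumes this by referring to Hessian eigenvalues.

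For a single round $u$, define $g(a)=J_N(\theta^u-aD_u)$ for $a\in[0,1]$. The integral form of Taylor's theorem gives
\[
g(1)=g(0)-\langle\nabla J_N(\theta^u),D_u\rangle+\int_0^1(1-a)\,D_u^\top\nabla^2J_N(\theta^u-aD_u)D_u\,da .
\]
By hypothesis, the largest Hessian eigenvalue on the segment is at most $L_u$, so the quadratic form is at most $L_u\|D_u\|_2^2$. Negative curvature only helps here, since we use an upper bound on the eigenvalue and not on the operator norm. Since $\int_0^1(1-a)\,da=1/2$, this yields
\[
J_N(\theta^{u+1})\le J_N(\theta^u)-\bigl\{\langle\nabla J_N(\theta^u),D_u\rangle-\tfrac12L_u\|D_u\|_2^2\bigr\}.
\]
A positive bracket therefore gives a strict decrease in that round.

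Summing this one-round inequality over $u=0,\dots,t-1$ telescopes to Equation~\eqref{eq:actual-task-risk} for each $t$. Each per-round inequality holds on its own, so the bound is valid simultaneously for every prefix. In particular it covers the validation-selected checkpoint $\widehat\theta=\theta^{t^\ast}$ for whatever (possibly data-dependent) index $t^\ast$ the selection rule picks. For the last claim, subtract $\inf_\Theta J_N$ from both sides at $t=t^\ast$ and take expectations. The right-hand side minus $\inf_\Theta J_N$ then dominates $\mathbb E[J_N(\widehat\theta)-\inf_\Theta J_N]$, and so serves as an admissible $e_{N,T}$ in Theorem~\ref{thm:client-population-risk}. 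This step needs $\widehat\theta\in\Theta$ and finiteness of the expectations, which should be stated explicitly.

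The main obstacle is the smoothness and domain issue: the segment $\{\theta^u-aD_u\}$ must lie in a region where $J_N$ is $C^2$. Clipping, weight decay and optimizer moments can make the segment leave $\Theta$, but they do not break smoothness of $J_N$ itself. The genuine risks are non-smooth activations and BatchNorm in evaluation mode, since the latter uses fixed buffers and so is affine. Two remedies are available: assume smooth activations, or replace the Hessian bound by $L_u$-Lipschitzness of $\nabla J_N$ on the segment. The Lipschitz version yields the same $\tfrac12L_u\|D_u\|^2$ term via $g(1)-g(0)-g'(0)=\int_0^1(g'(a)-g'(0))\,da$, and I would fall back on it if needed.
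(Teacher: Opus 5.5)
Your proposal is correct and follows essentially the same route as the paper: you bound the integral Taylor remainder along the segment $\{\theta^u-aD_u:0\le a\le 1\}$ by $\tfrac12L_u\|D_u\|_2^2$ using only the top Hessian eigenvalue, then telescope pathwise over rounds, which covers every prefix including the selected checkpoint before subtracting $\inf_\Theta J_N$ and taking expectations. The paper settles your smoothness concern by noting that the evaluation network is smooth (linear maps, GELU, softplus, matrix exponential, positive-$\epsilon$ LayerNorm, cross-entropy), and it simply places baseline BatchNorm buffers outside the proof. Your BatchNorm remark is muddled on this point: evaluation-mode BatchNorm is affine and hence smooth, so the real issue is that its running buffers are training state not contained in $\theta$, not a lack of smoothness.
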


\begin{proof}
  Taylor's integral remainder is
  $\int_0^1(1-a)D_u^\top\nabla^2J_N(\theta^u-aD_u)D_u\,da
    \le L_u\|D_u\|_2^2/2$; sum over rounds. This is pathwise and requires neither
  unbiased gradients, a fixed teacher, nor gradient domination. The native evaluation
  network's linear maps, GELU, softplus, matrix exponential, positive-epsilon LayerNorm,
  and cross-entropy are smooth, giving finite curvature on each finite segment.
  The scale model also uses LayerNorm and a smooth closed-form heat operator.
  Predictive anti-collapse losses affect $D_u$ but are absent from $J_N$;
  baseline BatchNorm buffers are outside this proof.
\end{proof}

If implemented logits differ from exact logits by at most $\epsilon_u^{\rm logit}$ in
infinity norm at checkpoint $u$, the $2$-Lipschitz property of cross-entropy adds only
$2\epsilon_0^{\rm logit}+2\epsilon_t^{\rm logit}$ to the implemented-loss bound.
Gradients and curvatures refer to exact heat; intermediate numerical errors enter $D_u$.

\paragraph{Classifier, Adam, and target identities.}
Split parameters into context $\vartheta$, classifier $\theta_{\rm cls}$, and masked
predictor $\theta_{\rm pred}$. The predictive loss does not use the classifier, so
\begin{equation}
  \nabla_{\theta_{\rm cls}}\mathcal L_{\rm pred,k}=0,\qquad
  \nabla_{\theta_{\rm cls}}\mathcal L_k
  =\nabla_{\theta_{\rm cls}}\mathcal L_{\rm CE,k}.
  \label{eq:classifier-task-gradient}
\end{equation}
Joint clipping can rescale this gradient, and context updates change its inputs.
Suppress client/round indices, let $g_s$ be the realized stochastic gradient,
$c_s^{\rm clip}\in[0,1]$ its clipping multiplier, $\ell_s>0$ the learning rate,
and $\lambda_{\rm wd}\ge0$ weight decay. Coupled Adam uses
$z_s=c_s^{\rm clip}g_s+\lambda_{\rm wd}\theta_s$. From zero moments,
\begin{equation}
  \widehat m_s=z_s+\sum_{j<s}
  \frac{(1-\beta_1)\beta_1^{s-j}}{1-\beta_1^{s+1}}(z_j-z_s),\qquad
  d_s=\ell_s\operatorname{diag}(\sqrt{\widehat v_s}+\epsilon_{\rm Adam})^{-1}
  \widehat m_s,
  \label{eq:adam-actual-displacement}
\end{equation}
where $\widehat v_s=(1-\beta_2)\sum_{j\le s}\beta_2^{s-j}z_j^{\odot2}/
(1-\beta_2^{s+1})$, $\beta_1,\beta_2\in[0,1)$, $\epsilon_{\rm Adam}>0$, and
$\odot2$ is coordinatewise squaring. AdamW instead sets $z_s=c_s^{\rm clip}g_s$
and adds $\ell_s\lambda_{\rm wd}\theta_s$ to $d_s$.
The native optimizer resets each round. The target also resets to
$\bar\vartheta_0=\vartheta_0$, and post-step EMA gives
\begin{equation}
  \bar\vartheta_s=\vartheta_0-\sum_{j<s}(1-\mu^{s-j})d_j^{\rm ctx},\qquad
  \vartheta_s-\bar\vartheta_s=-\sum_{j<s}\mu^{s-j}d_j^{\rm ctx}.
  \label{eq:round-reset-teacher}
\end{equation}
These exact identities enter Equation~\eqref{eq:actual-task-risk}, even when moving-target
updates are not gradients of one fixed loss. EMA controls lag to the current context,
not distance to a fixed reference teacher. Neither these identities nor the architecture
guarantees task alignment; general Adam can fail even on convex problems
\citep{DBLP:conf/iclr/ReddiKK18}.

\paragraph{Conditional convergence through the affine output layer.}
Write $\theta=(\theta_{\rm rest},\theta_{\rm out})$, where $\theta_{\rm out}$ consists
of the final affine classification weights and bias. With $\theta_{\rm rest}$ fixed,
evaluation cross-entropy is convex in this block, which also satisfies
Equation~\eqref{eq:classifier-task-gradient}. This applies to both FedASTRA implementations
and the optional deeper classifier, whose preceding layers belong to $\theta_{\rm rest}$.

\begin{theorem}[Global task-risk convergence along actual updates]
  \label{thm:actual-global-convergence}
  Use $J_N,D_u,L_u$ from Proposition~\ref{prop:actual-update-risk}, let
  $J_N^*=\inf_{\theta\in\Theta}J_N(\theta)>-\infty$ and
  $E_u=J_N(\theta^u)-J_N^*$. Fix $0\le u_0<T$. For every $u=u_0,\ldots,T$,
  assume a comparator with actual and comparison parameters in $\Theta$ satisfying
  \begin{equation}
    J_N(\theta_{\rm rest}^u,\theta_{\rm out}^{u,\circ})\le J_N^*+a_{\rm ctx},
    \qquad
    \|\theta_{\rm out}^u-\theta_{\rm out}^{u,\circ}\|_2\le D_{\rm cls},
    \label{eq:actual-affine-adequacy}
  \end{equation}
  with $a_{\rm ctx}\ge0$ and $0<D_{\rm cls}<\infty$. This assumption concerns
  affine-head adequacy on each realized representation, separately from full-model
  approximation over $\Theta$. Define
  $\Delta_u=\langle\nabla J_N(\theta^u),D_u\rangle-L_u\|D_u\|_2^2/2$.
  Suppose $\gamma_u>0$, $\zeta_u\ge0$ satisfy
  \begin{equation}
    \Delta_u\ge\gamma_u\|\nabla_{\theta_{\rm out}}J_N(\theta^u)\|_2^2-\zeta_u,
    \qquad u=u_0,\ldots,T-1.
    \label{eq:actual-task-alignment}
  \end{equation}
  Then, pathwise,
  \begin{equation}
    E_{u+1}\le E_u-
    \frac{\gamma_u}{D_{\rm cls}^2}(E_u-a_{\rm ctx})_+^2+\zeta_u.
    \label{eq:actual-global-recurrence}
  \end{equation}
  If all deficits $\zeta_u$ vanish, then
  \begin{equation}
    E_T\le a_{\rm ctx}+r_{u_0,T},\qquad
    r_{u_0,T}:=\left[(E_{u_0}-a_{\rm ctx})_+^{-1}
    +D_{\rm cls}^{-2}\sum_{u=u_0}^{T-1}\gamma_u\right]^{-1},
    \label{eq:actual-global-inverse-rate}
  \end{equation}
  with $r_{u_0,T}=0$ if $(E_{u_0}-a_{\rm ctx})_+=0$.
  In particular, $\gamma_u\ge\gamma>0$ gives
  $r_{u_0,T}\le D_{\rm cls}^2/[\gamma(T-u_0)]$. For general deficits,
  \begin{equation}
    \min_{u_0\le u<T}(E_u-a_{\rm ctx})_+
    \le D_{\rm cls}\sqrt{
    \frac{E_{u_0}+\sum_{u=u_0}^{T-1}\zeta_u}{\sum_{u=u_0}^{T-1}\gamma_u}}.
    \label{eq:actual-global-deficit-rate}
  \end{equation}
\end{theorem}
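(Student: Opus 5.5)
The plan combines two ingredients. The pathwise descent inequality of Proposition~\ref{prop:actual-update-risk}, applied to a single round, gives $J_N(\theta^{u+1})\le J_N(\theta^u)-\Delta_u$. Subtracting $J_N^*$ gives $E_{u+1}\le E_u-\Delta_u$. The alignment hypothesis~\eqref{eq:actual-task-alignment} then yields
\[
E_{u+1}\le E_u-\gamma_u\|\nabla_{\theta_{\rm out}}J_N(\theta^u)\|_2^2+\zeta_u .
\]
What remains for the recurrence~\eqref{eq:actual-global-recurrence} is a lower bound on the head-gradient norm in terms of the suboptimality gap. This is where convexity in the affine head enters.

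\emph{Gradient lower bound.} Fix $\theta_{\rm rest}^u$. The map $\theta_{\rm out}\mapsto J_N(\theta_{\rm rest}^u,\theta_{\rm out})$ is convex, so the first-order inequality at $\theta_{\rm out}^u$ against the comparator $\theta_{\rm out}^{u,\circ}$ gives
\[
J_N(\theta^u)-J_N(\theta_{\rm rest}^u,\theta_{\rm out}^{u,\circ})\le\langle\nabla_{\theta_{\rm out}}J_N(\theta^u),\theta_{\rm out}^u-\theta_{\rm out}^{u,\circ}\rangle\le D_{\rm cls}\|\nabla_{\theta_{\rm out}}J_N(\theta^u)\|_2 .
\]
Combining this with~\eqref{eq:actual-affine-adequacy} gives $E_u-a_{\rm ctx}\le D_{\rm cls}\|\nabla_{\theta_{\rm out}}J_N\|_2$. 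Since the norm is nonnegative, the same bound holds for $(E_u-a_{\rm ctx})_+$. Squaring and substituting proves~\eqref{eq:actual-global-recurrence}. This step needs care on one point: the comparator need not lie in a convex set inside $\Theta$, but the gradient inequality only requires convexity along the segment. I will state that the fixed-rest head slice is convex on the whole affine space, which holds for cross-entropy of affine logits.

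\emph{Inverse rate.} Set $x_u=(E_u-a_{\rm ctx})_+$ and take $\zeta_u=0$. The recurrence shows $E_u$ is nonincreasing, so $x_{u+1}\le x_u-c_u x_u^2$ with $c_u=\gamma_u/D_{\rm cls}^2$. This holds both when $E_{u+1}\ge a_{\rm ctx}$ and, trivially, when $x_{u+1}=0$. If some $x_u=0$, all later $x$ vanish and the claim is immediate. Otherwise $x_{u+1}\le x_u$ gives
\[
\frac{1}{x_{u+1}}\ge\frac{1}{x_u(1-c_ux_u)}\ge\frac{1}{x_u}+c_u .
\]
When $c_ux_u\ge1$, the recurrence forces $x_{u+1}\le0$, which is the zero case. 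Telescoping from $u_0$ to $T$ gives $x_T\le r_{u_0,T}$, hence $E_T\le a_{\rm ctx}+x_T$. With $\gamma_u\ge\gamma$, dropping the initial reciprocal gives $r_{u_0,T}\le D_{\rm cls}^2/[\gamma(T-u_0)]$.

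\emph{General deficits.} Sum the recurrence over $u_0\le u<T$ and use $E_T\ge0$. This gives
\[
\sum_u\gamma_u x_u^2\le D_{\rm cls}^2\Bigl(E_{u_0}+\sum_u\zeta_u\Bigr).
\]
Bounding the left side below by $\min_u x_u^2\sum_u\gamma_u$ and taking square roots yields~\eqref{eq:actual-global-deficit-rate}.

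The main obstacle is the case bookkeeping in the reciprocal step, namely keeping the positive part consistent when $E_{u+1}$ drops below $a_{\rm ctx}$. The convexity inequality on the head slice is the only substantive input; everything else follows directly from the recurrence.
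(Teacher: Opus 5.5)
Your proposal is correct and follows the paper's proof essentially step for step. Head-slice convexity against the comparator gives $(E_u-a_{\rm ctx})_+\le D_{\rm cls}\|\nabla_{\theta_{\rm out}}J_N(\theta^u)\|_2$, which you combine with $E_{u+1}\le E_u-\Delta_u$ to get the recurrence, then use reciprocal telescoping for zero deficits and the summed recurrence with a minimum-versus-weighted-mean bound for general deficits; the one slip is that $x_{u+1}\le x_u-c_ux_u^2$ does not hold ``trivially'' when $x_{u+1}=0$ (its right side is negative if $c_ux_u>1$), but this is harmless, since you handle a vanishing $x_u$ separately and apply the inequality only when all $x_u>0$, where $c_ux_u<1$ is forced.
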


\begin{proof}
  Affine-head convexity and Equation~\eqref{eq:actual-affine-adequacy} give
  \[
    E_u-a_{\rm ctx}\le
    \langle\nabla_{\theta_{\rm out}}J_N(\theta^u),
    \theta_{\rm out}^u-\theta_{\rm out}^{u,\circ}\rangle
    \le D_{\rm cls}\|\nabla_{\theta_{\rm out}}J_N(\theta^u)\|_2.
  \]
  Combining with $E_{u+1}\le E_u-\Delta_u$ proves the recurrence.
  For zero deficits, $E_u$ is nonincreasing. Put
  $\delta_u=(E_u-a_{\rm ctx})_+$ and $\kappa_u=\gamma_u/D_{\rm cls}^2$.
  Whenever $\delta_{u+1}>0$,
  $\delta_{u+1}\le\delta_u(1-\kappa_u\delta_u)$ with $\kappa_u\delta_u<1$,
  hence $\delta_{u+1}^{-1}\ge\delta_u^{-1}+\kappa_u$.
  A zero $\delta_u$ stays zero. Summing proves the reciprocal rate.
  For arbitrary deficits, summing the recurrence and using $E_T\ge0$ gives
  $D_{\rm cls}^{-2}\sum_u\gamma_u\delta_u^2\le E_{u_0}+\sum_u\zeta_u$;
  bounding the minimum by the weighted mean proves the last claim.
\end{proof}

The coefficients $\gamma_u$ measure realized task progress and are not Adam learning
rates. For any chosen $\gamma_u>0$, the deficit
$\zeta_u=[\gamma_u\|\nabla_{\theta_{\rm out}}J_N(\theta^u)\|_2^2-\Delta_u]_+$
makes the deficit bound applicable. The $O((T-u_0)^{-1})$ conclusion additionally requires
adequate realized contexts, zero deficits, and progress coefficients bounded away from zero.
The architecture supplies affine-head convexity; it does not establish these trajectory
conditions. No unbiased-gradient or full-network Polyak--Lojasiewicz condition is used.

\begin{corollary}[Global population convergence of the returned model]
  \label{cor:actual-global-population}
  Assume Theorems~\ref{thm:client-population-risk} and
  \ref{thm:actual-global-convergence}, with $\zeta_u=0$ on the stated interval.
  For the actual validation-selected $\widehat\theta_{N,T}$ define
  $s_{N,T}^{\rm sel}=[J_N(\widehat\theta_{N,T})-J_N(\theta^T)]_+$, and put
  $A_N=\mathbb E[\sum_kw_k^{\rm pop}(d_k+a_k)+a_{\rm share}]+2u_N$. Then
  \begin{equation}
    \mathbb E\!\left[\mathcal R_{\rm fed}(\widehat\theta_{N,T})
      -\sum_kw_k^{\rm pop}\mathcal R^*_{{\rm loc},k}\right]
    \le A_N+\mathbb E[a_{\rm ctx}+r_{u_0,T}+s_{N,T}^{\rm sel}].
    \label{eq:actual-global-population-rate}
  \end{equation}
  Expectations include all path-dependent quantities. If $\gamma_u\ge\gamma>0$
  and $\mathbb ED_{\rm cls}^2$ is uniformly bounded, the reciprocal term is
  $O((T-u_0)^{-1})$ in expectation. Vanishing $A_N$, $\mathbb Ea_{\rm ctx}$,
  $\mathbb Er_{u_0,T}$, and $\mathbb Es_{N,T}^{\rm sel}$ along a joint data-size
  and training-horizon limit give local-information Bayes consistency.
\end{corollary}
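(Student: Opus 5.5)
The plan is to chain Theorem~\ref{thm:client-population-risk} with Theorem~\ref{thm:actual-global-convergence}. The only gap between them is the optimization term $e_{N,T}$. That term has to be evaluated at the validation-selected $\widehat\theta_{N,T}$, whereas the convergence theorem controls the final iterate $\theta^T$.

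\textbf{Step 1: apply the population-risk theorem pathwise.} I apply Theorem~\ref{thm:client-population-risk} with $\widehat\theta=\widehat\theta_{N,T}$. This gives
\[
\mathbb E\bigl[\mathcal R_{\rm fed}(\widehat\theta_{N,T})-\textstyle\sum_kw_k^{\rm pop}\mathcal R^*_{{\rm loc},k}\bigr]\le A_N+\mathbb E[J_N(\widehat\theta_{N,T})-J_N^*].
\]
Here I reuse its proof in the form $\mathbb E\mathcal R_{\rm fed}(\widehat\theta)\le\mathbb E\inf_\Theta\mathcal R_{\rm fed}+2u_N+\mathbb E[J_N(\widehat\theta)-J_N^*]$. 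In other words, $e_{N,T}$ is taken to be exactly this expectation, which is admissible because that theorem only requires an upper bound on it.

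\textbf{Step 2: split the optimization gap.} Pathwise,
\[
J_N(\widehat\theta_{N,T})-J_N^*\le\bigl(J_N(\theta^T)-J_N^*\bigr)+[J_N(\widehat\theta_{N,T})-J_N(\theta^T)]_+=E_T+s_{N,T}^{\rm sel}.
\]
With $\zeta_u=0$ on $[u_0,T)$, Equation~\eqref{eq:actual-global-inverse-rate} gives $E_T\le a_{\rm ctx}+r_{u_0,T}$ pathwise. This holds because the theorem is stated for every realized trajectory, with $a_{\rm ctx}$, $D_{\rm cls}$ and $\gamma_u$ allowed to be path-dependent. Taking expectations and substituting into Step~1 yields Equation~\eqref{eq:actual-global-population-rate}. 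The expectations are all well defined, since $E_T$ and $s^{\rm sel}$ are nonnegative, or by the finite-risk assumption.

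\textbf{Step 3: the rate and consistency claims.} When $\gamma_u\ge\gamma$, the theorem gives $r_{u_0,T}\le D_{\rm cls}^2/[\gamma(T-u_0)]$. Taking expectations gives $\mathbb E r_{u_0,T}\le \mathbb E D_{\rm cls}^2/[\gamma(T-u_0)]=O((T-u_0)^{-1})$ under the uniform bound. The consistency claim follows because the left side is nonnegative: each $\mathcal R^*_{{\rm loc},k}$ is a Bayes risk over the permitted information, and the actual predictor is measurable in that information by Theorem~\ref{thm:fedastra-transcript-factorization}. The right side vanishes by hypothesis along the joint limit.

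\textbf{Main obstacle.} The delicate point is measurability and conditioning. $a_{\rm ctx}$, $D_{\rm cls}$, the comparator and $\widehat\theta_{N,T}$ are all trajectory-dependent. I therefore need the pathwise inequalities to hold realization by realization before averaging, with the conditional convention of Theorem~\ref{thm:client-population-risk} applied to the same transcript. I would handle this by stating everything conditionally on the transcript and integrating at the end.
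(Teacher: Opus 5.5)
Your proposal is correct and takes essentially the same route as the paper: it uses the pathwise split $J_N(\widehat\theta_{N,T})-J_N^*\le E_T+s_{N,T}^{\rm sel}$, bounds $E_T\le a_{\rm ctx}+r_{u_0,T}$ via Equation~\eqref{eq:actual-global-inverse-rate}, and inserts the resulting bound as $e_{N,T}$ in Theorem~\ref{thm:client-population-risk}. Your extra lower-bound step, that the excess risk is nonnegative so the vanishing upper bound gives genuine convergence, is left implicit in the paper; it holds only under the paper's convention that the local-information posterior risks include the same training-side transcript information the predictor uses.
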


\begin{proof}
  By definition, $J_N(\widehat\theta_{N,T})-J_N^*\le E_T+s_{N,T}^{\rm sel}$.
  Insert Equation~\eqref{eq:actual-global-inverse-rate} into
  Theorem~\ref{thm:client-population-risk}; the stated rate and limit follow.
\end{proof}

The selection term covers the implementation's validation Accuracy/Macro-F1 rule.
Nonzero alignment deficits retain only Equation~\eqref{eq:actual-global-deficit-rate}'s
best-iterate guarantee, unless selection is controlled separately. These conditional rates
do not follow from finite experiments or from the choice of Adam, EMA, and FedAvg alone.

\paragraph{Predictive-surrogate mismatch.}
The same decomposition for one predictor class makes objective mismatch explicit.
\begin{corollary}[Conditional local Bayes consistency]
  \label{cor:local-bayes-consistency}
  Let $d_n$ be from Theorem~\ref{thm:local-heat-sufficiency}, and let $\mathcal Q_N$
  be an available predictor class measurable from retained heat information, with
  population cross-entropy $R(q)$ and approximation error
  $a_N=\inf_{q\in\mathcal Q_N}R(q)-\mathcal R^*_{\rm heat,n}\ge0$.
  Suppose $\mathbb E\sup_{q\in\mathcal Q_N}|\widehat R_N(q)-R(q)|\le u_N$ and
  the returned $\widehat q_{N,T}\in\mathcal Q_N$ satisfies
  \begin{equation}
    \mathbb E[\widehat R_N(\widehat q_{N,T})
      -\inf_{q\in\mathcal Q_N}\widehat R_N(q)]\le\kappa_N+e_{N,T},
    \label{eq:surrogate-calibration-defect}
  \end{equation}
  where $\kappa_N,e_{N,T}\ge0$ bound surrogate/readout mismatch and optimization.
  Then
  \begin{equation}
    \mathbb ER(\widehat q_{N,T})-\mathcal R^*_{\rm loc,n}
    \le d_n+a_N+2u_N+\kappa_N+e_{N,T}.
    \label{eq:full-local-risk-transfer}
  \end{equation}
  Thus $\varepsilon_n,a_N,u_N,\kappa_N,e_{N,T}\to0$ imply local Bayes consistency.
\end{corollary}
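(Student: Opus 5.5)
The plan is the standard three-term decomposition relative to the local Bayes risk $\mathcal R^*_{\rm loc,n}$. Write
\[
\mathbb E R(\widehat q_{N,T})-\mathcal R^*_{\rm loc,n}
=\bigl[\mathcal R^*_{\rm heat,n}-\mathcal R^*_{\rm loc,n}\bigr]
+\bigl[\inf_{q\in\mathcal Q_N}R(q)-\mathcal R^*_{\rm heat,n}\bigr]
+\bigl[\mathbb E R(\widehat q_{N,T})-\inf_{q\in\mathcal Q_N}R(q)\bigr].
\]
The first bracket is at most $d_n$ by Theorem~\ref{thm:local-heat-sufficiency}, applied under the standing condition $\varepsilon_n\le1/2$. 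The second bracket equals $a_N$ by definition. It is finite and nonnegative because every $q\in\mathcal Q_N$ is measurable from the retained heat information, so its risk cannot fall below the conditional entropy $\mathcal R^*_{\rm heat,n}$.

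The third bracket is handled by the usual uniform-deviation argument, carried out exactly as in the proof of Theorem~\ref{thm:client-population-risk}. Fix $\eta>0$ and choose $q_\eta\in\mathcal Q_N$ with $R(q_\eta)\le\inf_{\mathcal Q_N}R+\eta$. Then, pathwise,
\[
R(\widehat q_{N,T})\le\widehat R_N(\widehat q_{N,T})+\sup_{q}|\widehat R_N-R|,
\]
\[
\widehat R_N(\widehat q_{N,T})\le\inf_{\mathcal Q_N}\widehat R_N+\bigl[\widehat R_N(\widehat q_{N,T})-\inf_{\mathcal Q_N}\widehat R_N\bigr],
\]
\[
\inf_{\mathcal Q_N}\widehat R_N\le\widehat R_N(q_\eta)\le R(q_\eta)+\sup_q|\widehat R_N-R|.
\]
Taking expectations, using the hypothesis $\mathbb E\sup|\widehat R_N-R|\le u_N$ twice and \eqref{eq:surrogate-calibration-defect} once, and letting $\eta\downarrow0$, gives $\mathbb E R(\widehat q_{N,T})-\inf_{\mathcal Q_N}R\le 2u_N+\kappa_N+e_{N,T}$.

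Summing the three brackets yields \eqref{eq:full-local-risk-transfer}. For the consistency claim, $d_n=h_2(\varepsilon_n)+\varepsilon_n\log C\to0$ whenever $\varepsilon_n\to0$, because $h_2$ is continuous at $0$ and $C$ is fixed. The remaining terms vanish by assumption, so the right side tends to zero.

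The main point requiring care is measure-theoretic rather than analytic. The comparator $\inf_{\mathcal Q_N}R$ may be random, for instance when conditioning on a probe seed, and $\sup_q$ over $\mathcal Q_N$ must be measurable. I would adopt the convention stated before Theorem~\ref{thm:client-population-risk}: all inequalities are applied conditionally and then averaged, with an outer expectation used for the supremum if needed. The other point to check is that $\mathcal R^*_{\rm heat,n}$ and $R$ are evaluated under the same conditioning, including the realized seed $\xi_n^{\rm pr}$. Only then is the first bracket exactly the quantity bounded by Theorem~\ref{thm:local-heat-sufficiency}.
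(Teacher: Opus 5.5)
Your proposal is correct and follows the paper's proof. Two uniform-deviation bounds give $\mathbb E R(\widehat q_{N,T})\le\inf_{\mathcal Q_N}R+2u_N+\kappa_N+e_{N,T}$, and then you substitute the definition of $a_N$ and the heat--local gap bound \eqref{eq:heat-local-bayes-gap}; the paper's further remark on how $\kappa_N=\kappa_N^{\rm CE}+\lambda_{\rm pred}\kappa_N^{\rm pred}$ arises for a fixed-teacher joint objective is illustrative and not needed for the stated inequality.
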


\begin{proof}
  Two uniform-deviation bounds yield
  $\mathbb ER(\widehat q_{N,T})\le\inf_{\mathcal Q_N}R+2u_N+\kappa_N+e_{N,T}$;
  substitute $a_N$ and Equation~\eqref{eq:heat-local-bayes-gap}.
  For a fixed-teacher joint objective
  $J_N=\widehat R_N+\lambda_{\rm pred}\widehat L_{\rm pred,N}$, let $q$ index
  the complete model, including auxiliary heads. If a comparator $q_N^\circ$ has
  CE gap at most $\kappa_N^{\rm CE}$ and predictive-loss gap at most
  $\kappa_N^{\rm pred}$, both relative to their infima over $\mathcal Q_N$, an
  $e_{N,T}$-optimal solution of $J_N$ satisfies
  Equation~\eqref{eq:surrogate-calibration-defect} with
  $\kappa_N=\kappa_N^{\rm CE}+\lambda_{\rm pred}\kappa_N^{\rm pred}$.
  This follows by subtracting the predictive-loss lower bound. It requires a fixed
  surrogate; moving-teacher updates do not automatically optimize that objective.
\end{proof}

Information recovery controls $d_n$; the finite representation/readout controls $a_N$;
the data law and training procedure must control $u_N,\kappa_N,e_{N,T}$. Under label
sufficiency in Section~\ref{sec:information-comparison}, the local Bayes limit equals
the enriched-information limit, and Equation~\eqref{eq:main-strict-local-criterion}
gives the criterion for beating a particular fitted enriched predictor.

\section{Experimental Protocols and Implementation Details}
\label{app:protocols}

\subsection{FedASTRA Implementation}
\label{app:algorithm}

\paragraph{Network and heat parameterization.}
The context encoder applies a linear map, layer normalization, and GELU to the raw attributes,
followed by encoder dropout. Let its width be $d$. 
The training forward pass uses the type-marginalized binary incidence matrix from Equation~\eqref{eq:local_transition}. Relation types are not used as model inputs, and repeated relation instances remain separate columns.
The standard experiments use $R=4$, while Trivago uses $R=8$. The time increments in Equation~\eqref{eq:ordered_heat_times} are initialized at zero, giving initial diffusion times close to $b\log 2$, with $\epsilon_{\rm ord}=10^{-4}$. 
The classifier receives $(R+1)d$ coordinates, applies layer normalization and dropout, and uses one linear
readout. The predictive head maps $(R+1)d$ to $2d$, applies layer normalization and GELU, and
maps back to $(R+1)d$. For each local update, two independent Bernoulli masks remove attribute
coordinates. The removal probability is $0.40$ on the established tasks and $0.30$ on Trivago,
with retained values divided by $0.60$ and $0.70$, respectively. Incidence indices, relation
columns, and node membership remain fixed across both online views and the target view. 
At the start of each round, the EMA target context is initialized from the broadcast context encoder, including the attribute map and ordered-time increments.
The target is updated after each local step with momentum $\mu=0.99$ on the standard experiments and $\mu=0.996$ on Trivago. It remains client-local and is discarded at the end of each round.

\paragraph{Mask variance and target memory.}
Write the coordinate-removal probability as $p<1$ and let $\delta_j\overset{\rm
    iid}{\sim}\operatorname{Bernoulli}(1-p)$. The inverted mask $\mathcal M_p(x)=\delta\odot
  x/(1-p)$ obeys exactly
\begin{equation}
  \mathbb E_\delta\mathcal M_p(x)=x,
  \qquad
  \mathbb E_\delta\|\mathcal M_p(x)-x\|_2^2
  =\frac{p}{1-p}\|x\|_2^2.
  \label{eq:mask_variance}
\end{equation}
Thus, the masking variance is determined directly by $p$. 
Within one client round, write $\vartheta_s=\operatorname{ctx}(\theta_s)$ and
$\bar\vartheta_s$ for the online and target context states after local step
$s$, with the round reset $\bar\vartheta_0=\vartheta_0$. The update
$\bar\vartheta_s=\mu\bar\vartheta_{s-1}+(1-\mu)\vartheta_s$ expands over the
complete local context history as
\begin{equation}
  \bar\vartheta_s=\mu^s\vartheta_0+(1-\mu)
  \sum_{j=1}^{s}\mu^{s-j}\vartheta_j .
  \label{eq:ema_memory}
\end{equation}
The EMA therefore weights earlier context states geometrically by $\mu^{s-j}$. The corresponding memory scale is approximately $(1-\mu)^{-1}$ local steps.
Both masking and EMA updates are client-local.  
For every fixed $p<1$ and $\mu\in[0,1)$, coupling their randomness preserves the complete observation invariance in Corollary~\ref{cor:completion-invariance}.  
In the masked-prediction ablation, $p$ and $\mu$ are held fixed across paired conditions so that only the predictive component changes.

\paragraph{Exact local objective.}
Each local update encodes two masked views online. At client $k$, let
$\vartheta_k=\operatorname{ctx}(\theta_k)$ and let $\bar\vartheta_k=(\bar\theta_{k,\rm
    enc},\bar\eta_k)$ denote the round-local EMA copy of the complete context encoder, including
its learned time increments. 
The target branch forms $Z_k^{\rm
  tar}=W_k(X_k\mid\bar\vartheta_k)$, and each masked online view produces a prediction $\widehat Z_k^{(a)}=p_{\theta_k}(W_k(\widetilde
  X_k^{(a)}\mid\vartheta_k))$ with $a\in\{1,2\}$. 
Here the row-wise predictor is
$p_{\theta_k}:\mathbb R^{(R+1)d}\to\mathbb R^{(R+1)d}$ and the row-wise classifier for $C$
classes is $c_{\theta_k}:\mathbb R^{(R+1)d}\to\mathbb R^C$.
Let $n_k=|V_k|$ and we use $\operatorname{LN}$ for row-wise layer
normalization, $\operatorname{sg}$ for stop gradient, and $\operatorname{cos}$ for
cosine similarity. 
Set $\mathcal A_k^{(a)}=n_k^{-1}\sum_{v\in V_k}
  [1-\operatorname{cos}(\operatorname{LN}(\widehat Z_{k,v}^{(a)}),
    \operatorname{sg}(\operatorname{LN}(Z_{k,v}^{\rm tar})))]$. 
With an anti-collapse regularizer $\mathcal R_{\rm ac}$ inspired by variance--invariance--covariance regularization (VICReg), and a fixed view-agreement weight $\gamma_{\rm view}\ge 0$, we define
\begin{align}
  \mathcal L_{\mathrm{pred},k}
  &=\frac12\sum_{a=1}^{2}[\mathcal A_k^{(a)}+
  \mathcal R_{\rm ac}(\operatorname{LN}(\widehat Z_k^{(a)}))]
  +\gamma_{\rm view}\!\left[1-\frac1{n_k}\sum_{v\in V_k}
    \operatorname{cos}(\widehat Z_{k,v}^{(1)},
    \widehat Z_{k,v}^{(2)})\right].
  \label{eq:heat_jepa_loss}
\end{align}
For $A\in\mathbb R^{n\times h}$, define
$\overline A_j=n^{-1}\sum_{i=1}^nA_{ij}$ and
$\operatorname{Var}_n(A_{\cdot j})=
  \sum_{i=1}^n(A_{ij}-\overline A_j)^2/\max(n-1,1)$. Thus the implementation
uses sample variance for $n>1$ and the zero variance for $n=1$. Define
\begin{align}
  \mathcal V(A)
  &=\frac1h\sum_{j=1}^{h}
  \left[1-\sqrt{\operatorname{Var}_n(A_{\cdot j})+10^{-4}}\right]_+,
  \label{eq:heat_variance_penalty}\\
  \mathcal C(A)
  &=\frac{1}{h}\left\|
  \operatorname{offdiag}\!\left(
  \frac{\widetilde A^\top\widetilde A}{\max(n-1,1)}
  \right)\right\|_F^2,
  \qquad
  \widetilde A_{\cdot j}=
  \frac{A_{\cdot j}-\overline A_j}
  {\sqrt{\operatorname{Var}_n(A_{\cdot j})+10^{-4}}}.
  \label{eq:heat_covariance_penalty}
\end{align}
The anti-collapse term in Equation~\eqref{eq:heat_jepa_loss} is
$\mathcal R_{\rm ac}(A)=0.10\mathcal V(A)+0.005\mathcal C(A)$ and receives
$A=\operatorname{LN}(\widehat Z_k^{(a)})$, the rowwise layer-normalized
predictor matrix defined
before Equation~\eqref{eq:heat_jepa_loss}.  The term $\mathcal A_k^{(a)}$
averages the nodewise target-alignment cosine losses.  The
view-agreement weight is $\gamma_{\rm view}=0.10$. The fixed predictive coefficient is
$\lambda_{\rm pred}=1$.  Supervised cross-entropy is evaluated only on the
training mask. Prediction uses all client-local nodes, attributes, and incidence.
The validation-selected
checkpoint uses Accuracy with Macro-F1 tie-breaking and is tested once.

\paragraph{Shared matrix powers across diffusion times.}
The required powers of $P_k$ applied to $U_k$ are computed recursively and shared across all diffusion times, with time-specific Poisson coefficients.
Let $L_{\rm tail}$ be the recurrence degree chosen by
Equation~\eqref{eq:poisson_tail_certificate}, let $\mathcal E_k^{\rm rel}$ be the retained
relation columns, and let $\Omega_k=\{(v,e):B_{k,ve}=1\}$ be the client incidence list. One application of $P_k$ costs $O(|\Omega_k|d)$ time and $O((|V_k|+|\mathcal E_k^{\rm rel}|)d)$
working storage. 
The complete basis computation therefore has the following time and storage
complexity:
\begin{equation}
  O\!\left(L_{\rm tail}|\Omega_k|d+
  L_{\rm tail}R|V_k|d\right)
  \quad\text{time},\qquad
  O\!\left((R+1)|V_k|d+|\mathcal E_k^{\rm rel}|d+|\Omega_k|\right)
  \quad\text{storage}.
  \label{eq:heat_complexity}
\end{equation}
Increasing $R$
adds coefficient accumulation and representation width, but does not require additional incidence traversals for each channel.  
The current maximum learned time and fixed
tolerance determine the degree $L_{\rm tail}$. Although the target heat operator has
infinite Poisson support, Equation~\eqref{eq:tail-collapsed-evaluator} evaluates it
at finite degree with certified forward and time-gradient errors. 

Table~\ref{tab:heat-runtime-residual} reports the recurrence degree, numerical error, and local evaluation cost under the finite-task implementation.
The evaluated client has 445 nodes, 485 anonymous relation columns, 774 incidences, and 64
feature channels. Each CPU median is computed over five timing runs. Across all settings, the maximum absolute residual is $2.38\times10^{-7}$. The results show that the heat-basis computation meets the prescribed error tolerance, while maintaining small numerical residuals and moderate local computation cost.

\begin{table}[H]
  \centering
  \caption{Heat-basis evaluation on one CoraCA client at $K=6$. $L_{\rm tail}$ is selected using the $10^{-6}$ tail criterion in Equation~\eqref{eq:poisson_tail_certificate}, and residuals are measured against a
reference evaluation with tolerance $10^{-12}$. CPU timings are reported in milliseconds (ms) on identical local inputs.}
  \label{tab:heat-runtime-residual}
  \scriptsize
  \setlength{\tabcolsep}{3.5pt}
  \begin{tabular}{cccccc}
    \toprule
    $R$ & $L_{\rm tail}$ & Forward tail & Time-gradient tail & Relative $\ell_2$ residual & CPU median (ms) \\
    \midrule
    2 & 11 & $2.95{\times}10^{-8}$ & $2.87{\times}10^{-7}$ & $3.07{\times}10^{-8}$ & 26.30 \\
    4 & 15 & $4.35{\times}10^{-8}$ & $2.98{\times}10^{-7}$ & $2.26{\times}10^{-8}$ & 33.72 \\
    6 & 18 & $9.34{\times}10^{-8}$ & $5.26{\times}10^{-7}$ & $1.48{\times}10^{-8}$ & 33.27 \\
    8 & 21 & $1.06{\times}10^{-7}$ & $5.34{\times}10^{-7}$ & $2.20{\times}10^{-8}$ & 42.99 \\
    \bottomrule
  \end{tabular}
\end{table}

\paragraph{Communication protocol and privacy boundary.}
Table~\ref{tab:protocol-payload-scope} summarizes the information available to the server and the state retained locally. Under this protocol, FedASTRA communicates only the online model parameters, while auxiliary training states remain client-local.
FedAvg aggregates client updates with weights proportional to their fixed labeled-training counts $n_k^{\mathrm{tr}}$.  The local EMA target is reset at each round, while learned node states, incidence data, and numerical heat buffers remain client-local. 
This communication protocol does not by itself provide end-to-end differential privacy. Model updates may still reveal information about attributes or labels, and secure aggregation alone does not eliminate the risk of aggregate inversion attacks \citep{shi2025scalemia}. Record-level privacy would require an explicit threat model together with additional mechanisms such as calibrated randomization, privacy accounting, and compatible secure aggregation.

\begin{table}[H]
  \centering
  \scriptsize
  \setlength{\tabcolsep}{4pt}
  \renewcommand{\arraystretch}{1.08}
  \caption{Server-visible information and client-local state for FedASTRA.}
  \label{tab:protocol-payload-scope}
  \begin{tabularx}{\textwidth}{@{}lX@{}}
    \toprule
    Protocol item & Declared scope \\
    \midrule
    Server receives & Online model parameters from each participating client. \\
    \hline
    Server broadcasts & The aggregated online model parameters. \\
     \hline
    Fixed aggregation metadata & Each client's labeled-training count, fixed before training. \\
     \hline
    client-local state & Raw attributes, labels, incidence tables, relation membership, relation identifiers, round-local optimizer state, heat powers,
    latent states, and the local EMA target context encoder reset at each round. \\
     \hline
    Auxiliary learned-state payload & Zero bytes.  No embeddings, relation summaries, or target
    parameters are transmitted. \\
     \hline
    Structural access claim & Direct cross-client node, edge, and source-hyperedge
    identifiers are absent. \\
    \hline
    Privacy extension & Secure aggregation and differential privacy require their own mechanism, parameters, and accountant. \\
    \bottomrule
  \end{tabularx}
\end{table}

\subsection{Graph Partitions and Lifts}
\label{app:partitions}

\paragraph{Main graph partition.}
For the main experiments, following ~\citep{yao2023fedgcn,luo2025fedhgl}, we fix the task split before client assignment and partition nodes across $K\in \{3,6,9\}$ clients using class-conditioned Dirichlet sampling. Client allocation probabilities are drawn from a symmetric Dirichlet distribution with concentration $10^4$, subject to the declared load cap and minimum client size. The resulting assignment is precomputed from the full class labels, fixed before model training, and shared by all compared methods. For graph datasets, this node partition is also fixed before constructing client-local one-hop relations from the induced subgraphs.

\paragraph{Truncation-controlled graph partition.}
We construct low-, reference-, and high-truncation partitions while preserving each client's load and label histogram. Each paired comparison begins from one balanced label allocation and permits swaps between nodes of the same class. Every proposal therefore preserves each client's
load and label histogram. With nonnegative fixed weights $\alpha_2,\alpha_3$ and
$\rho_k=n_k/\sum_jn_j$, the three objectives are
\begin{equation}
  \begin{aligned}
    J_{\rm low}(z)&=\sum_{q\in\{2,3\}}\alpha_q\tau_q(z),\\
    J_{\rm ref}(z)&=\sum_{q\in\{2,3\}}\alpha_q
    \left(\tau_q(z)-\left[1-\sum_k\rho_k^q\right]\right)^2,\\
    J_{\rm high}(z)&=-\sum_{q\in\{2,3\}}\alpha_q\tau_q(z),
  \end{aligned}
  \quad\text{s.t.}\quad
  |V_k|=n_k,\qquad \operatorname{hist}(Y_{V_k})=h_k.
  \label{eq:partition_objective}
\end{equation}
Here, $1-\sum_k\rho_k^q$ is the expected order-$q$ truncation level under independent allocation with client proportions $\rho_k$. We minimize the corresponding objective for the low-truncation, independent-allocation reference, and high-truncation partitions.  Each pass proposes
2,048 same-class cross-client swaps, accepts strict improvements, and stops after at most 16 passes or after a pass accepts zero swaps.  The three partitions thus
have identical client loads and client-by-label histograms while their truncation coordinates differ.  Hyperedges are treated as sets, relation multiplicity and type are retained, and every objective is invariant to stored member order. We assign equal weight to the pair and triple coordinates, $\alpha_2=\alpha_3=1$.  

\paragraph{Graph lifting.}
For graph datasets, we adopt a partition-then-lift protocol. Each client first receives the induced subgraph $G[V_k]$, from which one-hop relations are constructed without remote endpoints or cross-client edges. The following proposition clarifies its relation to the alternative lift-then-partition construction.
\begin{proposition}[Lift-then-partition and partition-then-lift protocols]
  \label{prop:graph_lift}
  Let the rooted global one-hop relation be $e_v=\{v\}\cup N_G(v)$.  For a local
  root $v\in V_k$, the two graph-lift protocols obey the exact identity below for
  every client and every locally assigned root under one-hop lifting:
  \begin{equation}
    e_v\cap V_k=\{v\}\cup N_{G[V_k]}(v).
  \end{equation}
  Thus a local-root one-hop fragment has the same members under both
  constructions.  Lift-then-partition additionally creates fragments on client
  $k$ for roots $v\notin V_k$.  Partition-then-lift contains locally rooted
  columns.  For multi-hop relations, even a local root can differ because a path
  may leave $V_k$ and return.  This identity locates the extra globally
  constructed information in lift-then-partition and isolates globally supplied
  fragments from edges removed by the induced client graph.
\end{proposition}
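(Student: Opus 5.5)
The plan is a direct set comparison for a fixed client $k$ and a fixed root $v\in V_k$. Under lift-then-partition, the global rooted relation is $e_v=\{v\}\cup N_G(v)$, and client $k$ receives the fragment $e_v\cap V_k$. Under partition-then-lift, client $k$ first forms the induced subgraph $G[V_k]$ and then builds $\{v\}\cup N_{G[V_k]}(v)$. So the identity reduces to two elementary facts:
\begin{itemize}
  \item $\{v\}\cap V_k=\{v\}$, since $v\in V_k$;
  \item $N_G(v)\cap V_k=N_{G[V_k]}(v)$, by the definition of the induced subgraph.
\end{itemize}
For the second fact, $u\in N_{G[V_k]}(v)$ holds exactly when $u\in V_k$ and $\{u,v\}$ is an edge of $G$; that edge is kept because both endpoints lie in $V_k$. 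Distributing the intersection over the union, $(\{v\}\cup N_G(v))\cap V_k=(\{v\}\cap V_k)\cup(N_G(v)\cap V_k)$, then gives the displayed equality. It holds verbatim for every client and every locally assigned root, so local-root fragments have identical members under both protocols.

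Next come the two structural remarks. For lift-then-partition, take any $v\notin V_k$ with a neighbour in $V_k$. Then $e_v\cap V_k$ is nonempty, so client $k$ receives a fragment whose root is remote. Partition-then-lift never produces such a column, because its roots range only over $V_k$. This accounts for the claim that partition-then-lift contains only locally rooted columns, and it locates the extra information in lift-then-partition: these are globally supplied fragments.

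For the multi-hop caveat, a small counterexample suffices. Take a path $v-w-u$ with $v,u\in V_k$ and $w\notin V_k$, and no edge between $v$ and $u$. The global two-hop relation of $v$ contains $u$, so its intersection with $V_k$ contains $u$. In $G[V_k]$, however, $u$ is not within two hops of $v$, so $u$ is missing from the partition-then-lift relation. Thus even a local root can differ once a path leaves $V_k$ and returns.

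None of these steps is a real obstacle. The only care needed is bookkeeping: fixing the conventions that $N$ excludes $v$ itself and that multi-edges and duplicate columns are treated as sets. With those conventions in place, the membership equality is exactly the set identity above.
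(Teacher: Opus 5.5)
Your proposal is correct and is essentially the argument the paper relies on: the paper records this proposition without a separate proof, and its content is exactly your computation $(\{v\}\cup N_G(v))\cap V_k=\{v\}\cup(N_G(v)\cap V_k)=\{v\}\cup N_{G[V_k]}(v)$ together with the remote-root and leave-and-return remarks you give. Two minor points are worth stating explicitly. In the two-hop counterexample, take $G$ to be exactly the three-vertex path with $w$ in the middle, so that $v$ and $u$ have no other common neighbour in $V_k$. A remote-root fragment is retained only if it meets the visibility threshold, which is automatic when $c=1$.
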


\subsection{Dataset Details}
\label{app:datasets_relation_channels}

\begin{table}[t]
\centering
\scriptsize
\caption{Statistics of the datasets used in the experiments.}
\label{tab:dataset_statistics}
\resizebox{\linewidth}{!}{
\begin{tabular}{lcccccccc}
\toprule
 & CoraCA & DBLP4k & IMDB4k & Trivago
 & Cora & CiteSeer & Facebook & ogbn-papers100M \\
\midrule
\#Nodes       & 2,708 & 4,057 & 4,278 & 172,738 & 2,708 & 3,327 & 22,470 & 111,059,956 \\
\#Graph edges       & -- & -- & -- & -- & 5,278 & 4,552 & 85,424 & 1,614,062,356 \\
\#Hyperedges  & 1,072 & 22,051 & 7,338 & 233,202 & 2,708 & 3,327 & 22,470 & 111,059,956 \\
\#Incidences  & 4,585 & 110,330 & 17,106 & 726,861 & 13,264 & 12,431 & 193,318 & 3,339,184,668 \\
\#Classes     & 7 & 4 & 3 & 160 & 7 & 6 & 4 & 172 \\
\#Features    & 1,433 & 334 & 3,066 & 300 & 1,433 & 3,703 & 4,714 & 128 \\
\bottomrule
\end{tabular}
}
\end{table}

We use two types of datasets for evaluation, including native hypergraph datasets and graph datasets. For graph datasets, \#Graph edges counts distinct non-self undirected edges used for the one-hop lift, with duplicate and reciprocal entries counted once. Table~\ref{tab:dataset_statistics} summarizes the statistics of the datasets used in the experiments.
\paragraph{Native hypergraph datasets.}
CoraCA, DBLP4k, and IMDB4k are loaded from the public dataset release accompanying HGNN and HGNN+ \citep{feng2019hypergraph,gao2023hgnnplus}. 
CoraCA uses coauthorship relations \citep{yadati2019hypergcn}. 
DBLP4k retains paper and term relations, and IMDB4k retains director and actor relations as separate channels \citep{sun2011pathsim,fu2020magnn}. 
Trivago follows the public DHG-Bench release \citep{li2026dhgbench}, where browsing sessions define hyperedges. 
Distinct source relation instances are retained even when they contain the same set of nodes. 
FedASTRA combines all released incidence columns in one observable operator.  Relation types are used only to define analysis strata and are not provided as categorical embeddings to the model.
\paragraph{Graph datasets.}
The graph-derived experiments use the Cora and CiteSeer paper citation networks
\citep{yang2016revisiting}, the Facebook page mutual-like network \citep{rozemberczki2021multiscale}, and the large-scale ogbn-papers100M paper citation network \citep{hu2020ogb}.
Following the partition-then-lift protocol in Appendix~\ref{app:partitions}, nodes are first assigned to clients and each local root $v\in V_k$ then defines the closed one-hop relation $\{v\}\cup(N_G(v)\cap V_k)$. Thus, every retained incidence is visible to its assigned client through the common released hyperedge. The ogbn-papers100M experiment follows its official implementation and processes all nodes and citations without sampling.

\subsection{Experimental Configurations}
\label{app:federated-protocol}

\paragraph{Training and evaluation protocols.}
Within each comparison, we use the same data splits, released inputs, communication rounds, local updates, and validation-selection rule. 
Following \citep{luo2025fedhgl}, training fractions are $0.10$, $0.06$, and $0.06$ for CoraCA, DBLP4k, and IMDB4k, respectively, with validation and test fractions of $0.20$ and $0.40$. The remaining unlabeled nodes retain their structural roles and, together with labeled nodes, provide attributes and released local structure to the transductive encoders. Trivago uses a per-class stratified $0.50/0.25/0.25$ train/validation/test split that assigns every node once following DHG-Bench \citep{li2026dhgbench}. For graph datasets, Cora and CiteSeer use the $0.10/0.20/0.40$ ratios, while Facebook uses $0.008/0.20/0.40$, following \citep{luo2025fedhgl}. For all datasets except ogbn-papers100M, the splits are stratified by class and fixed before client assignment. Ogbn-papers100M uses its official temporal split \cite{hu2020ogb}. Training labels are used for model fitting, validation performance is used for checkpoint selection, and the selected model is evaluated once on the test set. 
The standard experiments use 150 communication rounds with three local updates per round, while Trivago uses 30 rounds. Results use five repetitions, except for Fed-HNHN and Trivago experiments, which use three. The ogbn-papers100M scale experiment uses 20 communication rounds with one local epoch per round and is reported from a single full-data run.

Node classification is evaluated using Accuracy and Macro-F1. Macro-F1 averages the per-class F1 scores over the full label set, assigning zero to any class absent from the test split. 
The evaluator also computes $\tau_2$ and $\tau_3$ from the source hypergraph and the fixed experimental partition. 
These quantities measure pair- and triple-order truncation and are used only for evaluation, not for model fitting, checkpoint selection, or inference. For summaries over \(K\in\{3,6,9\}\), we first average the results across the three client counts within each repetition, and then report the mean and sample standard deviation across repetitions.

\paragraph{Baseline configurations.}
For the native-hypergraph comparison, all baselines use the same client-local attributes and anonymous incidence columns as FedASTRA, with independently indexed columns across clients. Only trainable model parameters are communicated. We report baseline results only when the dataset, task split, client assignment, observed structure, communication rounds, local updates, and evaluation protocol match those of FedASTRA. Fed-HNHN follows the FedHGL implementation with bias-free linear maps, mean aggregation, and hidden BatchNorm~\citep{luo2025fedhgl}. Its BatchNorm running statistics remain client-local and are not included in FedAvg.

For the graph comparison, all methods use the same split, client assignment, communication rounds, local updates, and evaluation protocol. FedSage, FedGCN (0-hop), and FedASTRA use only client-local structure. FedSage+ exchanges generated boundary features and reconstruction gradients without direct cross-client identifiers~\citep{zhang2021fedsage}. FedGCN (2-hop) relies on cross-client neighborhood information, using pre-aggregated neighbor features in the original protocol~\citep{yao2023fedgcn}, while FedCog additionally exchanges intermediate representations through known cross-client node and edge correspondences~\citep{lei2023fedcog}. These different information-access settings are retained when interpreting the predictive results.

For the FedASTRA configuration, Table~\ref{tab:heat-finite-config} summarizes the settings used in the standard experiments and on Trivago. The order $q$ is used only for the identification analysis and is not a model hyperparameter. The corresponding evaluator queries do not affect model training or checkpoint selection.

\begin{table}[H]
  \centering
  \scriptsize
  \caption{Configurations of FedASTRA in the standard experiments and on Trivago.}
  \label{tab:heat-finite-config}
  \setlength{\tabcolsep}{3pt}
  \begin{tabularx}{\textwidth}{@{}lXX@{}}
    \toprule
     & Standard experiments & Trivago \\
    \midrule
    Hyperedge input & Type-marginalized binary incidence, unit weights & Type-marginalized binary incidence, unit weights \\
    \hline
    Online model & Hidden dimension 64, dropout $0.35$ & Hidden dimension 128, dropout $0.20$ \\
    \hline
    Optimization & Adam, learning rate $10^{-2}$, weight decay $5\times10^{-4}$, gradient clip $5.0$ & Adam, learning rate $3\times10^{-3}$, weight decay $5\times10^{-4}$, gradient clip $5.0$ \\
    \hline
    Heat basis & $R=4$, increment floor $10^{-4}$, tail tolerance $10^{-6}$ & $R=8$, increment floor $10^{-4}$, tail tolerance $10^{-6}$ \\
    \hline
    Predictive path & Two masks with probability $0.40$, $\lambda_{\rm pred}=1$, EMA decay $0.99$ & Two masks with probability $0.30$, $\lambda_{\rm pred}=1$, EMA decay $0.996$ \\
    \bottomrule
  \end{tabularx}
\end{table}

\paragraph{Ablation settings.}
For the adaptive diffusion ablation, we evaluate $R\in\{0,1,2,4,8\}$ with $\lambda_{\rm pred}=1$ on CoraCA, DBLP4k, and Trivago at $K=6$. 
The $R=0$ control retains the feature encoder, classifier, masked predictor, and EMA target but removes diffusion. 
Three paired repetitions use the same node splits, client assignments, and training seeds across all five settings, giving 45 fitted models in total. 
Because increasing $R$ also increases the representation dimension and model size, we report the corresponding parameter count for each setting.
The parameter count includes the feature encoder, learned time increments, classifier, and predictive head, while excluding the client-local EMA target.
Let $d_x$ be input-feature dimension, $d$ the hidden dimension, and $C$ the number of classes. For the single-layer linear classifier in our implementation, the number of online parameters is
\begin{equation}
  N_R=d_xd+4d^2+(C+12)d+C+Rd(4d+C+4).
  \label{eq:r-width-parameter-count}
\end{equation}
The feature encoder contributes $d_xd+3d$, the time increments $Rd$, the classifier $(R+1)d(C+2)+C$, and the predictive head $4(R+1)d^2+(R+1)d+6d$. 
This accounting provides the model-size context for the adaptive diffusion ablation. For the masked prediction ablation, we evaluate $R\in\{1,4\}$ and $\lambda_{\rm pred}\in\{0,1\}$ on CoraCA and DBLP4k at $K=6$.

To measure the computation cost of the adaptive diffusion ablation, each dataset is run on one NVIDIA RTX A6000 GPU, with repetitions and $R$ settings executed sequentially. 
Elapsed time is measured with CUDA synchronization and includes model initialization, all training rounds, validation-based checkpoint selection, checkpoint restoration, and final post-training measurements, while excluding dataset loading, split construction, and the final test evaluation.
Figure~\ref{fig:r-width-complete} and Table~\ref{tab:r-width-complete} report predictive performance, parameter counts, and elapsed training time under this common measurement protocol.

\paragraph{Communication accounting.}
Let $b_\theta$ denote the size in bytes of the online model parameters and $K_u=|\mathcal C_u|$ be the number of participating clients in round $u$. FedASTRA transmits one model download and one model upload per participating client, giving
\begin{equation}
  b_{{\rm round},u}=K_ub_\theta+K_ub_\theta=2K_ub_\theta,
  \qquad b_{\rm auxiliary}=0.
  \label{eq:heat_communication}
\end{equation}
Model communication is reported as the logical payload of these uploads and downloads. Serialization, encryption, packet headers, retransmission, and other transport overheads are excluded for all methods. 
For the federated graph baselines, auxiliary communication accounts for additional method-specific information required by the corresponding implementation, including generated features and reconstruction gradients for FedSage+, two-hop neighborhood information for FedGCN (2-hop), and routed intermediate representations for FedCog. Model communication is reported per round, while auxiliary communication is accumulated over the full run.

\paragraph{Full-data ogbn-papers100M protocol without sampling.}
To support full-data training at this scale, ogbn-papers100M is processed with streaming minibatches. Each round traverses every training root once, and a 200,000-incidence budget controls batch size without truncating any rooted relation. Each minibatch performs one optimizer step. 
For FedASTRA, AdamW \citep{loshchilov2019decoupled} uses a learning rate of $2\times10^{-3}$, weight decay $10^{-4}$, dropout $0.2$, and gradient-norm clipping at $2.0$. The model uses hidden dimension 197, $R=4$, mask probability $0.40$,  $\lambda_{\rm pred}=1$, and EMA decay $0.99$. The hidden dimension is selected to match the approximately one-million-parameter baseline budget. It gives 976,504 trainable parameters, while dimension 198 would give 985,420 and exceed the 984,442-parameter ceiling.

In the minibatch implementation, each rooted example is represented independently. If a paper appears in multiple rooted relations, a separate copy is created for each relation, preventing message passing between different prediction examples. Each copied node therefore belongs to exactly one relation column, and different relation columns have disjoint support. This gives $D_v=I$ and $B^\top B=D_e$, so the occurrence-space operator $P_{\rm occ}=BD_e^{-1}B^\top$ is an orthogonal projection:
\begin{equation}
  P_{\rm occ}^{2}
  =BD_e^{-1}(B^\top B)D_e^{-1}B^\top
  =P_{\rm occ}.
  \label{eq:papers-projection-proof}
\end{equation}
The heat kernel can therefore be evaluated exactly for every minibatch as
\begin{equation}
  \mathsf K_{\rm occ}(t)=e^{-t}I+(1-e^{-t})P_{\rm occ}.
  \label{eq:papers-heat-closed-form}
\end{equation}
\begin{corollary}[Exact spectral form in the occurrence protocol]
  \label{cor:papers-projection-spectrum}
  For every scalar multiplier $f$ on
  $\operatorname{spec}(P_{\rm occ})\subseteq\{0,1\}$,
  \begin{equation}
    f(P_{\rm occ})=f(0)I+[f(1)-f(0)]P_{\rm occ}.
    \label{eq:papers-projection-calculus}
  \end{equation}
  Consequently, for any $t>0$,
  $\operatorname{span}\{I,\mathsf K_{\rm occ}(t)\}
    =\operatorname{span}\{I,P_{\rm occ}\}$. Therefore, the identity channel and any one positive-time heat channel span the space of all linear spectral filters of $P_{\rm occ}$.
\end{corollary}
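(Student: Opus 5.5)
The plan is to use only the idempotence $P_{\rm occ}^2=P_{\rm occ}$ from Equation~\eqref{eq:papers-projection-proof}, together with the symmetry of $P_{\rm occ}=BD_e^{-1}B^\top$. Symmetry holds because $D_e$ is diagonal.

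\textbf{Step 1 (spectrum and functional calculus).} A symmetric idempotent is an orthogonal projection, so it is diagonalizable with $\operatorname{spec}(P_{\rm occ})\subseteq\{0,1\}$. Let $E_1$ be its range and $E_0$ its kernel. Then $\mathbb R^n=E_0\oplus E_1$ orthogonally, and $P_{\rm occ}$ is the projector onto $E_1$ while $I-P_{\rm occ}$ is the projector onto $E_0$. By the spectral theorem, any scalar multiplier $f$ defined on the spectrum satisfies
\[
f(P_{\rm occ})=f(0)(I-P_{\rm occ})+f(1)P_{\rm occ}=f(0)I+[f(1)-f(0)]P_{\rm occ}.
\]
This is Equation~\eqref{eq:papers-projection-calculus}. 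If one eigenvalue is absent, for example $P_{\rm occ}=0$ or $P_{\rm occ}=I$, the corresponding projector vanishes and the formula still holds, whatever value $f$ takes at the missing point.

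\textbf{Step 2 (heat channel).} Apply Step 1 to $f(\lambda)=e^{t(\lambda-1)}$. This gives $f(0)=e^{-t}$ and $f(1)=1$, so $\mathsf K_{\rm occ}(t)=e^{-t}I+(1-e^{-t})P_{\rm occ}$, which recovers Equation~\eqref{eq:papers-heat-closed-form}. Both inclusions between the spans follow from this formula:
\begin{itemize}
\item $\mathsf K_{\rm occ}(t)$ lies in $\operatorname{span}\{I,P_{\rm occ}\}$ directly.
\item For $t>0$ the coefficient $1-e^{-t}$ is nonzero, so $P_{\rm occ}=(\mathsf K_{\rm occ}(t)-e^{-t}I)/(1-e^{-t})$ lies in $\operatorname{span}\{I,\mathsf K_{\rm occ}(t)\}$.
\end{itemize}
Hence $\operatorname{span}\{I,\mathsf K_{\rm occ}(t)\}=\operatorname{span}\{I,P_{\rm occ}\}$.

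\textbf{Step 3 (all linear spectral filters).} By Step 1, every spectral filter $f(P_{\rm occ})$ lies in $\operatorname{span}\{I,P_{\rm occ}\}$. Conversely, $aI+bP_{\rm occ}=g(P_{\rm occ})$ with $g(\lambda)=a+b\lambda$, so every element of the span is itself a spectral filter. Combining this with Step 2 shows that the identity channel together with any single positive-time heat channel spans exactly the space of linear spectral filters of $P_{\rm occ}$.

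There is no real obstacle in this argument. The only point needing care is justifying diagonalizability, which rests on symmetry or, equivalently, on idempotence giving minimal polynomial dividing $\lambda(\lambda-1)$. The degenerate case with a single eigenvalue must also be checked, and Step 1 handles it.
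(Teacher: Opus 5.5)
Your proof is correct and follows essentially the same route as the paper, which expands $f(P_{\rm occ})$ through the spectral projectors $I-P_{\rm occ}$ and $P_{\rm occ}$ and then inverts the heat closed form using $1-e^{-t}>0$. You additionally spell out two checks the paper leaves implicit: diagonalizability via symmetry or idempotence, and the degenerate cases $P_{\rm occ}=0$ or $P_{\rm occ}=I$.
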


The corollary follows from the spectral projectors $I-P_{\rm occ}$ and $P_{\rm occ}$, together
with $1-e^{-t}>0$. 
Additional learned times provide separately parameterized feature channels before nonlinear normalization and prediction, although they do not enlarge the linear spectral space of $P_{\rm occ}$. 
The closed form in Equation~(\ref{eq:papers-heat-closed-form}) relies on the disjoint-copy minibatch construction.
If node copies are shared across rooted relations, the resulting operator need not satisfy $P^2=P$. 
The general client-local transition in Equation~\eqref{eq:local_transition} need not satisfy $P_k^2=P_k$ and is therefore evaluated
using the recurrence in Appendix~\ref{app:proof-heat-stability}. 
In float32, the scale implementation gives $\max|P_{\rm occ}^2-P_{\rm occ}|=0$ and a maximum
absolute difference of $1.91\times10^{-6}$ between Equation~\eqref{eq:papers-heat-closed-form}
and a direct matrix exponential. 

\section{Additional Experimental Results}
\label{app:results}

This section provides additional experimental results by reporting extended performance evaluations, controlled analyses of the proposed mechanisms, and detailed scalability measurements under the corresponding experimental protocols.

\subsection{Performance on Native Hypergraphs}
\label{app:results-native}
Table \ref{tab:appendix-heat-native-by-client} reports the complete native-hypergraph results for each client count $K\in\{3,6,9\}$. On CoraCA, the performance gap between FedASTRA and Fed-HGNN increases as the number of clients grows from 3 to 9, from 4.83\% to 7.75\% accuracy and from 6.54\% to 9.84\% Macro-F1. This trend is consistent with the increasing fragmentation of local observations under matched client inputs.

\begin{table}[H]
  \centering
  \caption{Complete native-hypergraph results under different number of clients.}
  \label{tab:appendix-heat-native-by-client}
  \scriptsize
  \setlength{\tabcolsep}{3pt}
  \begin{tabular}{lcccccc}
    \toprule
    Method & \multicolumn{2}{c}{$K=3$} & \multicolumn{2}{c}{$K=6$} & \multicolumn{2}{c}{$K=9$} \\
    & Acc. (\%) & Macro-F1 (\%) & Acc. (\%) & Macro-F1 (\%) & Acc. (\%) & Macro-F1 (\%) \\
    \midrule
    \multicolumn{7}{l}{\textit{CoraCA}} \\
    FedASTRA & 76.27 $\pm$ 0.54 & 74.33 $\pm$ 0.72 & 74.97 $\pm$ 0.63 & 72.79 $\pm$ 0.93 & 74.27 $\pm$ 0.73 & 71.52 $\pm$ 0.97 \\
    Fed-HGNN & 71.44 $\pm$ 1.90 & 67.79 $\pm$ 2.12 & 67.97 $\pm$ 1.38 & 63.57 $\pm$ 1.50 & 66.52 $\pm$ 2.32 & 61.68 $\pm$ 3.02 \\
    Fed-HNHN & 57.30 $\pm$ 0.82 & 54.40 $\pm$ 1.21 & 54.16 $\pm$ 2.46 & 49.08 $\pm$ 3.29 & 52.74 $\pm$ 1.13 & 47.77 $\pm$ 2.57 \\
    Fed-HyperGCN & 67.38 $\pm$ 1.01 & 63.08 $\pm$ 1.64 & 63.84 $\pm$ 2.16 & 57.65 $\pm$ 2.56 & 60.85 $\pm$ 2.56 & 53.47 $\pm$ 3.07 \\
    Fed-AllDeepSets & 45.90 $\pm$ 3.43 & 37.62 $\pm$ 4.62 & 36.54 $\pm$ 2.11 & 24.37 $\pm$ 2.86 & 36.45 $\pm$ 1.45 & 19.26 $\pm$ 6.62 \\
    Fed-AllSetTransformer & 30.22 $\pm$ 0.00 & 6.63 $\pm$ 0.00 & 30.22 $\pm$ 0.00 & 6.63 $\pm$ 0.00 & 30.22 $\pm$ 0.00 & 6.63 $\pm$ 0.00 \\
    Fed-UniGCNII & 63.57 $\pm$ 1.89 & 56.52 $\pm$ 5.53 & 58.89 $\pm$ 1.52 & 49.01 $\pm$ 3.67 & 56.16 $\pm$ 2.07 & 44.80 $\pm$ 3.52 \\
    Fed-ED-HNN & 61.90 $\pm$ 1.51 & 57.27 $\pm$ 1.53 & 59.28 $\pm$ 2.38 & 52.11 $\pm$ 3.62 & 57.69 $\pm$ 2.22 & 50.46 $\pm$ 3.45 \\
    \midrule
    \multicolumn{7}{l}{\textit{DBLP4k}} \\
    FedASTRA & 78.52 $\pm$ 1.26 & 77.84 $\pm$ 1.04 & 79.75 $\pm$ 0.82 & 79.15 $\pm$ 0.75 & 79.40 $\pm$ 1.35 & 78.84 $\pm$ 1.24 \\
    Fed-HGNN & 77.62 $\pm$ 1.48 & 76.98 $\pm$ 1.39 & 77.73 $\pm$ 1.62 & 77.00 $\pm$ 1.37 & 77.55 $\pm$ 1.30 & 76.96 $\pm$ 1.16 \\
    Fed-HNHN & 72.23 $\pm$ 2.64 & 71.44 $\pm$ 2.66 & 68.43 $\pm$ 2.55 & 67.70 $\pm$ 2.67 & 68.08 $\pm$ 1.23 & 67.22 $\pm$ 0.97 \\
    Fed-HyperGCN & 74.74 $\pm$ 1.29 & 73.43 $\pm$ 1.63 & 72.29 $\pm$ 0.60 & 70.88 $\pm$ 0.90 & 71.04 $\pm$ 1.85 & 70.06 $\pm$ 1.80 \\
    Fed-AllDeepSets & 39.25 $\pm$ 0.68 & 31.74 $\pm$ 1.79 & 39.36 $\pm$ 0.72 & 31.96 $\pm$ 2.36 & 39.45 $\pm$ 1.18 & 33.63 $\pm$ 1.75 \\
    Fed-AllSetTransformer & 30.87 $\pm$ 0.97 & 22.50 $\pm$ 2.61 & 30.82 $\pm$ 1.20 & 21.30 $\pm$ 1.64 & 31.28 $\pm$ 1.73 & 21.62 $\pm$ 1.93 \\
    Fed-UniGCNII & 79.03 $\pm$ 2.12 & 77.85 $\pm$ 2.47 & 77.60 $\pm$ 2.49 & 76.18 $\pm$ 2.98 & 75.44 $\pm$ 3.10 & 73.92 $\pm$ 3.27 \\
    Fed-ED-HNN & 71.92 $\pm$ 3.52 & 71.19 $\pm$ 3.46 & 72.30 $\pm$ 2.30 & 71.59 $\pm$ 2.40 & 72.09 $\pm$ 2.36 & 71.30 $\pm$ 2.37 \\
    \midrule
    \multicolumn{7}{l}{\textit{IMDB4k}} \\
    FedASTRA & 51.99 $\pm$ 1.85 & 51.92 $\pm$ 1.53 & 51.05 $\pm$ 1.53 & 50.88 $\pm$ 1.34 & 50.34 $\pm$ 0.92 & 50.37 $\pm$ 0.85 \\
    Fed-HGNN & 50.14 $\pm$ 1.38 & 50.10 $\pm$ 1.39 & 48.36 $\pm$ 1.47 & 48.25 $\pm$ 1.46 & 48.05 $\pm$ 1.49 & 47.41 $\pm$ 2.11 \\
    Fed-HNHN & 46.14 $\pm$ 0.62 & 44.75 $\pm$ 0.57 & 43.81 $\pm$ 2.13 & 41.84 $\pm$ 2.95 & 44.92 $\pm$ 1.93 & 42.54 $\pm$ 1.81 \\
    Fed-HyperGCN & 47.36 $\pm$ 1.87 & 46.33 $\pm$ 2.22 & 43.26 $\pm$ 1.50 & 41.35 $\pm$ 2.30 & 41.81 $\pm$ 1.64 & 34.90 $\pm$ 5.71 \\
    Fed-AllDeepSets & 45.82 $\pm$ 1.87 & 43.96 $\pm$ 4.83 & 37.73 $\pm$ 1.67 & 30.24 $\pm$ 1.83 & 37.42 $\pm$ 1.35 & 28.15 $\pm$ 3.67 \\
    Fed-AllSetTransformer & 37.04 $\pm$ 0.03 & 18.04 $\pm$ 0.05 & 37.06 $\pm$ 0.05 & 18.06 $\pm$ 0.10 & 37.03 $\pm$ 0.11 & 18.21 $\pm$ 0.22 \\
    Fed-UniGCNII & 51.12 $\pm$ 2.34 & 49.88 $\pm$ 3.45 & 37.09 $\pm$ 0.87 & 21.95 $\pm$ 5.33 & 36.66 $\pm$ 0.87 & 19.87 $\pm$ 3.96 \\
    Fed-ED-HNN & 48.93 $\pm$ 1.98 & 48.61 $\pm$ 2.06 & 40.75 $\pm$ 0.94 & 30.57 $\pm$ 1.29 & 40.71 $\pm$ 0.93 & 30.58 $\pm$ 1.22 \\
    \midrule
    \multicolumn{7}{l}{\textit{Trivago}} \\
    FedASTRA & 33.55 $\pm$ 1.60 & 9.64 $\pm$ 0.35 & 27.70 $\pm$ 1.14 & 7.55 $\pm$ 0.32 & 26.01 $\pm$ 1.29 & 6.70 $\pm$ 0.15 \\
    Fed-HGNN & 22.99 $\pm$ 0.72 & 5.00 $\pm$ 0.71 & 22.08 $\pm$ 0.77 & 4.05 $\pm$ 0.59 & 20.48 $\pm$ 0.87 & 3.75 $\pm$ 0.67 \\
    Fed-HNHN & 29.10 $\pm$ 1.36 & 9.19 $\pm$ 0.19 & 24.87 $\pm$ 1.06 & 7.37 $\pm$ 0.35 & 23.08 $\pm$ 0.70 & 6.59 $\pm$ 0.24 \\
    Fed-HyperGCN & 9.40 $\pm$ 0.65 & 0.15 $\pm$ 0.06 & 9.00 $\pm$ 0.02 & 0.11 $\pm$ 0.01 & 9.10 $\pm$ 0.21 & 0.13 $\pm$ 0.04 \\
    Fed-AllDeepSets & 11.37 $\pm$ 0.11 & 0.30 $\pm$ 0.01 & 10.82 $\pm$ 0.15 & 0.28 $\pm$ 0.03 & 10.45 $\pm$ 0.22 & 0.24 $\pm$ 0.01 \\
    Fed-AllSetTransformer & 8.99 $\pm$ 0.00 & 0.10 $\pm$ 0.00 & 8.99 $\pm$ 0.00 & 0.10 $\pm$ 0.00 & 8.99 $\pm$ 0.00 & 0.10 $\pm$ 0.00 \\
    Fed-UniGCNII & 9.25 $\pm$ 0.23 & 0.16 $\pm$ 0.05 & 9.14 $\pm$ 0.09 & 0.16 $\pm$ 0.01 & 9.07 $\pm$ 0.10 & 0.13 $\pm$ 0.03 \\
    Fed-ED-HNN & 9.54 $\pm$ 0.86 & 0.16 $\pm$ 0.09 & 9.05 $\pm$ 0.06 & 0.13 $\pm$ 0.03 & 8.99 $\pm$ 0.01 & 0.10 $\pm$ 0.00 \\
    \bottomrule
  \end{tabular}
\end{table}

\subsection{Performance Against Baselines with Varying Information Access}
\label{app:results-graph}

Table \ref{tab:appendix-heat-graph-by-client} reports comparisons against federated graph learning baselines under different cross-client information-access settings, while Table \ref{tab:heat-graph-access-references} summarizes methods according to their available structural information and communication requirements. FedASTRA consistently outperforms identifier-free baselines across client counts and also surpasses FedSage+ despite the latter introducing additional exchanged information. In contrast, FedGCN (2-hop) and FedCog access explicit cross-client correspondence and therefore exploit a strictly richer observation setting. These results suggest that additional cross-client information is not universally beneficial: its value depends on whether the introduced information can be reliably exploited under the correspondence assumptions.

\begin{table}[H]
  \centering
  \caption{Complete results under different cross-client information-access settings.}
  \label{tab:appendix-heat-graph-by-client}
  \scriptsize
  \setlength{\tabcolsep}{3pt}
  \begin{tabular}{lcccccc}
    \toprule
    Method & \multicolumn{2}{c}{$K=3$} & \multicolumn{2}{c}{$K=6$} & \multicolumn{2}{c}{$K=9$} \\
    & Acc. (\%) & Macro-F1 (\%) & Acc. (\%) & Macro-F1 (\%) & Acc. (\%) & Macro-F1 (\%) \\
    \midrule
    \multicolumn{7}{l}{\textit{Cora}} \\
    FedSage & 70.35 $\pm$ 1.96 & 67.77 $\pm$ 1.67 & 63.18 $\pm$ 0.97 & 58.59 $\pm$ 0.87 & 58.65 $\pm$ 1.85 & 52.17 $\pm$ 2.81 \\
    FedGCN (0-hop) & 74.07 $\pm$ 1.57 & 71.24 $\pm$ 1.86 & 65.90 $\pm$ 2.05 & 61.96 $\pm$ 3.34 & 62.07 $\pm$ 1.79 & 57.44 $\pm$ 2.91 \\
    FedSage+ & 69.67 $\pm$ 0.71 & 66.87 $\pm$ 0.30 & 63.11 $\pm$ 1.66 & 57.78 $\pm$ 1.63 & 61.02 $\pm$ 1.40 & 55.05 $\pm$ 3.65 \\
    FedGCN (2-hop) & 84.05 $\pm$ 1.17 & 83.01 $\pm$ 1.23 & 84.05 $\pm$ 1.16 & 83.12 $\pm$ 1.20 & 84.12 $\pm$ 0.92 & 82.92 $\pm$ 0.78 \\
    FedCog & 84.29 $\pm$ 1.50 & 83.15 $\pm$ 1.64 & 84.34 $\pm$ 1.06 & 83.35 $\pm$ 1.00 & 83.92 $\pm$ 0.78 & 82.73 $\pm$ 0.66 \\
    FedASTRA & 77.54 $\pm$ 1.08 & 76.06 $\pm$ 1.31 & 74.45 $\pm$ 1.60 & 72.13 $\pm$ 1.81 & 73.33 $\pm$ 1.22 & 71.83 $\pm$ 1.31 \\
    \midrule
    \multicolumn{7}{l}{\textit{CiteSeer}} \\
    FedSage & 65.08 $\pm$ 0.96 & 59.56 $\pm$ 1.42 & 63.73 $\pm$ 0.74 & 57.00 $\pm$ 1.51 & 62.93 $\pm$ 1.72 & 55.67 $\pm$ 1.25 \\
    FedGCN (0-hop) & 68.68 $\pm$ 1.50 & 61.60 $\pm$ 1.32 & 65.83 $\pm$ 0.56 & 59.06 $\pm$ 0.52 & 65.14 $\pm$ 1.80 & 58.10 $\pm$ 1.68 \\
    FedSage+ & 63.16 $\pm$ 1.17 & 57.75 $\pm$ 1.59 & 62.53 $\pm$ 1.66 & 56.48 $\pm$ 2.61 & 62.95 $\pm$ 1.62 & 57.69 $\pm$ 2.10 \\
    FedGCN (2-hop) & 72.20 $\pm$ 1.44 & 65.64 $\pm$ 2.26 & 72.27 $\pm$ 1.53 & 64.88 $\pm$ 1.44 & 72.38 $\pm$ 1.50 & 64.85 $\pm$ 2.07 \\
    FedCog & 72.21 $\pm$ 1.49 & 65.84 $\pm$ 1.77 & 72.38 $\pm$ 1.04 & 65.09 $\pm$ 1.03 & 72.51 $\pm$ 1.42 & 65.11 $\pm$ 1.94 \\
    FedASTRA & 67.61 $\pm$ 2.32 & 63.62 $\pm$ 1.61 & 68.00 $\pm$ 1.86 & 63.67 $\pm$ 2.07 & 67.92 $\pm$ 1.88 & 63.65 $\pm$ 2.13 \\
    \midrule
    \multicolumn{7}{l}{\textit{Facebook}} \\
    FedSage & 72.63 $\pm$ 0.73 & 69.00 $\pm$ 1.08 & 65.64 $\pm$ 1.81 & 61.85 $\pm$ 2.39 & 60.41 $\pm$ 1.61 & 53.46 $\pm$ 4.16 \\
    FedGCN (0-hop) & 75.14 $\pm$ 0.95 & 72.62 $\pm$ 0.74 & 67.87 $\pm$ 1.71 & 64.39 $\pm$ 2.24 & 64.30 $\pm$ 1.22 & 59.44 $\pm$ 2.83 \\
    FedSage+ & 68.17 $\pm$ 0.33 & 65.41 $\pm$ 0.64 & 61.85 $\pm$ 0.95 & 57.62 $\pm$ 1.32 & 58.56 $\pm$ 1.33 & 54.02 $\pm$ 1.99 \\
    FedGCN (2-hop) & 83.21 $\pm$ 0.76 & 82.02 $\pm$ 0.87 & 82.84 $\pm$ 0.50 & 81.58 $\pm$ 0.67 & 82.73 $\pm$ 0.85 & 81.51 $\pm$ 1.04 \\
    FedCog & 82.46 $\pm$ 0.69 & 81.24 $\pm$ 0.73 & 82.50 $\pm$ 0.84 & 81.20 $\pm$ 1.09 & 82.15 $\pm$ 0.94 & 80.85 $\pm$ 1.13 \\
    FedASTRA & 76.88 $\pm$ 1.44 & 75.58 $\pm$ 1.60 & 75.57 $\pm$ 0.94 & 73.71 $\pm$ 0.86 & 75.12 $\pm$ 1.49 & 73.30 $\pm$ 1.76 \\
    \bottomrule
  \end{tabular}
\end{table}

\begin{table}[H]
  \centering
  \caption{Summarized results of federated graph baselines under different cross-client information-access settings. Results are aggregated across client counts $K\in\{3,6,9\}$.}
  \label{tab:heat-graph-access-references}
  \small
  \setlength{\tabcolsep}{4pt}
  \begin{tabularx}{\textwidth}{lXcc}
    \toprule
    Method & Exchanged or accessed information & Accuracy (\%) & Macro-F1 (\%) \\
    \midrule
    \multicolumn{4}{l}{\textit{Cora}} \\
    FedASTRA & Model parameters only, with no direct cross-client identifiers & 75.11 $\pm$ 0.99 & 73.34 $\pm$ 1.13 \\
    FedSage+ & Locally generated boundary features and cross-client reconstruction gradients, with no direct identifiers & 64.60 $\pm$ 1.11 & 59.90 $\pm$ 1.71 \\
    FedGCN (2-hop) & Cross-client node IDs, edges, and pre-aggregated neighbor features & 84.07 $\pm$ 1.03 & 83.02 $\pm$ 1.04 \\
    FedCog & Cross-client node IDs, edges, and intermediate embeddings & 84.18 $\pm$ 1.04 & 83.08 $\pm$ 1.03 \\
    \addlinespace[2pt]
    \multicolumn{4}{l}{\textit{CiteSeer}} \\
    FedASTRA & Model parameters only, with no direct cross-client identifiers & 67.84 $\pm$ 1.66 & 63.64 $\pm$ 1.83 \\
    FedSage+ & Locally generated boundary features and cross-client reconstruction gradients, with no direct identifiers & 62.88 $\pm$ 1.02 & 57.31 $\pm$ 1.74 \\
    FedGCN (2-hop) & Cross-client node IDs, edges, and pre-aggregated neighbor features & 72.28 $\pm$ 1.46 & 65.12 $\pm$ 1.68 \\
    FedCog & Cross-client node IDs, edges, and intermediate embeddings & 72.37 $\pm$ 1.26 & 65.35 $\pm$ 1.44 \\
    \addlinespace[2pt]
    \multicolumn{4}{l}{\textit{Facebook}} \\
    FedASTRA & Model parameters only, with no direct cross-client identifiers & 75.86 $\pm$ 1.02 & 74.20 $\pm$ 1.07 \\
    FedSage+ & Locally generated boundary features and cross-client reconstruction gradients, with no direct identifiers & 62.86 $\pm$ 0.71 & 59.02 $\pm$ 1.06 \\
    FedGCN (2-hop) & Cross-client node IDs, edges, and pre-aggregated neighbor features & 82.92 $\pm$ 0.61 & 81.70 $\pm$ 0.72 \\
    FedCog & Cross-client node IDs, edges, and intermediate embeddings & 82.37 $\pm$ 0.73 & 81.10 $\pm$ 0.90 \\
    \bottomrule
  \end{tabularx}
\end{table}

\subsection{Validation on Proposed Analysis and Mechanism}
\label{app:results-mechanism}

Table \ref{tab:heat-ablation} isolates the contributions of the adaptive diffusion basis and the masked prediction objective. Removing masked prediction $\lambda_{\mathrm{pred}}=0$ consistently reduces both Accuracy and Macro-F1 on CoraCA and DBLP4k, indicating that the predictive alignment component provides complementary representation learning signals. Increasing the heat basis width from $R=1$ to $R=4$ further improves performance when the predictive objective is enabled, while its effect is smaller without masked prediction. These results support the complementary roles of adaptive diffusion and masked representation alignment in FedASTRA.

\begin{table}[H]
  \centering
  \caption{Ablation study of adaptive diffusion and masked prediction at $K=6$. This study is separate from the heat-width sweep in Appendix~\ref{app:protocols}.}
  \label{tab:heat-ablation}
  \small
  \begin{tabular}{llrcc}
    \toprule
    Dataset & Reported condition & Parameters & Accuracy (\%) & Macro-F1 (\%) \\
    \midrule
    CoraCA & $R=1,\ \lambda_{\mathrm{pred}}=0$ & 126,407 & 67.26 $\pm$ 1.48 & 62.94 $\pm$ 1.75 \\
    CoraCA & $R=1,\ \lambda_{\mathrm{pred}}=1$ & 126,407 & 74.27 $\pm$ 0.56 & 71.96 $\pm$ 0.58 \\
    CoraCA & $R=4,\ \lambda_{\mathrm{pred}}=0$ & 177,671 & 69.32 $\pm$ 0.69 & 65.31 $\pm$ 1.15 \\
    CoraCA & $R=4,\ \lambda_{\mathrm{pred}}=1$ & 177,671 & 74.97 $\pm$ 0.63 & 72.79 $\pm$ 0.93 \\
    DBLP4k & $R=1,\ \lambda_{\mathrm{pred}}=0$ & 55,684 & 77.54 $\pm$ 0.70 & 76.77 $\pm$ 0.78 \\
    DBLP4k & $R=1,\ \lambda_{\mathrm{pred}}=1$ & 55,684 & 79.58 $\pm$ 0.67 & 78.99 $\pm$ 0.49 \\
    DBLP4k & $R=4,\ \lambda_{\mathrm{pred}}=0$ & 106,372 & 78.23 $\pm$ 0.51 & 77.60 $\pm$ 0.56 \\
    DBLP4k & $R=4,\ \lambda_{\mathrm{pred}}=1$ & 106,372 & 79.75 $\pm$ 0.82 & 79.15 $\pm$ 0.75 \\
    \bottomrule
  \end{tabular}
\end{table}

Table~\ref{tab:heat-controlled} reports the controlled-truncation results. The controlled partitions exhibit the expected ordering of retained higher-order co-membership mass. The observed pair/triple mass is 72.44/65.11\%, 16.67/2.78\%, and 9.68/1.10\% for the low, reference, and high CoraCA partitions, respectively. The corresponding DBLP4k values are 27.21/9.24\%, 16.67/2.78\%, and 16.33/2.70\%. These results confirm that the three partition conditions differ in structural truncation while preserving client loads and label histograms across paired comparisons. The ordering applies to co-membership mass only. Prediction performance also depends on feature and label alignment, so the truncation level alone does not determine Accuracy or Macro-F1.

\begin{table}[H]
  \centering
  \caption{Controlled-truncation results at $K=6$.}
  \label{tab:heat-controlled}
  \small
  \begin{tabular}{llcc}
    \toprule
    Dataset & Reported condition & Accuracy (\%) & Macro-F1 (\%) \\
    \midrule
    CoraCA & Low-truncation & 77.99 $\pm$ 0.58 & 76.40 $\pm$ 0.74 \\
    CoraCA & Reference & 75.58 $\pm$ 1.33 & 73.27 $\pm$ 1.07 \\
    CoraCA & High-truncation & 73.97 $\pm$ 0.80 & 71.34 $\pm$ 0.80 \\
    DBLP4k & Low-truncation & 78.23 $\pm$ 1.76 & 77.54 $\pm$ 1.59 \\
    DBLP4k & Reference & 79.46 $\pm$ 1.23 & 78.85 $\pm$ 1.13 \\
    DBLP4k & High-truncation & 79.24 $\pm$ 1.49 & 78.66 $\pm$ 1.49 \\
    \bottomrule
  \end{tabular}
\end{table}

Table~\ref{tab:identification-audit} and Table~\ref{tab:identification-absolute-width}
summarize the two-client structural identification audit. The audit uses five fixed
$K=2$, $c=1$ diagnostic partitions per source, exact hyperedge types, and unit weights.
Held-out parent tokens mark cross-client rows and verify the oracle
$\Phi_q(\mathcal H)=\sum_e\binom{|e|}{q}$ after the partition is fixed. Sorting fragment
sizes gives both attainable endpoints. Labels are used only by the partition constructor,
and no learner is trained. These evaluator partitions are independent of the reported
$K\in\{3,6,9\}$ training partitions. Across all evaluated cells, no higher-order
co-membership target is point identified, and the median relative endpoint width
increases from $36.37\%$ at $q=2$ to $93.71\%$ at $q=5$. These results empirically
support the predicted ambiguity caused by latent hyperedge correspondence, especially for higher-order structural targets.

\begin{table}[H]
  \centering
  \caption{Two-client structural identification audit on four native datasets. For each
order $q$, the audit contains 20 releases and 30 eligible strata. Relative endpoint width is
$(\Phi_q^+-\Phi_q^-)/\Phi_q^+$. By Corollary~\ref{cor:learner-fiber-lower-bound}, half of this normalized width gives the
corresponding minimax lower bound for observable point estimation of the same order-$q$
mass.}
  \label{tab:identification-audit}
  \scriptsize
  \setlength{\tabcolsep}{4pt}
  \begin{tabular}{ccccccc}
    \toprule
    $q$ & Releases & Eligible strata & Point identified & Median relative width & $q$-mass floor & Oracle coverage \\
    \midrule
    2 & 20 & 30 & 0/30 & 36.37\% & 18.18\% & 20/20 \\
    3 & 20 & 30 & 0/30 & 70.56\% & 35.28\% & 20/20 \\
    4 & 20 & 30 & 0/30 & 86.62\% & 43.31\% & 20/20 \\
    5 & 20 & 30 & 0/30 & 93.71\% & 46.85\% & 20/20 \\
    \bottomrule
  \end{tabular}
\end{table}

\begin{table}[H]
  \centering
  \caption{Median absolute sharp-endpoint gaps over five two-client diagnostic partitions, measured in order-$q$ tuple units. The values preserve the original
dataset scale, complementing the normalized endpoint widths reported in Table~\ref{tab:identification-audit}.}
  \label{tab:identification-absolute-width}
  \small
  \setlength{\tabcolsep}{7pt}
  \begin{tabular}{lrrrr}
    \toprule
    Dataset & $q=2$ & $q=3$ & $q=4$ & $q=5$ \\
    \midrule
    CoraCA & 6,714 & 69,204 & 469,120 & 2,631,843 \\
    DBLP4k & 3,752,114 & 983,644,998 & $2.1971{\times}10^{11}$ & $4.2294{\times}10^{13}$ \\
    IMDB4k & 18,587 & 205,338 & 1,581,947 & 10,490,694 \\
    Trivago & 410,593 & 3,263,206 & 23,744,190 & 209,428,691 \\
    \bottomrule
  \end{tabular}
\end{table}

To further verify the correspondence-invariant property of the learned representation,
we conduct an evaluator-only permutation audit on the released relation columns. A separate CPU float32 audit permutes the 743 released relation columns of CoraCA client 0 at $K=3$ and protocol seed 0 a total of 32 times. The fixed instance has 892 nodes, 1,499
incidences, a 128-dimensional projected input, hidden width 16, and $R=4$. It uses neither labels nor training. The maximum differences in $P_k$, all heat-filtered features, and fixed-model logits are each zero, verifying the algebra used by Corollary~\ref{cor:completion-invariance} under this fixed implementation audit with shared
numerical inputs.

Table~\ref{tab:identification-misspecification} reports evaluator-only audits under relaxed observation assumptions. Type coarsening removes a matching restriction while preserving the released row inventory, and the numeric containment check confirms that
the coarsened endpoint hull contains the exact hull in all audited cells. In contrast, status removal and one-sided deletion alter the released inventory, so finite compatible
classes require additional public bounds beyond the exact equal-count endpoint formula. Training and model selection are absent from all 120 audit outcomes in this evaluator-only audit.

\begin{table}[H]
  \centering
  \caption{Evaluator-only audit of relaxed observation assumptions on CoraCA and DBLP4k. The 40 cells consist of five fixed partitions and $q\in\{2,3,4,5\}$ for each dataset.}
  \label{tab:identification-misspecification}
  \small
  \setlength{\tabcolsep}{5pt}
  \begin{tabularx}{\textwidth}{@{}lccX@{}}
    \toprule
    Release change & Cells & Numeric endpoint & Audited conclusion \\
    \midrule
    Merge observed type strata & 40 & Available & The coarsened endpoint hull contains the exact hull in 40/40 cells. \\
    Remove local/cross marks & 40 & Bound required & Finite endpoints require a public cross-row count and support bound. \\
    Delete 10\% of one-side fragments & 40 & Bound required & Finite endpoints require public missing-row count, weight, and support bounds. \\
    \bottomrule
  \end{tabularx}
\end{table}

Figure~\ref{fig:r-width-complete} and Table~\ref{tab:r-width-complete} report the heat-width
sweep over $R\in\{0,1,2,4,8\}$ on CoraCA, DBLP4k, and Trivago.
CoraCA reaches 76.68\% Accuracy and 74.35\% Macro-F1 at $R=2$, compared with 70.67\% and
67.85\% at $R=0$. DBLP4k has closely grouped means over $R=1,2,4$, within 0.09 Accuracy
points and 0.10 Macro-F1 points, followed by lower means at $R=8$. Trivago reaches its highest
Accuracy of 27.66\% at $R=4$ and its highest Macro-F1 of 7.32\% at $R=8$. The latter change
from $R=4$ adds 0.30 Macro-F1 points, with parameters increasing from 472,224 to 818,336 and
fitting time from 235.3 to 417.9 seconds.

The sweep identifies a dataset-dependent finite-capacity tradeoff. Moderate widths improve the
attribute-only control, while additional channels increase parameters and runtime and can
change the two classification metrics differently. Spectral density describes approximation by
a growing family of heat filters, but finite-sample accuracy also depends on capacity and fitting;
therefore, density alone does not impose a monotone ordering on these empirical curves.
Each width's runtime and parameter count follows the same measurement protocol described in Appendix~\ref{app:federated-protocol}.

\begin{figure}[H]
  \centering
  \includegraphics[width=\textwidth]{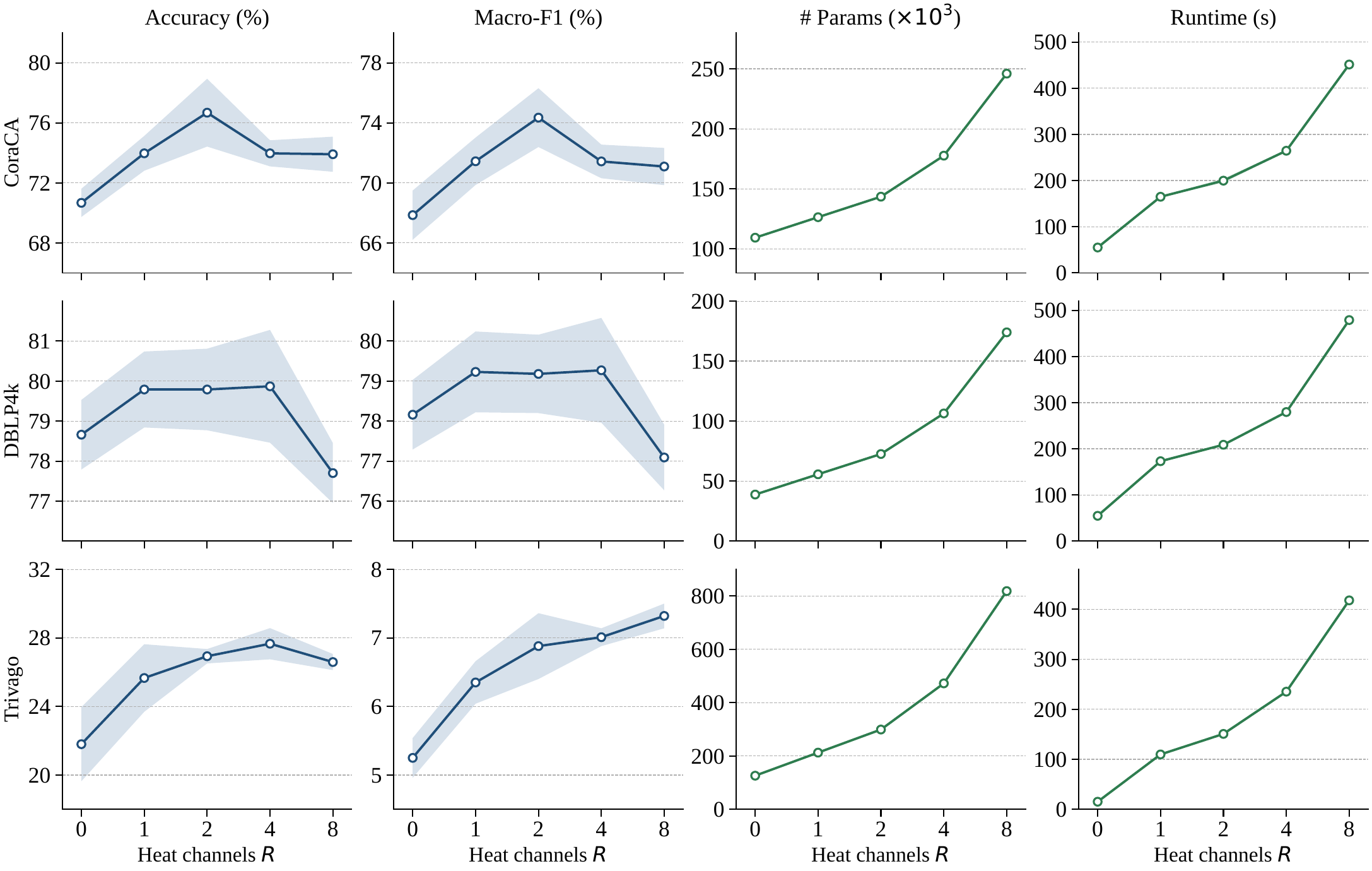}
  \caption{\textbf{Predictive utility and cost of the heat-basis width}. We fix $K=6$ and $\lambda_{\mathrm{pred}}=1$ on all three datasets. The $R=0$ control retains masked prediction and removes diffusion.}
  \label{fig:r-width-complete}
\end{figure}

\begin{table}[H]
  \centering
  \caption{Heat-basis width at $K=6$ and $\lambda_{\rm pred}=1$. Parameter counts include the online
model and predictive head. Runtime uses the synchronized fitting boundary described in Appendix~\ref{app:protocols}.}
  \label{tab:r-width-complete}
  \small
  \setlength{\tabcolsep}{4pt}
  \begin{tabular}{llrrrr}
    \toprule
    Dataset & $R$ & Accuracy (\%) & Macro-F1 (\%) & Parameters & Runtime (s) \\
    \midrule
    CoraCA & 0 & 70.67 $\pm$ 0.94 & 67.85 $\pm$ 1.64 & 109,319 & 55.0 $\pm$ 1.7 \\
    & 1 & 73.97 $\pm$ 1.16 & 71.44 $\pm$ 1.58 & 126,407 & 164.9 $\pm$ 0.3 \\
    & 2 & \textbf{76.68 $\pm$ 2.26} & \textbf{74.35 $\pm$ 1.96} & 143,495 & 199.8 $\pm$ 4.7 \\
    & 4 & 73.97 $\pm$ 0.87 & 71.43 $\pm$ 1.12 & 177,671 & 264.6 $\pm$ 1.8 \\
    & 8 & 73.91 $\pm$ 1.17 & 71.09 $\pm$ 1.24 & 246,023 & 451.5 $\pm$ 2.9 \\
    \midrule
    DBLP4k & 0 & 78.66 $\pm$ 0.87 & 78.16 $\pm$ 0.87 & 38,788 & 54.8 $\pm$ 0.4 \\
    & 1 & 79.79 $\pm$ 0.95 & 79.23 $\pm$ 1.01 & 55,684 & 173.1 $\pm$ 2.1 \\
    & 2 & 79.79 $\pm$ 1.02 & 79.18 $\pm$ 0.98 & 72,580 & 208.7 $\pm$ 2.4 \\
    & 4 & \textbf{79.87 $\pm$ 1.41} & \textbf{79.27 $\pm$ 1.31} & 106,372 & 279.6 $\pm$ 0.3 \\
    & 8 & 77.70 $\pm$ 0.75 & 77.09 $\pm$ 0.82 & 173,956 & 478.9 $\pm$ 2.8 \\
    \midrule
    Trivago & 0 & 21.80 $\pm$ 2.16 & 5.25 $\pm$ 0.29 & 126,112 & 15.2 $\pm$ 0.7 \\
    & 1 & 25.66 $\pm$ 1.97 & 6.35 $\pm$ 0.31 & 212,640 & 109.8 $\pm$ 0.1 \\
    & 2 & 26.93 $\pm$ 0.42 & 6.88 $\pm$ 0.48 & 299,168 & 150.8 $\pm$ 0.3 \\
    & 4 & \textbf{27.66 $\pm$ 0.91} & 7.01 $\pm$ 0.13 & 472,224 & 235.3 $\pm$ 0.6 \\
    & 8 & 26.59 $\pm$ 0.47 & \textbf{7.32 $\pm$ 0.18} & 818,336 & 417.9 $\pm$ 0.1 \\
    \bottomrule
  \end{tabular}
\end{table}

\subsection{Scalability on Large-scale Data}
\label{app:results-scale}

Table~\ref{tab:appendix-heat-papers100m-system} reports the full-scale evaluation on the complete graph
assets and official splits. It contains one full-data execution rather than repeated runs. It records
predictive quality, communication, wall-clock time, throughput, and peak device allocation with complete root and citation coverage for every compared method. FedASTRA achieves the highest predictive performance in this full-data setting while maintaining parameter-only communication.

\begin{table}[H]
  \centering
  \caption{Results on full-data ogbn-papers100M under the official split. Wall-clock training time covers 20 rounds and is reported in seconds (s). Throughput counts training roots per second. Peak memory is reported in gibibytes (GiB), where $1\,\mathrm{GiB}=2^{30}$ bytes. Communication counts bidirectional trainable-parameter
traffic, excluding client-local BatchNorm buffers for all methods.}
  \label{tab:appendix-heat-papers100m-system}
  \scriptsize
  \setlength{\tabcolsep}{3pt}
  \begin{tabularx}{\textwidth}{lXXXXX}
    \toprule
    Method & Parameters & \shortstack{Parameter traffic\\per round (MiB)} & Wall-clock (s) & Roots/s & Peak GPU (GiB) \\
    \midrule
    Fed-MLP & 983,380 & 60.02 & 47.2 & 511,594 & 0.58 \\
    Fed-SIGN & 984,442 & 60.09 & 47.6 & 506,887 & 0.68 \\
    Fed-HGNN & 982,672 & 59.98 & 54.6 & 442,051 & 1.52 \\
    Fed-HNHN & 34,672 & 2.12 & 312.5 & 77,259 & 0.54 \\
    Fed-HyperGCN & 984,442 & 60.09 & 406.6 & 59,386 & 8.89 \\
    Fed-GCN & 983,380 & 60.02 & 61.8 & 390,637 & 2.92 \\
    FedASTRA & 976,504 & 59.60 & 1069.7 & 22,570 & 19.53 \\
    \bottomrule
  \end{tabularx}
  \par\smallskip
  \begin{minipage}{\linewidth}
    \scriptsize
    All requested neighborhoods are evaluated without truncation.
  \end{minipage}
\end{table}

\end{document}